\documentclass[sigconf,nonacm]{acmart}

\usepackage{balance}
\usepackage{xcolor}

\renewcommand\footnotetextcopyrightpermission[1]{}

\title[IR-PRM+ with Previous-Utility Prediction]{IR-PRM+ with Previous-Utility Prediction: Constant Regret and Best-Iterate Convergence}

\author{Michael Sullins}
\affiliation{%
  \institution{University of Illinois Chicago}
  \city{Chicago}
  \state{Illinois}
  \country{USA}}
\email{sullins2@uic.edu}

\author{Ian A. Kash}
\affiliation{%
  \institution{University of Illinois Chicago}
  \city{Chicago}
  \state{Illinois}
  \country{USA}}
\email{iankash@uic.edu}

\begin{abstract}
Regret matching (and its variants) and counterfactual regret minimization are central algorithms in practice for normal-form and extensive-form game solving. Yet these have historically lacked the stronger theoretical convergence guarantees available to optimistic and extra-gradient methods. Recent work introduced IREG-PRM+, a regret matching variant which does have strong theoretical convergence guarantees. However, it relies on a simultaneous, extra-gradient update. IREG-PRM+ is an instance of the more general IR-PRM framework, where empirically optimistic and alternating updates often have better performance. However, the prior extra-gradient analysis does not extend to these cases.

We introduce a new approach that covers both settings. For simultaneous and alternating IR-PRM+ in finite two-player zero-sum matrix games, previous-utility prediction recovers the constant-regret and convergence-rate guarantees previously obtained with extra-gradient prediction: external regret bounded independently of $T$, an $O(T^{-1})$ average duality gap, and an $O(T^{-1/2})$ best-current-iterate gap. Moreover, the current duality gap vanishes and the current profile converges to the Nash equilibrium set. The key step bounds the component of consecutive strategy movement not controlled by the existing IR-PRM+ regret inequality using the relative growth of the nondecreasing regret-vector norm $\rho_t$. Coupling this bound across the two players yields recurrences whose effective coefficient scales as $1/(\rho_t^X\rho_t^Y)$, preventing unbounded norm growth. We extend the analysis to finitely many simplex learners under a weighted Minty condition and, through the corresponding CFR variant IR-PCFR+, recover analogous guarantees for finite two-player zero-sum perfect-recall extensive-form games under the same full-reach condition as prior work.
\end{abstract}

\keywords{regret minimization, zero-sum games, counterfactual regret minimization, extensive-form games}

\newtheorem{theorem}{Theorem}
\newtheorem{lemma}{Lemma}
\newtheorem{fact}{Fact}
\newtheorem{corollary}{Corollary}
\newtheorem{remark}{Remark}
\newcommand{\Reg}{\operatorname{Reg}}
\newcommand{\Gap}{\operatorname{Gap}}
\newcommand{\R}{\mathbb{R}}
\newcommand{\1}{\mathbf{1}}
\newcommand{\norm}[1]{\left\lVert #1\right\rVert}
\newcommand{\ip}[2]{\left\langle #1,#2\right\rangle}
\newcommand{\pos}[1]{\left[#1\right]_+}

\begin{document}

\maketitle

\section{Introduction}
Regret minimization is a standard approach to equilibrium computation in games. Regret matching (RM), counterfactual regret minimization (CFR), and their variants are among the most successful algorithms for normal-form and extensive-form game solving. CFR+ combines RM+ with alternating updates and weighted averaging~\cite{tammelin2015}, while predictive RM+ (PRM+) uses optimism to obtain substantial empirical speedups within the same basic regret-matching framework~\cite{farina2021predictive}. This practical success has long coexisted with a theoretical gap: in
two-player zero-sum games, first-order methods such as Mirror-Prox,
the excessive-gap technique, and optimistic mirror descent admit
$O(T^{-1})$ ergodic convergence guarantees
~\cite{nemirovski2004,nesterov2005,rakhlin2013,chiang2012},
whereas standard RM+-based methods do not generally attain this rate;
in particular, PRM+ can exhibit $\Omega(T^{-1/2})$ convergence even in
self-play~\cite{farina2023instability,zhang2026}.

Zhang et al.~\cite{zhang2026} recently introduced the IR-PRM+ framework
to address this limitation. Its extra-gradient instantiation,
IREG-PRM+, was designed to retain the parameter-free, scale-invariant structure of
regret matching while maintaining a nondecreasing regret-vector norm and
satisfying an RVU-like~\cite{syrgkanis2015} bound.
IREG-PRM+ obtains constant regret in two-player zero-sum
self-play, the optimal $O(T^{-1})$ average-iterate rate, and an
$O(T^{-1/2})$ best-iterate guarantee, while remaining competitive with
PRM+ empirically~\cite{zhang2026}. Their analysis further connects
IREG-PRM+ to optimistic gradient descent with an adaptive learning rate,
providing a direct bridge between practical regret matching and the
stronger convergence theory of first-order methods.

The existing IR-PRM+ guarantees, however, do not cover two practically important settings. First, the constant-regret analysis of Zhang et al.~\cite{zhang2026} relies on extra-gradient prediction. Both players first form an intermediate profile $(\widetilde x_t,\widetilde y_t)$ by normalizing their regret vectors. The game is evaluated there to construct the current predictions, and the resulting played profile is then evaluated again, giving two evaluations per update. By contrast, previous-utility optimism forms the prediction directly from previously observed feedback and therefore requires only a single evaluation per update. This is the form of optimism used by PRM+~\cite{farina2021predictive}. Second, their fast-convergence theory does not cover alternating IR-PRM+ updates. Alternating updates are already a standard ingredient of CFR+, where they are associated with its strong practical performance~\cite{tammelin2015}. Subsequent work has likewise reported substantial empirical benefits from alternation, including strong performance of alternating PRM+ across several
benchmark games~\cite{cai2025last}. Zhang et al.~\cite{zhang2026} explicitly leave both previous-utility prediction and alternating variants as open theoretical questions. Their empirical results suggest that these previously unanalyzed settings are practically important: across all six reported benchmarks, when
performance is measured by gradient evaluations, the best-performing update is either alternating or simultaneous rather than extra-gradient~\cite{zhang2026}. 
% ...

The two settings pose different technical obstacles: previous-utility prediction requires control of a portion of strategy movement not controlled by the RVU bound used by Zhang et al.~\cite{zhang2026}, while alternation breaks the structure that lets the two players' RVU inequalities be combined directly. In greater detail, under previous-utility prediction,
\[
u_t^X-m_t^X=A(y_t-y_{t-1}),
\qquad
u_t^Y-m_t^Y=-A^\top(x_t-x_{t-1}),
\]
so the prediction error depends on the full movement between consecutive played strategies. The IR-PRM+ RVU bound, however, contains a negative term only for the prediction displacement $x_t-\widetilde x_t$, which captures only one part of that movement. Alternating updates introduce a separate difficulty: the second player responds to the first player's newly updated strategy, so the two players' updates are evaluated at different profiles. Previous-utility optimism therefore requires control of the missing part of consecutive strategy movement, while alternation additionally requires handling the staggered timing of the two updates.

Our main observation is that growth of the regret-vector norm itself controls the missing post-update displacement $\widetilde x_t-x_{t-1}$, a property not used by the extra-gradient argument. IR-PRM+ obtains its strategy by normalizing a nonnegative regret vector. If $\rho_t$ denotes its Euclidean norm, then a fixed-size change in the regret vector has less effect on the normalized strategy when $\rho_t$ is large. Quantitatively, the post-update displacement is controlled by the relative growth $\rho_t-\rho_{t-1}$ of the regret-vector norm. In self-play, the resulting feedback is doubly self-damping: a large value of a player's own $\rho_t$ limits how much a given prediction error can increase its norm, while a large opponent norm limits the opponent's post-update displacement and therefore reduces that prediction error. At the level of the coupled recurrence, the key inequality for the row player $X$ is, schematically,
\begin{equation}
\rho_{t+1}^X-\rho_t^X
\le
\frac{C}{\rho_t^X\rho_t^Y}
\left[
\rho_t^Y\norm{y_t-\widetilde y_t}_2^2
+(\rho_t^Y-\rho_{t-1}^Y)
\right],
\label{eq:intro-self-damping}
\end{equation}
with a symmetric inequality for the column player $Y$ and a time-independent, but problem-specific, constant $C$. The factor $1/\rho_t^X$ comes from converting the row player's prediction error into growth of its own regret-vector norm; the factor $1/\rho_t^Y$ comes from controlling the column player's post-update displacement. Hence the coefficient $1/(\rho_t^X\rho_t^Y)$ becomes smaller whenever either regret-vector norm grows. Once their product is sufficiently large, the positive prediction-error contribution is dominated by the negative prediction displacement term in the RVU inequality, preventing sustained further growth. Summing the two players' bounds and using the fact that their cumulative regrets equal a nonnegative multiple of the duality gap of the averaged profile then bounds the accumulated weighted squared prediction displacements by the regret-vector norms themselves. Substituting this bound back into the norm-growth inequality shows that both $\rho_t^X$ and $\rho_t^Y$ remain bounded for all $t$.

Our results show that, for simultaneous IR-PRM+, previous-utility prediction recovers the constant-regret and convergence-rate guarantees of IREG-PRM+ without the extra-gradient evaluation. In finite two-player zero-sum matrix games we obtain external regret bounded by constants independent of $T$, an $O(T^{-1})$ average duality gap, convergence of the current duality gap to zero, convergence of the current profile to the Nash equilibrium set, and an $O(T^{-1/2})$ best-current-iterate guarantee. We then extend the analysis to alternating updates. In this setting, the two players' prediction errors arise at different points within each full update because one player updates before the other. The proof therefore works with the intermediate profile after the first player's update and shows that the corresponding bounds combine into regret and current-profile guarantees once both players have updated. Thus the same regret-vector norm-growth argument extends to both previous-utility prediction and alternating updates. It also yields a two-displacement RVU bound for IR-PRM+ that controls both the prediction displacement $x_t-\widetilde x_t$ and the post-update displacement $\widetilde x_{t+1}-x_t$, extending the previous RVU form with the additional control needed for previous-utility prediction. Although our convergence analysis does not rely on this refined RVU bound, it shows that the same theory recovers the missing form directly.

Finally, we extend the argument from the two players of a matrix game to the many local learners used by CFR. A counterfactual utility at one information set can depend on strategies at many other information sets, so the two-player recurrence must be replaced by a product-of-simplices coupling. We show that, under the weighted Minty condition of Zhang et al.~\cite{zhang2026}, previous-utility prediction keeps all local regret-vector norms bounded. In their CFR setting, the full-reach assumption ensures that the weights in this condition are strictly positive. Our boundedness result can therefore be combined with their IR-PCFR+ reduction to obtain analogous guarantees for finite two-player zero-sum perfect-recall extensive-form games. The result applies to both simultaneous and alternating IR-PCFR+: bounded sequence-form external regret, an $O(T^{-1})$ average duality gap, convergence of the current sequence-form gap to zero and of the current realization-plan profile to the sequence-form Nash set, and an $O(T^{-1/2})$ best-current-iterate guarantee.

\section{Related Work}
CFR decomposes global external regret into local counterfactual regrets and thereby makes regret matching practical in large extensive-form games~\cite{zinkevich2007}. CFR+ replaces regret matching with RM+ and combines it with alternating updates and weighted averaging, producing substantial empirical improvements while retaining the usual $O(T^{-1/2})$ worst-case convergence rate~\cite{tammelin2015}. Farina et al.~\cite{farina2021predictive} subsequently developed predictive RM and RM+ through predictive Blackwell approachability and showed that PRM+ can substantially accelerate CFR in practice. These works established predictive regret matching as a practical route to faster CFR convergence, while retaining the classical regret-matching framework.

A central framework for fast convergence through stability and predictability is the Regret bounded by Variation in Utilities (RVU) property, introduced by Syrgkanis et al.~\cite{syrgkanis2015}. An RVU bound controls regret through variation in the utility sequence together with a negative penalty on the learner's own movement; Syrgkanis et al. showed that recency-biased regularized learners satisfy such bounds and that, when these learners interact, the variation and movement terms can be combined to obtain faster-than-worst-case convergence in games. 
Farina et al.~\cite{farina2023instability} showed that RM+ and predictive
RM+ can be unstable and obtained faster convergence through stabilized
variants, including Smooth PRM+, which require parameter tuning. Cai et al.~\cite{cai2025last} studied last-iterate convergence directly: they exhibit nonconvergent behavior for several standard RM+-based dynamics and prove last-iterate guarantees for extra-gradient RM+ and smooth predictive RM+. Meng et al.~\cite{meng2025smooth} established last-iterate and finite-time best-iterate convergence for a smooth RM+ variant under the weak Minty condition. More broadly, Cai et al.~\cite{cai2025reduction} gave a black-box reduction from average-iterate to last-iterate convergence for bilinear games, while Meng et al.~\cite{meng2026faster} introduced a different parameter-free smooth predictive RM+ variant with an $O(T^{-1})$ average-iterate convergence guarantee. In extensive-form games, Lee et al.~\cite{lee2021last} established last-iterate convergence for optimistic regret-minimization methods operating directly on treeplexes. Chakrabarti et al.~\cite{chakrabarti2024treeplex} developed Blackwell-approachability methods on treeplexes, including a stabilized variant with an $O(T^{-1})$ convergence rate, while Meng et al.~\cite{meng2025efg} obtained last-iterate convergence through an RM-based CFR method for a sequence of perturbed regularized extensive-form games.
Taken together, these works obtain stronger convergence guarantees through
several modifications of regret-matching dynamics and related first-order
methods. Our results show that, within IR-PRM+, previous-utility prediction
and alternating updates retain constant regret, the
$\mathcal O_T(T^{-1})$ average-gap rate, the
$\mathcal O_T(T^{-1/2})$ best-current-iterate rate, and current-profile
convergence without extra-gradient prediction.

\section{Preliminaries and IR-PRM+}
\label{sec:setup}
% Let $\Delta_k:=\{p\in\R_+^k:\ip{p}{\1}=1\}$ denote the $k$-action simplex, and let $A\in\R^{m\times n}$ be the payoff matrix of a two-player zero-sum game. At round $t$, the row player chooses $x_t\in\Delta_m$ and the column player chooses $y_t\in\Delta_n$; the row player receives expected payoff $x_t^\top Ay_t$ and the column player its negative. The utilities of each player's pure actions against the opponent's current mixed strategy are
Let $\Delta_k:=\{p\in\R_+^k:\ip{p}{\1}=1\}$ denote the $k$-action simplex, and let $A\in\R^{k^X\times k^Y}$ be the payoff matrix of a two-player zero-sum game. At round $t$, the row player chooses $x_t\in\Delta_{k^X}$ and the column player chooses $y_t\in\Delta_{k^Y}$; the row player receives expected payoff $x_t^\top Ay_t$ and the column player its negative. The utilities of each player's pure actions against the opponent's current mixed strategy are
\[
u_t^X=Ay_t,
\qquad
u_t^Y=-A^\top x_t.
\]
 We distinguish two update schedules. Under \emph{simultaneous} updates, both players choose their round-$t$ strategies from information available before the joint play and then update after observing the round-$t$ utilities. Under \emph{alternating} updates, the players update sequentially, so the second update can use the first player's newly computed strategy. The analysis in this paper is simultaneous unless stated otherwise. For later use, let
$\bar x_T=T^{-1}\sum_{t=1}^T x_t$ and $\bar y_T=T^{-1}\sum_{t=1}^T y_t$ denote the average strategies. The duality gap is
\[
% \Gap(x,y)
% :=
% \max_{x'\in\Delta_m}(x')^\top Ay
% -
% \min_{y'\in\Delta_n}x^\top Ay'.
\Gap(x,y)
:=
\max_{x'\in\Delta_{k^X}}(x')^\top Ay
-
\min_{y'\in\Delta_{k^Y}}x^\top Ay'.
\]

Suppressing player superscripts, consider a generic $k$-action learner. It chooses $x_t\in\Delta_k$, observes a utility vector $u_t\in\R^k$, and incurs external regret
\[
\Reg(T)=\max_{p\in\Delta_k}\sum_{t=1}^T\ip{u_t}{p-x_t}.
\]
We now describe one IR-PRM+ learner~\cite{zhang2026}. 
Each round the learner begins with a nonnegative cumulative regret vector
$\tilde r_t\in\R_+^k$ produced at the end of the preceding round, and we let
$\rho_t:=\norm{\tilde r_t}_2$ denote its Euclidean norm. 
For a $k$-action learner, we use the symmetric unit-norm initialization, using $\1$ to denote the all-ones
vector
\begin{equation}
\tilde r_1=\frac{1}{\sqrt{k}}\1,
\qquad
\rho_1:=\norm{\tilde r_1}_2=1.
\label{eq:positive-init}
\end{equation}
The initialization in \eqref{eq:positive-init} is used only to avoid
division by zero in the main normalization formulas. IR-PRM+ permits any
nonnegative initial regret vector, and Appendix~F extends
all results to the standard zero initialization $\tilde r_1=0$;
only rounds with $\tilde r_t=0$ and the first transition to $\rho_t>0$
require separate treatment.

The cumulative regret vector induces
\[
\tilde x_t=\frac{\tilde r_t}{\norm{\tilde r_t}_1}.
\]
Before choosing its strategy, the learner receives a prediction $m_t\in\R^k$
of the utility vector $u_t$. The prediction step adds $m_t$ to the regret
vector and applies a common scalar shift $\gamma_t$, chosen uniquely so that
the positive part preserves its Euclidean norm:
\[
\norm{\pos{\tilde r_t+m_t-\gamma_t\1}}_2=\rho_t.
\]
Here $\pos{z}$ denotes the coordinatewise positive part. Set
\[
r_t=\tilde r_t+m_t-\gamma_t\1,
\qquad
x_t=\frac{\pos{r_t}}{\norm{\pos{r_t}}_1}.
\]
The distinction between $\tilde x_t$ and $x_t$ will matter later:
$\tilde x_t$ is the strategy induced by $\tilde r_t$ before prediction,
whereas $x_t$ is the strategy actually played after incorporating $m_t$.
After observing $u_t$, define the prediction error
\begin{equation}
g_t=(u_t-m_t)-\ip{u_t-m_t}{x_t}\1,
\label{eq:gdef}
\end{equation}
and update
\begin{equation}
\tilde r_{t+1}=\pos{r_t+g_t}.
\label{eq:update}
\end{equation}
% The regret-vector norm is nondecreasing (see Appendix~A),
% so \eqref{eq:positive-init} implies
Zhang et al.~\cite{zhang2026} establish that the regret-vector norm is
nondecreasing, so \eqref{eq:positive-init} implies
\begin{equation}
\rho_t\ge1\qquad\text{for every }t.
\label{eq:rho-lower-one}
\end{equation}

For their constant-regret analysis, Zhang et al.~\cite{zhang2026} require an extra-gradient evaluation before the round-$t$ play. Starting from the current regret vectors, IR-PRM+ first gives the strategies $\tilde x_t$ and $\tilde y_t$. Zhang et al. then evaluate the game at this fresh pair and use the resulting utility vectors as the predictions
\[
m_t^X=A\tilde y_t,
\qquad
m_t^Y=-A^\top\tilde x_t.
\]
These predictions are fed into the predictive step above to produce the strategies $x_t$ and $y_t$ that are actually played. The game is then evaluated again at $(x_t,y_t)$ to obtain the round-$t$ utilities used in the update. Thus extra-gradient prediction requires a preliminary game evaluation at $(\tilde x_t,\tilde y_t)$ before the usual evaluation at the played strategies.

With previous-utility prediction, there is no preliminary game evaluation. We set $m_1^X=m_1^Y=0$, and from round $t=2$ onward simply reuse the utility observed on the preceding round:
\[
m_t^X=Ay_{t-1},
\qquad
m_t^Y=-A^\top x_{t-1}.
\]
IR-PRM+ then uses this prediction to produce $x_t$ and $y_t$ and evaluates the game only at the played pair.

\paragraph{Asymptotic notation.}
Throughout, $\mathcal O_T(\cdot)$ suppresses constants independent of $T$, including dependence on action dimensions, weights in the weighted Minty condition, initialization, and the game.

%; in Appendix~F, they may also depend on the fixed first positive regret-vector norms. 

\section{Starting from the IR-PRM+ RVU Bound}
\label{sec:rvu}

The constant-regret analysis of Zhang et al.~\cite{zhang2026} is based on an RVU-type inequality for IR-PRM+. We use the following intermediate form from their proof, which keeps the terminal regret-vector norm explicit.

\begin{fact}[Intermediate inequality from Zhang et al.~\cite{zhang2026}]
\label{fact:rvu-intermediate}
For IR-PRM+ with any nonnegative initial regret vector,
\begin{equation}
\Reg(T)
\le
\rho_{T+1}
-
\sum_{t=1}^T
\frac{\rho_t}{2k}
\norm{x_t-\tilde x_t}_2^2.
\label{eq:rvu-intermediate}
\end{equation}
\end{fact}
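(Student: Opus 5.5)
The plan is to write the regret against a fixed comparator $p\in\Delta_k$ as a telescoping sum of regret-vector increments, plus a per-round residual. The residual should turn out to equal, up to sign, the squared distance between two vectors of common Euclidean norm $\rho_t$.

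\emph{Step 1 (splitting the regret).} Since $\ip{\1}{p-x_t}=0$, definition \eqref{eq:gdef} gives $\ip{g_t}{p}=\ip{u_t-m_t}{p-x_t}$. Hence
\[
\sum_{t=1}^T\ip{u_t}{p-x_t}=\sum_{t=1}^T\ip{g_t}{p}+\sum_{t=1}^T\ip{m_t}{p-x_t}.
\]
By \eqref{eq:update}, $\tilde r_{t+1}\ge r_t+g_t=\tilde r_t+m_t-\gamma_t\1+g_t$ coordinatewise. Pairing with $p\ge0$ and summing gives
\[
\sum_t\bigl(\ip{g_t}{p}+\ip{m_t}{p}-\gamma_t\bigr)\le\ip{\tilde r_{T+1}}{p}-\ip{\tilde r_1}{p}.
\]
The right-hand side is at most $\norm{\tilde r_{T+1}}_\infty\le\rho_{T+1}$, because $\tilde r_1\ge0$; this is where the nonnegative initialization enters. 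Therefore
\[
\Reg(T)\le\rho_{T+1}+\sum_{t=1}^T\bigl(\gamma_t-\ip{m_t}{x_t}\bigr),
\]
and it remains to show that each summand is at most $-\frac{\rho_t}{2k}\norm{x_t-\tilde x_t}_2^2$.

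\emph{Step 2 (identifying the residual).} Let $v_t=\pos{r_t}$ and $\ell_t=\norm{v_t}_1$. Then $\norm{v_t}_2=\rho_t=\norm{\tilde r_t}_2$ by the choice of $\gamma_t$. Since $x_t=v_t/\ell_t$ is supported where $r_t=v_t$, we get $\ip{r_t}{x_t}=\rho_t^2/\ell_t$. Expanding $r_t=\tilde r_t+m_t-\gamma_t\1$ yields
\[
\gamma_t-\ip{m_t}{x_t}=\frac{\ip{\tilde r_t}{v_t}-\rho_t^2}{\ell_t}=-\frac{\norm{\tilde r_t-v_t}_2^2}{2\ell_t},
\]
where the last equality uses the equal Euclidean norms. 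So the target reduces to the purely geometric inequality
\[
\norm{\tilde r_t-v_t}_2^2\ \ge\ \frac{\ell_t\rho_t}{k}\,\norm{x_t-\tilde x_t}_2^2.
\]
Here $\tilde r_t,v_t\in\R_+^k$ both have Euclidean norm $\rho_t$, and $x_t,\tilde x_t$ are their $\ell_1$-normalizations.

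\emph{Step 3 (Lipschitz bound for $\ell_1$-normalization on the sphere), the main obstacle.} The naive estimate is $a/\alpha-b/\beta=(a-b)/\alpha+b(\beta-\alpha)/(\alpha\beta)$, which gives $\norm{x_t-\tilde x_t}_2\le(1+\sqrt k)\norm{a-b}_2/\alpha$. It loses a constant: combined with $\ell_t\ge\rho_t$ it does not reach the factor $1/k$. I would instead exploit that both vectors lie on the same sphere. Write $a=\rho\hat a$ and $b=\rho\hat b$ with unit vectors $\hat a,\hat b\ge0$. Then $\norm{a-b}_2^2=\rho^2\norm{\hat a-\hat b}_2^2$, and the normalized points are $\hat a/\norm{\hat a}_1$ and $\hat b/\norm{\hat b}_1$, with $1\le\norm{\hat a}_1,\norm{\hat b}_1\le\sqrt k$. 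I would decompose $\hat a/\alpha-\hat b/\beta$ symmetrically, as $\tfrac12(\hat a-\hat b)(1/\alpha+1/\beta)+\tfrac12(\hat a+\hat b)(1/\alpha-1/\beta)$. The two pieces are nearly orthogonal because $\ip{\hat a-\hat b}{\hat a+\hat b}=0$ on the sphere. The second piece can be bounded using $|\alpha-\beta|\le\sqrt k\,\norm{\hat a-\hat b}_2$ and $\norm{\hat a+\hat b}_2\le2$. The aim is $\norm{x_t-\tilde x_t}_2^2\le\frac{k}{\ell_t\rho_t}\rho_t^2\norm{\hat a-\hat b}_2^2$, i.e.\ a bound of order $k/\norm{\hat v_t}_1$ using $\ell_t=\rho_t\norm{\hat v_t}_1\le\sqrt k\rho_t$.

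If the exact constant $1/(2k)$ resists, I would check whether the slack $\ell_t\le\sqrt k\rho_t$ can be traded against one of the factors $\sqrt k$. Failing that, I would verify that any fixed $k$-dependent constant suffices for the later arguments, since it is absorbed into $\mathcal O_T(\cdot)$.
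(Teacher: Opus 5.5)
Your Steps 1 and 2 are correct. Step 1 reproduces the coordinatewise telescoping that the paper takes from Zhang et al., together with the paper's own max-coordinate and $\norm{\cdot}_\infty\le\norm{\cdot}_2$ step. Step 2 is an exact reduction to the per-round bound $\gamma_t-\ip{m_t}{x_t}\le-\frac{\rho_t}{2k}\norm{x_t-\tilde x_t}_2^2$, which the paper imports rather than proves. Write $\hat a=v_t/\rho_t$, $\hat b=\tilde r_t/\rho_t$, $\alpha=\norm{\hat a}_1$, $\beta=\norm{\hat b}_1$, $d=\hat a-\hat b$, $s=\hat a+\hat b$. You must then show $\norm{\hat a/\alpha-\hat b/\beta}_2^2\le\frac{k}{\alpha}\norm{d}_2^2$.

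\textbf{The gap is in Step 3: the estimates you list do not prove this.} Because $\ip{d}{s}=0$ exactly, your decomposition gives
\[
\norm{\frac{\hat a}{\alpha}-\frac{\hat b}{\beta}}_2^2
=\frac{(\alpha+\beta)^2\norm{d}_2^2+\norm{s}_2^2(\alpha-\beta)^2}{4\alpha^2\beta^2}.
\]
Inserting $\norm{s}_2\le2$ and $|\alpha-\beta|\le\sqrt k\norm{d}_2$ yields the coefficient $\bigl((\alpha+\beta)^2/4+k\bigr)/(\alpha^2\beta^2)$. At $\alpha=\beta=1$ this equals $k+1>k$, and it exceeds $k/\alpha$ on a neighborhood of that point. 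Tighter bookkeeping cannot fix this, because the target is sharp there: for $k=2$, $\hat a=e_1$, $\hat b=(\cos\theta,\sin\theta)$, both sides equal $2\theta^2+o(\theta^2)$. For the same reason, spending the slack $\ell_t\le\sqrt k\rho_t$ fails, since it would require $\norm{x_t-\tilde x_t}_2^2\le\sqrt k\norm{d}_2^2$, which is false in this example. Falling back to a different $k$-dependent constant does not prove the Fact as stated.

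\textbf{The missing idea is to use $d\perp s$ a second time, when bounding $\alpha-\beta=\ip{\1}{d}$.} For every scalar $\lambda$ one has $\ip{\1}{d}=\ip{\1-\lambda s}{d}$. Choose $\lambda=\ip{\1}{s}/\norm{s}_2^2$, noting $\norm{s}_2^2=2+2\ip{\hat a}{\hat b}\ge2$. This gives
\[
(\alpha-\beta)^2\le\Bigl(k-\frac{(\alpha+\beta)^2}{\norm{s}_2^2}\Bigr)\norm{d}_2^2 .
\]
Substituting into the identity above, the $(\alpha+\beta)^2$ terms cancel exactly, leaving
\[
\norm{x_t-\tilde x_t}_2^2\le\frac{k\norm{s}_2^2}{4\alpha^2\beta^2}\norm{d}_2^2\le\frac{k}{\alpha^2\beta^2}\norm{d}_2^2 .
\]
Since $\ell_t=\alpha\rho_t$ and $\alpha,\beta\ge1$, this gives $\frac{\ell_t\rho_t}{k}\norm{x_t-\tilde x_t}_2^2\le\frac{\rho_t^2}{\alpha\beta^2}\norm{d}_2^2\le\norm{\tilde r_t-v_t}_2^2$. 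That closes your argument with the exact constant $1/(2k)$. The useful slack is the lower bound $\alpha\beta^2\ge1$, not the upper bound on $\ell_t$.

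Rounds with $\rho_t=0$, which can occur under zero initialization, are trivial. Under the rule of Appendix~F one may take $\gamma_t=0$, and both sides of the per-round bound vanish.
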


The negative term in~\eqref{eq:rvu-intermediate} controls the prediction displacement from the intermediate strategy $\tilde x_t$ to the played strategy $x_t$. Under extra-gradient prediction, this is directly aligned with the prediction error: for the two players, the corresponding errors are $A(y_t-\tilde y_t)$ and $-A^\top(x_t-\tilde x_t)$. Previous-utility prediction changes this relationship. For $t\ge2$,
\begin{align}
u_t^X-m_t^X
&=A(y_t-y_{t-1}),
\label{eq:prev-error-X}\\
u_t^Y-m_t^Y
&=-A^\top(x_t-x_{t-1}).
\label{eq:prev-error-Y}
\end{align}
Thus the prediction error depends on the full movement between consecutive played strategies, whereas~\eqref{eq:rvu-intermediate} directly controls only the prediction displacement from $\tilde x_t$ to $x_t$. The remaining part is the post-update displacement from $x_{t-1}$ to $\tilde x_t$. The next section relates this post-update displacement to growth of the same regret-vector norm $\rho_t$ that appears in~\eqref{eq:rvu-intermediate}.

Proofs for the results in Sections~\ref{sec:rvu}--\ref{sec:coupling} that do not appear in the main text are collected in Appendix~A.

\section{Single-Learner Displacement and Movement Bounds}
\label{sec:geometry}

Consider a single generic IR-PRM+ learner. The negative term in Eq.~\eqref{eq:rvu-intermediate} controls the prediction displacement from $\tilde x_t$ to the played strategy $x_t$, but previous-utility prediction depends on the full movement between consecutive played strategies. The missing piece is therefore the  post-update displacement from $x_t$ to $\tilde x_{t+1}$. The key observation is that this post-update displacement is controlled by the increase in the regret-vector norm. Lemma~\ref{lem:update-change} makes this relation quantitative.

\begin{lemma}[Post-update displacement bound]
\label{lem:update-change}
For a $k$-action IR-PRM+ learner and every $t\ge1$,
\begin{equation}
\norm{\tilde x_{t+1}-x_t}_2^2
\le
\frac{2(1+\sqrt{k})^2(\rho_{t+1}-\rho_t)}{\rho_{t+1}}.
\label{eq:update-change}
\end{equation}
\end{lemma}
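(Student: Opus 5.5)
The plan is to work at the level of the unnormalized nonnegative vectors and only at the end pass to the simplex. Write $p:=\pos{r_t}$ and $q:=\tilde r_{t+1}=\pos{r_t+g_t}$. By the choice of $\gamma_t$ we have $\norm{p}_2=\rho_t$, and by definition $\norm{q}_2=\rho_{t+1}$. Also $x_t=p/\norm{p}_1$ and $\tilde x_{t+1}=q/\norm{q}_1$. The argument has two stages. First, bound the distance between the Euclidean-normalized vectors $p/\rho_t$ and $q/\rho_{t+1}$ by the relative norm growth. Second, transfer this bound to the $\ell_1$-normalized strategies at the cost of a dimension factor $(1+\sqrt k)^2$.

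For the first stage, the key identity is $\ip{g_t}{x_t}=0$, which is immediate from \eqref{eq:gdef} since $\ip{\1}{x_t}=1$. Hence $\ip{g_t}{p}=0$. In addition, $\ip{r_t}{\pos{r_t}}=\norm{\pos{r_t}}_2^2=\rho_t^2$. Together these give $\ip{r_t+g_t}{p}=\rho_t^2$.

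Since $q=\pos{r_t+g_t}\ge r_t+g_t$ coordinatewise and $p\ge0$, it follows that $\ip{q}{p}\ge\rho_t^2$. Therefore
\[
\norm{\frac{q}{\rho_{t+1}}-\frac{p}{\rho_t}}_2^2
=2-\frac{2\ip{q}{p}}{\rho_t\rho_{t+1}}
\le 2-\frac{2\rho_t}{\rho_{t+1}}
=\frac{2(\rho_{t+1}-\rho_t)}{\rho_{t+1}}.
\]
Here $\rho_t,\rho_{t+1}\ge1$ by \eqref{eq:rho-lower-one}, so no division by zero occurs. As a side benefit, this computation reproves monotonicity of $\rho_t$.

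For the second stage, I would show that the map $a\mapsto a/\norm{a}_1$ is $(1+\sqrt k)$-Lipschitz in $\ell_2$ on nonnegative unit vectors. Take $a,b\ge0$ with $\norm{a}_2=\norm{b}_2=1$; then $\norm{a}_1,\norm{b}_1\in[1,\sqrt k]$. Writing
\[
\frac{a}{\norm{a}_1}-\frac{b}{\norm{b}_1}=\frac{a-b}{\norm{a}_1}+b\left(\frac{1}{\norm{a}_1}-\frac{1}{\norm{b}_1}\right),
\]
the first term has norm at most $\norm{a-b}_2$. The second term has norm at most
\[
\frac{\bigl|\norm{b}_1-\norm{a}_1\bigr|}{\norm{a}_1\norm{b}_1}\le\norm{a-b}_1\le\sqrt k\,\norm{a-b}_2 .
\]
Applying this with $a=q/\rho_{t+1}$ and $b=p/\rho_t$, and squaring, yields \eqref{eq:update-change}.

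I expect the main obstacle to be the first stage: recognizing that the pairing $\ip{q}{p}$ is bounded below by $\rho_t^2$. This requires combining three facts: orthogonality of $g_t$ to $x_t$, the identity $\ip{r_t}{\pos{r_t}}=\rho_t^2$, and the monotonicity of the positive part against a nonnegative vector. The second stage is a routine normalization estimate, and the $\ell_1$–$\ell_2$ comparison is exactly where the factor $(1+\sqrt k)^2$ arises.
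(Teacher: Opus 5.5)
Your proposal is correct and follows the paper's proof essentially step for step. It uses the same expansion of the squared distance between the Euclidean-normalized vectors via $\ip{\tilde r_{t+1}}{\pos{r_t}}\ge\rho_t^2$, then the same $(1+\sqrt{k})$-Lipschitz $\ell_1$-normalization bound (the paper's Appendix~G lemma), and the only difference is that you derive the inner-product lower bound directly from $\ip{g_t}{x_t}=0$, $\ip{r_t}{\pos{r_t}}=\rho_t^2$, and $\pos{r_t+g_t}\ge r_t+g_t$, whereas the paper imports it from the monotonicity argument of Zhang et al.
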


Zhang et al.~\cite{zhang2026} establish that the IR-PRM+ regret-vector norm is nondecreasing, $\rho_{t+1}\ge\rho_t$. Their update also gives
\[
\norm{\pos{r_t}}_2=\rho_t,
\qquad
\norm{\tilde r_{t+1}}_2=\rho_{t+1},
\]
and their monotonicity argument yields
\begin{equation}
\ip{\tilde r_{t+1}}{\pos{r_t}}
\ge
\rho_t^2.
\label{eq:inner-lower}
\end{equation}
We use these properties to relate the increase in the regret-vector norm to the corresponding post-update displacement.

\begin{proof}
Since $x_t$ and $\tilde x_{t+1}$ are the $\ell_1$ normalizations of
$\pos{r_t}/\rho_t$ and $\tilde r_{t+1}/\rho_{t+1}$, respectively, an elementary normalization inequality for nonnegative Euclidean unit vectors gives
\begin{equation}
\norm{\tilde x_{t+1}-x_t}_2^2
\le
(1+\sqrt{k})^2
\norm{
\frac{\tilde r_{t+1}}{\rho_{t+1}}
-
\frac{\pos{r_t}}{\rho_t}
}_2^2.
\label{eq:normalized-strategy}
\end{equation}
See the normalization bound in Appendix~G.

As we have Euclidean unit vectors, 
\begin{align}
\norm{
\frac{\tilde r_{t+1}}{\rho_{t+1}}
-
\frac{\pos{r_t}}{\rho_t}
}_2^2
&=
\norm{\frac{\tilde r_{t+1}}{\rho_{t+1}}}_2^2
+
\norm{\frac{\pos{r_t}}{\rho_t}}_2^2
-
2\ip{
\frac{\tilde r_{t+1}}{\rho_{t+1}}
}{
\frac{\pos{r_t}}{\rho_t}
}
\nonumber\\
&=
1+1-
\frac{2\ip{\tilde r_{t+1}}{\pos{r_t}}}
{\rho_t\rho_{t+1}}
\nonumber\\
&\le
2-\frac{2\rho_t^2}{\rho_t\rho_{t+1}}
\nonumber\\
% &=
% 2-\frac{2\rho_t}{\rho_{t+1}}
% \nonumber\\
% &=
% \frac{2\rho_{t+1}}{\rho_{t+1}}
% -
% \frac{2\rho_t}{\rho_{t+1}}
% \nonumber\\
&=
\frac{2(\rho_{t+1}-\rho_t)}{\rho_{t+1}}.
\label{eq:normalized-change}
\end{align}
The inequality uses~\eqref{eq:inner-lower}. Substituting
\eqref{eq:normalized-change} into~\eqref{eq:normalized-strategy}
gives~\eqref{eq:update-change}.
\end{proof}

Together with the prediction displacement already controlled by~\eqref{eq:rvu-intermediate}, Lemma~\ref{lem:update-change} gives the two pieces needed to bound movement between consecutive played strategies. This also leads to a two-displacement RVU bound with explicit terms for $x_t-\tilde x_t$ and $\tilde x_{t+1}-x_t$; see the two-displacement RVU bound in Appendix~H. This strengthened RVU bound is not used in the main convergence analysis, which works directly with regret-vector norm growth.

% Lemma~\ref{lem:update-change} therefore goes beyond the known monotonicity of the IR-PRM+ regret-vector norm by quantifying how its increase controls the post-update displacement from $x_t$ to $\tilde x_{t+1}$. Together with the prediction displacement from $\tilde x_t$ to $x_t$ already controlled by~\eqref{eq:rvu-intermediate}, this gives the two pieces needed to bound the full movement between consecutive played strategies.

% Combining this post-update displacement bound with the inherited RVU
% inequality also yields a two-displacement RVU bound for IR-PRM+, with
% explicit terms for both $x_t-\tilde x_t$ and
% $\tilde x_{t+1}-x_t$; see Appendix~H. This strengthened
% single-learner inequality does not by itself establish current-profile
% convergence or handle alternating updates: both require controlling the
% coupled growth of the players' regret-vector norms, and alternation
% additionally requires a staggered analysis. Our main argument therefore
% works directly with regret-vector norm growth.

%We retain explicit constants in the structural movement and norm-growth inequalities whenever they are needed to close a threshold argument.

% Thus prediction error depends on the full movement between consecutive played strategies, whereas the inherited stability term controls only $x_t-\tilde x_t$ or $y_t-\tilde y_t$. Under extra-gradient prediction, the corresponding errors 

\begin{corollary}[Consecutive-iterate movement]
\label{cor:movement}
For a $k$-action learner and every $t\ge1$,
\begin{equation}
\norm{x_{t+1}-x_t}_2^2
\le
2\norm{x_{t+1}-\tilde x_{t+1}}_2^2
+
\frac{4(1+\sqrt{k})^2(\rho_{t+1}-\rho_t)}{\rho_{t+1}}.
\label{eq:movement}
\end{equation}
\end{corollary}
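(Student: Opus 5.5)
The plan is to split the consecutive movement at the intermediate strategy $\tilde x_{t+1}$ and then treat each piece separately. By the triangle inequality,
\[
x_{t+1}-x_t=(x_{t+1}-\tilde x_{t+1})+(\tilde x_{t+1}-x_t).
\]
The first piece is the prediction displacement at round $t+1$, which we keep as is. The second piece is the post-update displacement already controlled by Lemma~\ref{lem:update-change}.

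The next step applies the elementary inequality $\norm{a+b}_2^2\le 2\norm{a}_2^2+2\norm{b}_2^2$, which follows from Young's inequality $2\ip{a}{b}\le\norm{a}_2^2+\norm{b}_2^2$. This gives
\[
\norm{x_{t+1}-x_t}_2^2\le 2\norm{x_{t+1}-\tilde x_{t+1}}_2^2+2\norm{\tilde x_{t+1}-x_t}_2^2 .
\]

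Finally, we substitute the bound~\eqref{eq:update-change} into the second term. This bound is valid for every $t\ge1$ and for any $k$-action IR-PRM+ learner. It turns $2\norm{\tilde x_{t+1}-x_t}_2^2$ into
\[
\frac{4(1+\sqrt{k})^2(\rho_{t+1}-\rho_t)}{\rho_{t+1}},
\]
which is exactly~\eqref{eq:movement}.

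There is no real obstacle; the only point to check is that all quantities are well defined. The normalizations require $\rho_{t+1}>0$, and this holds because $\rho_{t+1}\ge\rho_t\ge1$ by~\eqref{eq:rho-lower-one} under the positive initialization~\eqref{eq:positive-init}. Also, $x_{t+1}$ and $\tilde x_{t+1}$ are defined for every $t\ge1$, so the decomposition is available at every round.

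The corollary does not depend on the choice of prediction $m_{t+1}$. It is therefore a purely single-learner statement, and it holds equally for the previous-utility and extra-gradient schemes.
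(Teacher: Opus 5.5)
Your proposal is correct and follows the paper's proof exactly: insert $\tilde x_{t+1}$, apply $\norm{a+b}_2^2\le 2\norm a_2^2+2\norm b_2^2$, and bound the post-update term with Lemma~\ref{lem:update-change}. One small wording point: the splitting $x_{t+1}-x_t=(x_{t+1}-\tilde x_{t+1})+(\tilde x_{t+1}-x_t)$ is an algebraic identity rather than an application of the triangle inequality.
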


\begin{proof}
Insert $\tilde x_{t+1}$ between $x_t$ and $x_{t+1}$ and use
$\norm{a+b}_2^2\le2\norm a_2^2+2\norm b_2^2$:
\[
\norm{x_{t+1}-x_t}_2^2
\le2\norm{x_{t+1}-\tilde x_{t+1}}_2^2
+2\norm{\tilde x_{t+1}-x_t}_2^2.
\]
Lemma~\ref{lem:update-change} bounds the last term and gives
\eqref{eq:movement}.
\end{proof}

\section{Two-Player Analysis}
\label{sec:coupling}
We now return to the two-player game. Our goal is to control the growth of the two regret-vector norms. Under previous-utility prediction, each player's prediction error is driven by the opponent's consecutive strategy movement; Corollary~\ref{cor:movement} expresses that movement in terms of prediction displacement and the opponent's norm growth, which controls the post-update displacement. The proof below combines this movement bound with a one-step IR-PRM+ norm-growth estimate to obtain the coupled recurrences. Here and below, $\|A\|_2$ denotes the spectral norm of the game matrix.

\begin{lemma}[Regret-vector norm growth in self-play]
\label{lem:norm-growth}
Let $\kappa_A:=2(1+\sqrt{k^X})^2(1+\sqrt{k^Y})^2\|A\|_2^2$.
For every $t\ge2$,
\begin{align}
\rho_{t+1}^X-\rho_t^X
&\le \frac{\kappa_A}{\rho_t^X\rho_t^Y}
\left[\frac{\rho_t^Y}{2k^Y}\norm{y_t-\tilde y_t}_2^2
+\rho_t^Y-\rho_{t-1}^Y\right],
\label{eq:normX}\\
\rho_{t+1}^Y-\rho_t^Y
&\le \frac{\kappa_A}{\rho_t^X\rho_t^Y}
\left[\frac{\rho_t^X}{2k^X}\norm{x_t-\tilde x_t}_2^2
+\rho_t^X-\rho_{t-1}^X\right].
\label{eq:normY}
\end{align}
\end{lemma}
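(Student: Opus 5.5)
Only inequality~\eqref{eq:normX} needs proof; \eqref{eq:normY} follows by exchanging the roles of the players and replacing $A$ by $-A^\top$, which has the same spectral norm.

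The first ingredient is a one-step norm-growth estimate for a single IR-PRM+ learner: $\rho_{t+1}^2-\rho_t^2\le\norm{g_t}_2^2$, hence
\[
\rho_{t+1}-\rho_t\le\frac{\norm{g_t}_2^2}{\rho_{t+1}+\rho_t}\le\frac{\norm{g_t}_2^2}{2\rho_t}.
\]
To prove it, use $\tilde r_{t+1}=\pos{r_t+g_t}$ to get $\rho_{t+1}^2\le\norm{\pos{r_t}+g_t}_2^2$. This holds because, coordinatewise, $\pos{a+b}^2\le(\pos a+b)^2$: if $a\ge0$ the two sides agree up to the positive part, and if $a<0$ then $\pos{a+b}\le\pos{b}\le|b|$. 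Expanding gives $\rho_t^2+2\ip{\pos{r_t}}{g_t}+\norm{g_t}_2^2$. The cross term vanishes: $g_t\perp x_t$ by construction~\eqref{eq:gdef}, since $\ip{g_t}{x_t}=\ip{u_t-m_t}{x_t}-\ip{u_t-m_t}{x_t}\ip{\1}{x_t}=0$, and $\pos{r_t}$ is a positive multiple of $x_t$. I expect this is essentially the monotonicity/growth computation of Zhang et al., so I would cite it or reprove it in two lines as above.

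Next I bound $\norm{g_t^X}_2$ by the opponent's movement. Since $g_t^X=(I-\1x_t^\top)(u_t^X-m_t^X)$ and $\norm{I-\1x_t^\top}_2\le 1+\sqrt{k^X}$ (because $\norm{\1}_2\norm{x_t}_2\le\sqrt{k^X}$), \eqref{eq:prev-error-X} gives
\[
\norm{g_t^X}_2^2\le(1+\sqrt{k^X})^2\norm{A}_2^2\norm{y_t-y_{t-1}}_2^2 .
\]
This requires $t\ge2$, which is why the lemma starts there. A sharper projection bound may be available, but this constant suffices. Then I apply Corollary~\ref{cor:movement} to player $Y$ at index $t-1$:
\[
\norm{y_t-y_{t-1}}_2^2\le 2\norm{y_t-\tilde y_t}_2^2+\frac{4(1+\sqrt{k^Y})^2(\rho_t^Y-\rho_{t-1}^Y)}{\rho_t^Y}.
\]

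Combining the three estimates gives
\[
\rho_{t+1}^X-\rho_t^X\le\frac{(1+\sqrt{k^X})^2\norm{A}_2^2}{2\rho_t^X}\left[2\norm{y_t-\tilde y_t}_2^2+\frac{4(1+\sqrt{k^Y})^2(\rho_t^Y-\rho_{t-1}^Y)}{\rho_t^Y}\right].
\]
Factoring out $1/\rho_t^Y$ leaves the bracket $\rho_t^Y\cdot2\norm{y_t-\tilde y_t}_2^2+4(1+\sqrt{k^Y})^2(\rho_t^Y-\rho_{t-1}^Y)$. Matching it to the target form $\kappa_A[\rho_t^Y\norm{y_t-\tilde y_t}_2^2/(2k^Y)+(\rho_t^Y-\rho_{t-1}^Y)]$ needs two coefficient checks:
\begin{itemize}
\item The displacement term needs $(1+\sqrt{k^X})^2\norm{A}_2^2\le\kappa_A/(2k^Y)$, i.e.\ $2k^Y\le2(1+\sqrt{k^Y})^2$, which is true.
\item The norm-growth term needs $2(1+\sqrt{k^X})^2(1+\sqrt{k^Y})^2\norm{A}_2^2\le\kappa_A$, which holds with equality.
\end{itemize}
The main obstacle is making the constants land exactly on $\kappa_A$. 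If the projection constant or the one-step growth bound came out a factor of $2$ worse, I would tighten the denominator to $\rho_t+\rho_{t+1}$ rather than $2\rho_t$, or use $\norm{I-\1x^\top}_2^2\le 1+k$, to absorb it.
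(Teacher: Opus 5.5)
Your proposal is correct and follows the paper's proof essentially step for step. You use the one-step growth bound $\rho_{t+1}-\rho_t\le\norm{g_t}_2^2/(2\rho_t)$, the centering bound with constant $1+\sqrt{k^X}$, Corollary~\ref{cor:movement} for the column player at index $t-1$, and absorption of the displacement coefficient via $(1+\sqrt{k^Y})^2\ge k^Y$. The only difference is that you reprove $\rho_{t+1}^2-\rho_t^2\le\norm{g_t}_2^2$ (correctly, via $\pos{a+b}^2\le(\pos a+b)^2$ and $\ip{g_t}{x_t}=0$) where the paper cites Zhang et al.; your passage to the denominator $2\rho_t$ formally uses monotonicity, but the bound holds trivially anyway whenever $\rho_{t+1}<\rho_t$.
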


\begin{proof}
We first record a norm-growth result used below. Zhang et al.~\cite{zhang2026} give
\[
\rho_{t+1}^2-\rho_t^2\le\norm{g_t}_2^2,
\]

and their monotonicity result gives $\rho_{t+1}\ge\rho_t$. Therefore,
\begin{equation}
0\le
\rho_{t+1}-\rho_t
\le
\frac{\norm{g_t}_2^2}{\rho_{t+1}+\rho_t}
\le
\frac{\norm{g_t}_2^2}{2\rho_t}.
\label{eq:growth}
\end{equation}

For $v\in\R^h$ and $x\in\Delta_h$, the
Cauchy--Schwarz inequality, together with $\norm x_2\le1$, gives
\begin{equation}
\norm{v-\ip{v}{x}\1}_2\le(1+\sqrt{h})\norm v_2.
\label{eq:center-bound}
\end{equation}
For the row player, previous-utility prediction yields
$u_t^X-m_t^X=A(y_t-y_{t-1})$. Applying \eqref{eq:center-bound} and 
$\norm{Az}_2\le\norm A_2\norm z_2$ to 
\eqref{eq:growth}, we obtain
\begin{equation}
\rho_{t+1}^X-\rho_t^X
\le
\frac{(1+\sqrt{k^X})^2\norm A_2^2}{2\rho_t^X}
\norm{y_t-y_{t-1}}_2^2.
\label{eq:normX-before-movement}
\end{equation}
Corollary~\ref{cor:movement} for the column player at time $t-1$ gives
\[
\norm{y_t-y_{t-1}}_2^2
\le2\norm{y_t-\tilde y_t}_2^2
% +\frac{4(1+\sqrt n)^2}{\rho_t^Y}
+\frac{4(1+\sqrt{k^Y})^2}{\rho_t^Y}
(\rho_t^Y-\rho_{t-1}^Y).
\]
Substituting into \eqref{eq:normX-before-movement} and using the
specified value of $\kappa_A$ yields
\begin{align*}
\rho_{t+1}^X-\rho_t^X
&\le\frac{\kappa_A}{\rho_t^X\rho_t^Y}
% \biggl[\frac{\rho_t^Y}{2(1+\sqrt n)^2}
\biggl[\frac{\rho_t^Y}{2(1+\sqrt{k^Y})^2}
\norm{y_t-\tilde y_t}_2^2\\[-0.3em]
&\hspace{4.9em}+\rho_t^Y-\rho_{t-1}^Y\biggr].
\end{align*}
% Because $(1+\sqrt n)^2\ge n$, this implies \eqref{eq:normX}.
Because $(1+\sqrt{k^Y})^2\ge k^Y$, this implies \eqref{eq:normX}.
Exchanging the players and using $\norm{A^\top}_2=\norm A_2$
proves \eqref{eq:normY}.
\end{proof}

The coupled coefficient $1/(\rho_t^X\rho_t^Y)$ is the self-damping term used in the argument below.
% Here the $\rho_t^X$ comes from \eqref{eq:growth}; the second
% comes from the opponent's movement bound.

% New and improved
\begin{theorem}[Constant regret]
\label{thm:main}
In any finite two-player zero-sum matrix game, simultaneous IR-PRM+ with
previous-utility prediction has external regrets bounded by a finite game-dependent constant
independent of $T$. Consequently,
\[
\Gap(\bar x_T,\bar y_T)=\mathcal O_T(T^{-1}).
\]
\end{theorem}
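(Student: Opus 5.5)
The plan is to show that both regret-vector norms $\rho_t^X,\rho_t^Y$ stay bounded by a constant independent of $T$. Fact~\ref{fact:rvu-intermediate} then bounds each player's external regret by $\rho_{T+1}$, which is the first claim. For the gap, I will use the standard zero-sum identity
\[
\Reg^X(T)+\Reg^Y(T)=T\,\Gap(\bar x_T,\bar y_T)\ge 0.
\]
It holds because the bilinear terms $x_t^\top Ay_t$ cancel and the two maxima over fixed comparators combine to the duality gap of the average profile. Dividing by $T$ then gives the $\mathcal O_T(T^{-1})$ rate.

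\emph{Step 1 (accumulated displacement).} Write $D_t^X:=\frac{\rho_t^X}{2k^X}\norm{x_t-\tilde x_t}_2^2$ and $D_t^Y:=\frac{\rho_t^Y}{2k^Y}\norm{y_t-\tilde y_t}_2^2$. Adding the two instances of \eqref{eq:rvu-intermediate} and using nonnegativity of the regret sum gives
\[
\sum_{t=1}^T (D_t^X+D_t^Y)\le R_{T+1}, \qquad R_t:=\rho_t^X+\rho_t^Y.
\]
So the weighted prediction displacements are controlled by the norms themselves.

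\emph{Step 2 (self-damped telescoping).} Fix $s\ge2$ and write $P_s:=\rho_s^X\rho_s^Y$. Sum \eqref{eq:normX} and \eqref{eq:normY} over $t=s,\dots,T$. Both norms are nondecreasing, so $\rho_t^X\rho_t^Y\ge P_s$ for $t\ge s$. The bracketed terms are nonnegative, so every coefficient $\kappa_A/(\rho_t^X\rho_t^Y)$ can be replaced by $\kappa_A/P_s$. The growth increments then telescope:
\[
\sum_{t=s}^T(\rho_t^X-\rho_{t-1}^X)+(\rho_t^Y-\rho_{t-1}^Y)\le R_T\le R_{T+1}.
\]
Together with Step~1 this gives
\[
R_{T+1}-R_s\le \frac{2\kappa_A}{P_s}\,R_{T+1}\qquad\text{for all }T\ge s.
\]

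\emph{Step 3 (threshold argument).} There are two cases.
\begin{itemize}
\item If $P_s<4\kappa_A$ for every $s$, then \eqref{eq:rho-lower-one} gives $\rho_s^X,\rho_s^Y\le P_s<4\kappa_A$ for all $s$, so both norms are bounded.
\item Otherwise, let $s^\ast$ be the first index with $P_{s^\ast}\ge4\kappa_A$. Step~2 gives $R_{T+1}\le 2R_{s^\ast}$ for all $T$.
\end{itemize}
In the second case it remains to bound $R_{s^\ast}$ by a game constant. By minimality, $\rho_{s^\ast-1}^X,\rho_{s^\ast-1}^Y<4\kappa_A$, or $s^\ast\le2$. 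A single step can only add a bounded amount, because \eqref{eq:growth} gives $\rho_{t+1}-\rho_t\le\norm{g_t}_2^2/2$. Moreover $\norm{g_t}_2\le(1+\sqrt k)\norm{A}_2\norm{y_t-y_{t-1}}_2\le 2(1+\sqrt k)\norm A_2$, and similarly for $t=1$ with $m_1=0$. Hence $R_{s^\ast}\le 8\kappa_A+\mathcal O(1)$. In both cases $\sup_T R_{T+1}$ is a finite game-dependent constant.

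The main obstacle is Step~2, the self-bounding closure. The bound on norm growth involves the displacement sum, the displacement sum is bounded by the final norms, and the damping factor $1/P_s$ must be frozen at a fixed time so that the sum can be pulled out. This works only because of the monotonicity of $\rho$, which is also what allows the prior-norm increments $\rho_t-\rho_{t-1}$ to telescope into $R_{T+1}$ rather than accumulating.
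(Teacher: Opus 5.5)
Your proof is correct and follows essentially the paper's route: the same threshold $\rho_t^X\rho_t^Y\ge4\kappa_A$ applied to the summed recurrences of Lemma~\ref{lem:norm-growth}, closed by the zero-sum bound $\sum_t H_t\le S_{T+1}$ from Fact~\ref{fact:rvu-intermediate}. The differences are only bookkeeping: you bound the displacement and telescoped-increment contributions by $R_{T+1}$ at once to get $R_{T+1}\le 2R_{s^\ast}$, where the paper first absorbs the telescoping quarter to reach its factor $\frac13$, and you make the one-step crossing overshoot explicit; note also that if $s^\ast=1$ you should run Step~2 from $s=2$, since Lemma~\ref{lem:norm-growth} requires $t\ge2$.
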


%\paragraph{Proof sketch.}
\begin{proof}
By \eqref{eq:rho-lower-one}, both regret-vector norms are at least one
from the outset. Since the norms are also nondecreasing, so is their
product. Consider the sufficient threshold
\begin{equation}
% \rho_t^X\rho_t^Y
% \ge
% 8(1+\sqrt m)^2(1+\sqrt n)^2\|A\|_2^2.
\rho_t^X\rho_t^Y
\ge
8(1+\sqrt{k^X})^2(1+\sqrt{k^Y})^2\|A\|_2^2.
\label{eq:threshold}
\end{equation}
If this threshold is never reached, then, since
$\rho_t^X,\rho_t^Y\ge1$, each norm is bounded above by their product and
hence by the right-hand side of~\eqref{eq:threshold}. If the threshold is
reached, monotonicity ensures that it continues to hold. Therefore, at all times after the threshold is reached the coefficient in \eqref{eq:normX} and \eqref{eq:normY} is at most
\[
\frac{\kappa_A}{\rho_t^X\rho_t^Y}
\le
% \frac{
% 2(1+\sqrt m)^2(1+\sqrt n)^2\|A\|_2^2
% }{
% 8(1+\sqrt m)^2(1+\sqrt n)^2\|A\|_2^2
% }
\frac{
2(1+\sqrt{k^X})^2(1+\sqrt{k^Y})^2\|A\|_2^2
}{
8(1+\sqrt{k^X})^2(1+\sqrt{k^Y})^2\|A\|_2^2
}
=
\frac14.
\]

% To see why this is enough, define
% \[
% S_t:=\rho_t^X+\rho_t^Y
% \]
% and
% \[
% H_t:=
% \frac{\rho_t^X}{2k^X}\norm{x_t-\tilde x_t}_2^2
% +
% \frac{\rho_t^Y}{2k^Y}\norm{y_t-\tilde y_t}_2^2.
% \]
To see why this is enough, define $S_t:=\rho_t^X+\rho_t^Y$ and
\[
H_t:=
\frac{\rho_t^X}{2k^X}\norm{x_t-\tilde x_t}_2^2
+
\frac{\rho_t^Y}{2k^Y}\norm{y_t-\tilde y_t}_2^2.
\]

Adding \eqref{eq:normX} and \eqref{eq:normY} then gives,
after the threshold is reached,
\[
S_{t+1}-S_t
\le
\frac14 H_t+\frac14(S_t-S_{t-1}).
\]
Let $\tau$ denote the first round from which the threshold holds. Summing
from $\tau$ to $T$ gives
\[
S_{T+1}-S_\tau
\le
\frac14\sum_{t=\tau}^T H_t
+
\frac14(S_T-S_{\tau-1}).
\]
Since $S_T\le S_{T+1}$, moving the resulting quarter of
$S_{T+1}-S_\tau$ to the left and absorbing the finite threshold-crossing
terms into a constant gives
\begin{equation}
S_{T+1}
\le
C_0+\frac13\sum_{t=1}^T H_t,
\label{eq:preceding-norm-growth-bound}
\end{equation}
where $C_0$ is finite and independent of $T$.

The remaining step is to control the accumulated weighted squared
prediction displacement terms. Applying
Fact~\ref{fact:rvu-intermediate} to each player and using the fact that the game is zero-sum gives
\begin{equation}
0
\le
T\,\Gap(\bar x_T,\bar y_T)
=
\Reg_X(T)+\Reg_Y(T)
\le
S_{T+1}-\sum_{t=1}^T H_t.
\label{eq:zero-sum-identity}
\end{equation}
Hence $\sum_{t=1}^T H_t\le S_{T+1}$. Substituting this into
\eqref{eq:preceding-norm-growth-bound} yields
$S_{T+1}\le C_0+\frac13S_{T+1}$, and therefore
$S_{T+1}\le\frac32C_0$.
\end{proof}

% Theorem~\ref{thm:main} provides the desired linear rate for the average profiles.
% The same bounds can also be used to control the actual simultaneous profiles. The additional ingredient is a residual lemma for the predicted utility (see Appendix~A). For a utility vector $v$ and strategy $x\in\Delta_k$, the quantity $\max_{p\in\Delta_k}\ip{v}{p-x}$ is the improvement available from switching to a best response. When the regret-vector norm is bounded, this residual for the predicted utility is controlled by the prediction-step displacement $\norm{x_t-\tilde x_t}_2$. Intuitively, if incorporating the prediction barely changes the normalized regret vector, then the prediction cannot offer a large profitable deviation. Under previous-utility prediction, Corollary~\ref{cor:movement} also makes the predicted and current utilities asymptotically close by controlling consecutive strategy movement. Combining these two facts transfers the square-summable movement estimates underlying the proof of Theorem~\ref{thm:main} to the current duality gaps.

Theorem~\ref{thm:main} gives the desired
$\mathcal O_T(T^{-1})$ average-gap rate.
The same bounds also control the current simultaneous profiles. The additional
ingredient is a predicted-utility residual bound (Lemma 4 in Appendix~A.2). For a utility vector $v$ and strategy $x\in\Delta_k$, the quantity
$\max_{p\in\Delta_k}\ip{v}{p-x}$ is the improvement available from switching
to a best response. When the regret-vector norm is bounded, this residual for
the predicted utility is controlled by the prediction displacement
$\norm{x_t-\tilde x_t}_2$. Intuitively, if incorporating the prediction barely
changes the normalized regret vector, then the prediction cannot cause a large
profitable deviation. Under previous-utility prediction,
Corollary~\ref{cor:movement} also makes the predicted and current utilities
asymptotically close by controlling consecutive strategy movement. Combining
these facts with the square-summable strategy-movement bounds underlying
Theorem~\ref{thm:main} yields square summability of the current duality gaps.

Let $\mathcal Z^\star$ denote the Nash equilibrium set. Here and below, $\operatorname{dist}$ denotes Euclidean distance to a set.
\begin{theorem}[Current-profile convergence]
\label{thm:current-matrix}
In the same setting,
\begin{equation}
\sum_{t=1}^{\infty}\Gap(x_t,y_t)^2<\infty.
\label{eq:matrix-gap-square-sum}
\end{equation}
Consequently,
\[
\Gap(x_t,y_t)\to0,
\qquad
\operatorname{dist}\bigl((x_t,y_t),\mathcal Z^\star\bigr)\to0.
\]
Moreover,
\begin{equation}
\min_{1\le t\le T}\Gap(x_t,y_t)=\mathcal O_T(T^{-1/2}).
\label{eq:matrix-best-current}
\end{equation}
\end{theorem}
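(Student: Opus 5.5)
The argument has three parts: summability of movements, a residual bound that converts movement into a gap bound, and compactness for the equilibrium-set statement.

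\emph{Step 1: square-summable movements.} From the proof of Theorem~\ref{thm:main}, $S_{T+1}\le\frac32C_0$ for all $T$. Combining \eqref{eq:zero-sum-identity} with $\rho_t\ge1$ gives
\[
\sum_{t}\norm{x_t-\tilde x_t}_2^2\le 2k^X\cdot\tfrac32C_0,
\]
and likewise for $y$. Both regret-vector norms are monotone and bounded, so $\sum_t(\rho_{t+1}-\rho_t)<\infty$. Lemma~\ref{lem:update-change} then gives $\sum_t\norm{\tilde x_{t+1}-x_t}_2^2<\infty$, and Corollary~\ref{cor:movement} gives $\sum_t\norm{x_{t+1}-x_t}_2^2<\infty$, with the same conclusions for $y$.

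\emph{Step 2: residual bound for the predicted utility.} This is the step I expect to be the main obstacle. The goal is to show that, with $\rho_t\le R$,
\[
\max_{p\in\Delta_{k^X}}\ip{m_t^X}{p-x_t}\le c\,R\,\norm{x_t-\tilde x_t}_2 .
\]
This is the paper's Lemma~4 in Appendix~A.2, and I would take it as given. If it had to be proved, the idea is as follows. By construction $r_t=\tilde r_t+m_t-\gamma_t\1$ and $\norm{\pos{r_t}}_2=\norm{\tilde r_t}_2$. A coordinate $i$ with large $m_{t,i}-\ip{m_t}{x_t}$ lies above the shift, so either $[r_t]_i$ is large or the positive part moves substantially. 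In either case $\pos{r_t}/\rho_t$ differs from $\tilde r_t/\rho_t$ by an amount comparable to the advantage divided by $R$. Converting to the $\ell_1$-normalized strategies, as in Appendix~G, yields the displacement bound.

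\emph{Step 3: current gap.} Write $\Gap(x_t,y_t)=\max_p\ip{Ay_t}{p-x_t}+\max_q\ip{-A^\top x_t}{q-y_t}$. For $t\ge2$, replace $Ay_t$ by $m_t^X+A(y_t-y_{t-1})$ and $-A^\top x_t$ by $m_t^Y-A^\top(x_t-x_{t-1})$. The replacement errors are bounded by $2\sqrt{k^X}\,\norm A_2\norm{y_t-y_{t-1}}_2$ and the symmetric quantity. Hence
\[
\Gap(x_t,y_t)\le C\bigl(\norm{x_t-\tilde x_t}_2+\norm{y_t-\tilde y_t}_2+\norm{x_t-x_{t-1}}_2+\norm{y_t-y_{t-1}}_2\bigr),
\]
where $C$ is independent of $t$ because the norms are bounded. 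Squaring with $(a+b+c+d)^2\le4(a^2+b^2+c^2+d^2)$ and applying Step~1 gives \eqref{eq:matrix-gap-square-sum}. The $t=1$ term is finite. It follows that $\Gap(x_t,y_t)\to0$, and
\[
T\min_{t\le T}\Gap(x_t,y_t)^2\le\sum_{t}\Gap(x_t,y_t)^2<\infty,
\]
which gives \eqref{eq:matrix-best-current}.

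\emph{Step 4: distance to the Nash set.} Suppose the distance does not tend to zero. Then some subsequence stays at distance at least $\epsilon$ from $\mathcal Z^\star$. By compactness of $\Delta_{k^X}\times\Delta_{k^Y}$, this subsequence has a further limit point. $\Gap$ is continuous, so the limit point has gap $0$, which means it lies in $\mathcal Z^\star$. This contradicts the $\epsilon$ separation, so $\operatorname{dist}((x_t,y_t),\mathcal Z^\star)\to0$.
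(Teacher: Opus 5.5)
Your proposal is correct and follows the paper's proof essentially step for step. The steps match: square summability of the prediction and post-update displacements from the bounded norms and $\sum_t H_t<\infty$; the predicted-utility residual bound (Lemma~\ref{lem:prediction-residual}), combined with adding and subtracting the previous-utility prediction for $t\ge2$; squaring and summing; the bound $T\min_{t\le T}\Gap(x_t,y_t)^2\le K$; and compactness for the Nash-set distance. One small remark on your optional sketch of Lemma~\ref{lem:prediction-residual}: it needs the reverse of the normalization inequality in Appendix~\ref{app:normalization}, namely bounding $\norm{\pos{r_t}-\tilde r_t}_2$ by a constant times $\rho_t\norm{x_t-\tilde x_t}_2$, which the paper proves separately using $\norm{x_t}_2\ge1/\sqrt{k}$. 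Since you take the lemma as given, this does not affect your argument.
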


% DO NOT DELETE (just in case)
% Appendix~I develops several consequences of Theorem~\ref{thm:two-displacement-rvu}, including a bound on consecutive strategy movement, a prediction-error form of the RVU inequality, and its application to simultaneous two-player zero-sum self-play.

\section{Simultaneous IR-PCFR+ in Extensive-Form Games}
\label{sec:cfr}
The CFR reduction follows Zhang et al.~\cite{zhang2026}, who use the standard CFR decomposition to show that a full-reach Nash equilibrium induces a weighted Minty condition for the counterfactual-utility operator. We retain this reduction but establish the convergence guarantees under previous-utility prediction rather than extra-gradient prediction. This requires extending our two-player norm-growth analysis to finitely many coupled simplex learners. Combined with the existing CFR reduction, the resulting bounds yield the simultaneous IR-PCFR+ guarantees below.
Proofs of the product-of-simplices and simultaneous CFR results established here are collected in Appendices~B and~C.

\subsection{Extensive-form games and CFR}
\label{sec:cfr-setup}
Consider a finite two-player zero-sum extensive-form game with perfect recall. Let $\mathcal I_X$ and $\mathcal I_Y$ denote the information sets of the two players, and let $A(I)$ be the actions available at information set $I$. A behavioral strategy assigns a local distribution to every information set:
\[
x_I\in\Delta_{|A(I)|}\quad(I\in\mathcal I_X),
\qquad
y_J\in\Delta_{|A(J)|}\quad(J\in\mathcal I_Y).
\]
We write $x=(x_I)_{I\in\mathcal I_X}$ and $y=(y_J)_{J\in\mathcal I_Y}$ for the two behavioral strategies.

% We use the standard sequence-form representation~\cite{koller1994,vonstengel1996}. For an information set $I$, let $p_I$ denote its parent sequence, write $Ia$ for the sequence obtained by taking action $a$ at $I$, and let $C_\sigma$
% denote the child information sets immediately following a sequence $\sigma$. If $q^X$ is the realization plan induced by $x$, then
% \[
% q^X[\sigma]
% =
% \prod_{Ia\preceq\sigma}x_I[a],
% \]
% where $Ia\preceq\sigma$ means that the sequence $Ia$ is a prefix of $\sigma$,
% and analogously for $q^Y$. Let $\mathcal Q_X$ and $\mathcal Q_Y$ denote the corresponding sequence-form polytopes.

We use the standard sequence-form representation~\cite{koller1994,vonstengel1996}.
For a player $P\in\{X,Y\}$, sequences record only that player's actions.
For $I\in\mathcal I_P$, let $p_I$ denote its parent sequence, write $Ia$
for the sequence obtained by extending $p_I$ with action $a$ at $I$, and let
$C_\sigma$ denote the information sets of $P$ whose parent sequence is
$\sigma$. If $q^X$ is the realization plan induced by $x$, then for every
$X$-sequence $\sigma$,
\[
q^X[\sigma]
=
\prod_{\substack{Ia\preceq\sigma\\ I\in\mathcal I_X}}x_I[a],
\]
where $Ia\preceq\sigma$ means that the sequence $Ia$ is a prefix of $\sigma$,
and analogously for $q^Y$. Let $\mathcal Q_X$ and $\mathcal Q_Y$ denote the
corresponding sequence-form polytopes.

There is a matrix $A_{\rm seq}$ such that the expected payoff to $X$ is
\[
U(q^X,q^Y)=(q^X)^\top A_{\rm seq}q^Y.
\]
The sequence-form duality gap is
\[
\Gap_{\mathcal Q}(q^X,q^Y)
:=
\max_{\widehat q^X\in\mathcal Q_X}
(\widehat q^X)^\top A_{\rm seq}q^Y
-
\min_{\widehat q^Y\in\mathcal Q_Y}
(q^X)^\top A_{\rm seq}\widehat q^Y.
\]

CFR runs one local learner at each information set~\cite{zinkevich2007}. The utility vector supplied to that learner is its counterfactual utility. In the notation of Zhang et al.~\cite{zhang2026}, these vectors can be defined recursively. For $I\in\mathcal I_X$,
\begin{equation}
F_I^X(x,y)[a]
=
(A_{\rm seq}q^Y)[Ia]
+
\sum_{K\in C_{Ia}}
\ip{F_K^X(x,y)}{x_K},
\label{eq:cfr-counterfactual-X}
\end{equation}
with the $Y$-player counterfactual utilities defined analogously using
$-A_{\rm seq}^{\top}q^X$.
% CFR runs one local regret minimizer at each information set~\cite{zinkevich2007}. The utility vector supplied to a local learner is its counterfactual utility. In the notation of Zhang et al.~\cite{zhang2026}, these vectors can be defined recursively. For $I\in\mathcal I_X$,
% \begin{equation}
% F_I^X(x,y)[a]
% =
% (A_{\rm seq}q^Y)[Ia]
% +
% \sum_{K\in C_{Ia}}
% \ip{F_K^X(x,y)}{x_K},
% \label{eq:cfr-counterfactual-X}
% \end{equation}
% and for $J\in\mathcal I_Y$, writing $Y$ as a utility maximizer,
% \begin{equation}
% F_J^Y(x,y)[a]
% =
% (-A_{\rm seq}^{\top}q^X)[Ja]
% +
% \sum_{K\in C_{Ja}}
% \ip{F_K^Y(x,y)}{y_K}.
% \label{eq:cfr-counterfactual-Y}
% \end{equation}
An empty child sum is zero. Thus each information set induces a standard simplex regret-minimization problem, coupled to the others through the current behavioral profile.

The standard CFR decomposition relates these local counterfactual utilities to the sequence-form game; we recall the underlying identity in Appendix~C.1. A behavioral profile has full reach if each player's realization plan assigns strictly positive probability to the parent sequence of every information set. In particular, Zhang et al.~\cite{zhang2026} show that if the game admits a full-reach Nash equilibrium, then the counterfactual-utility operator satisfies the weighted Minty condition used below, with weights given by the equilibrium parent-sequence realization probabilities. The full-reach condition ensures that all of these weights are strictly positive.

\subsection{Simultaneous IR-PRM+ on a product of simplices}
\label{sec:weighted-minty}
We now establish the product-of-simplices result needed for this reduction under previous-utility prediction. Following the product-space framework of Zhang et al.~\cite{zhang2026}, specialized to the simplex domains on which IR-PRM+ operates, let
\[
\mathcal X=\prod_{i=1}^N\Delta_{k_i},
\qquad
F=(F_1,\ldots,F_N).
\]
In this subsection, $x=(x_1,\ldots,x_N)\in\mathcal X$ denotes the full product
profile. $N$ is the total number of local simplex learners and $k_i$ is the number of actions of learner $i$. In the CFR application, each local learner corresponds to one information set, so
\[
N=|\mathcal I_X|+|\mathcal I_Y|,
\qquad
k_i=|A(I_i)|,
\]
where $I_i$ is the information set represented by learner $i$. Thus each component $x_i\in\Delta_{k_i}$ is one local information-set strategy.

The map $F_i:\mathcal X\to\R^{k_i}$ is bounded and, for some finite $L_i$,
\begin{equation}
\norm{F_i(x)-F_i(x')}_2
\le
L_i\norm{x-x'}_2
\qquad
\text{for all }x,x'\in\mathcal X.
\label{eq:product-lipschitz}
\end{equation}
Each learner $i$ uses the unit-norm initialization
$\tilde r_{i,1}=k_i^{-1/2}\1$, so $\rho_{i,1}=1$. We set $m_{i,1}=0$ and, for $t\ge2$, use
\begin{equation}
u_{i,t}=F_i(x_t),
\qquad
m_{i,t}=F_i(x_{t-1}).
\label{eq:product-prediction}
\end{equation}
Write $\rho_{i,t}$ for its regret-vector norm and $\tilde x_{i,t}$ for the strategy induced before the round-$t$ prediction. Monotonicity gives $\rho_{i,t}\ge1$ for every learner and every round.

% We assume the weighted Minty condition used by Zhang et al.~\cite{zhang2026}: there are $x^\star\in\mathcal X$ and fixed weights $w_i>0$ such that
% \begin{equation}
% \sum_{i=1}^N
% w_i\ip{F_i(x)}{x_i^\star-x_i}
% \ge0
% \qquad
% \text{for every }x\in\mathcal X.
% \label{eq:weighted-minty}
% \end{equation}
The two-player proof above used the fact that one learner's prediction error is controlled by the other learner's consecutive movement. Here the feedback at learner $i$ can depend on every component of $x$, so all local movements can contribute. Lipschitzness first bounds learner $i$'s prediction error by the movement of the full product profile; decomposing that movement across components and applying Corollary~\ref{cor:movement} to each learner gives the following recurrence.

\begin{lemma}[Product-of-simplices norm-growth bound]
\label{lem:product-growth}
For every learner $i$ and every $t\ge2$,
\begin{align}
\rho_{i,t+1}-\rho_{i,t}
\le
\sum_{j=1}^N
&\frac{2(1+\sqrt{k_j})^2(1+\sqrt{k_i})^2L_i^2}
{\rho_{i,t}\rho_{j,t}}
\left[
\frac{\rho_{j,t}}{2k_j}
\norm{x_{j,t}-\tilde x_{j,t}}_2^2
\right.
\notag\\[-0.2em]
&\left.\hspace{8.8em}
+(\rho_{j,t}-\rho_{j,t-1})
\right].
\label{eq:product-growth}
\end{align}
\end{lemma}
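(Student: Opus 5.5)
The plan is to mirror the proof of Lemma~\ref{lem:norm-growth}, replacing the opponent's movement with the movement of the full product profile. First, the one-step growth estimate~\eqref{eq:growth} applies to each learner $i$ separately, since it only uses the single-learner IR-PRM+ properties (monotonicity and $\rho_{i,t+1}^2-\rho_{i,t}^2\le\norm{g_{i,t}}_2^2$). It gives
\[
\rho_{i,t+1}-\rho_{i,t}\le\frac{\norm{g_{i,t}}_2^2}{2\rho_{i,t}}.
\]
By the centering bound~\eqref{eq:center-bound} with $h=k_i$, and since $u_{i,t}-m_{i,t}=F_i(x_t)-F_i(x_{t-1})$ for $t\ge2$, we get $\norm{g_{i,t}}_2\le(1+\sqrt{k_i})\norm{F_i(x_t)-F_i(x_{t-1})}_2$. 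The Lipschitz condition~\eqref{eq:product-lipschitz} then yields
\[
\rho_{i,t+1}-\rho_{i,t}\le\frac{(1+\sqrt{k_i})^2L_i^2}{2\rho_{i,t}}\norm{x_t-x_{t-1}}_2^2 .
\]

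Next, decompose the product-space movement exactly into components: $\norm{x_t-x_{t-1}}_2^2=\sum_{j=1}^N\norm{x_{j,t}-x_{j,t-1}}_2^2$. This is an identity, not a triangle-inequality bound, so no factor of $N$ appears. That matches the stated constant, which has no $N$. Apply Corollary~\ref{cor:movement} to learner $j$ at time $t-1$:
\[
\norm{x_{j,t}-x_{j,t-1}}_2^2\le2\norm{x_{j,t}-\tilde x_{j,t}}_2^2+\frac{4(1+\sqrt{k_j})^2}{\rho_{j,t}}(\rho_{j,t}-\rho_{j,t-1}).
\]
Each learner is a single IR-PRM+ learner fed the utilities $F_j(x_t)$, so the corollary applies verbatim.

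Finally, substitute and factor out $\frac{2(1+\sqrt{k_j})^2(1+\sqrt{k_i})^2L_i^2}{\rho_{i,t}\rho_{j,t}}$ from the $j$th summand. The displacement term becomes $\frac{\rho_{j,t}}{2(1+\sqrt{k_j})^2}\norm{x_{j,t}-\tilde x_{j,t}}_2^2$, and the growth term is exactly $\rho_{j,t}-\rho_{j,t-1}$. Using $(1+\sqrt{k_j})^2\ge k_j$ to replace the denominator $2(1+\sqrt{k_j})^2$ by $2k_j$ then gives~\eqref{eq:product-growth}.

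The only point I expect to need care is the bookkeeping of constants. The factor $\tfrac12$ from~\eqref{eq:growth} must combine with the factors $2$ and $4$ from Corollary~\ref{cor:movement} to give the displayed coefficient. I would check this carefully rather than expect a conceptual obstacle. I would also confirm that the self-term $j=i$ needs no special treatment: learner $i$'s own movement enters through $F_i$'s dependence on $x_i$, and it is handled by the same corollary.
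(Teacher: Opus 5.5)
Your proposal is correct and follows the paper's proof essentially step for step. Both use the same chain: centering plus Lipschitzness, the exact blockwise identity $\norm{x_t-x_{t-1}}_2^2=\sum_j\norm{x_{j,t}-x_{j,t-1}}_2^2$, Corollary~\ref{cor:movement} for each learner at time $t-1$, the inequality $(1+\sqrt{k_j})^2\ge k_j$, and division by $2\rho_{i,t}$ via \eqref{eq:growth}; your constant bookkeeping also checks out.
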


The proof extends the argument for Lemma~\ref{lem:norm-growth} to the coupled local learners on the product of simplices; the full proof is given in Appendix~B.1.

To close these coupled recurrences, we use the weighted Minty condition of
Zhang et al.~\cite{zhang2026}: there are $x^\star\in\mathcal X$ and fixed
weights $w_i>0$ such that
\begin{equation}
\sum_{i=1}^N
w_i\ip{F_i(x)}{x_i^\star-x_i}
\ge0
\qquad
\text{for every }x\in\mathcal X.
\label{eq:weighted-minty}
\end{equation}
With $N$ local learners, different regret-vector norms may enter the
large-norm regime at different times. The next theorem shows that this
condition is nevertheless sufficient to bound all of them.

\begin{theorem}[Constant regret under weighted Minty]
\label{thm:weighted-minty}
Suppose each $F_i$ is bounded and Lipschitz and $F$ satisfies the weighted
Minty condition. Then, under simultaneous
IR-PRM+ with previous-utility prediction, every local regret-vector norm
remains bounded over time and every local learner's external regret is
bounded by a finite game-dependent constant independent of $T$.
\end{theorem}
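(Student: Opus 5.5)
Here is the plan: mirror the proof of Theorem~\ref{thm:main} with a weighted potential. Define
\[
S_t:=\sum_{i=1}^N w_i\rho_{i,t},
\qquad
H_t:=\sum_{i=1}^N w_i\frac{\rho_{i,t}}{2k_i}\norm{x_{i,t}-\tilde x_{i,t}}_2^2 ,
\]
where the $w_i>0$ are the weighted Minty weights. The closing inequality comes from applying Fact~\ref{fact:rvu-intermediate} to each learner with comparator $x_i^\star$. This gives $\sum_t\ip{u_{i,t}}{x_i^\star-x_{i,t}}\le\Reg_i(T)\le\rho_{i,T+1}-\sum_t\frac{\rho_{i,t}}{2k_i}\norm{x_{i,t}-\tilde x_{i,t}}_2^2$. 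Multiplying by $w_i$, summing over $i$, and invoking \eqref{eq:weighted-minty} at each $x_t$ shows that the left side is nonnegative. Hence $\sum_{t=1}^T H_t\le S_{T+1}$, which replaces the zero-sum identity \eqref{eq:zero-sum-identity}.

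For the growth side, multiply Lemma~\ref{lem:product-growth} by $w_i$ and sum over $i$. Set $c_{ij}:=2(1+\sqrt{k_j})^2(1+\sqrt{k_i})^2L_i^2$ and $\bar w:=\max_i w_i$, $\underline w:=\min_i w_i$. The coefficient multiplying learner $j$'s bracket is then $\sum_i w_i c_{ij}/(\rho_{i,t}\rho_{j,t})$, which is at most $\frac{C}{\rho_{j,t}}\cdot\max_i\rho_{i,t}^{-1}\le \frac{C}{\rho_{j,t}}$ since $\rho_{i,t}\ge1$. Only the factor $1/\rho_{j,t}$ is guaranteed to be damped. The bracket for $j$ is $w_j^{-1}$ times learner $j$'s share of $H_t$, plus $\rho_{j,t}-\rho_{j,t-1}$. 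So
\[
S_{t+1}-S_t\le\sum_j\frac{C'}{\rho_{j,t}}\Bigl[w_j\tfrac{\rho_{j,t}}{2k_j}\norm{x_{j,t}-\tilde x_{j,t}}_2^2+w_j(\rho_{j,t}-\rho_{j,t-1})\Bigr],
\]
with $C'$ depending on $L_i,k_i,\bar w/\underline w$.

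This is the main obstacle. Unlike the two-player case, different learners enter the large-norm regime at different times, and some $\rho_{j,t}$ may stay small forever. For those $j$ the coefficient $C'/\rho_{j,t}$ is not below $1/4$. The first approach is to split learners at a threshold $\Theta:=4C'$.

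For learners with $\rho_{j,t}\ge\Theta$, the contribution is at most $\frac14$ of their weighted $H$-share plus $\frac14$ of their weighted norm increment. This is handled exactly as before by telescoping.

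Learners with $\rho_{j,t}<\Theta$ would be treated by monotonicity. Each $\rho_{j,\cdot}$ is nondecreasing, so the set of "small" learners only shrinks over time. A learner that is small at time $t$ has been small at all earlier times, so its total increment $\sum_t(\rho_{j,t}-\rho_{j,t-1})$ over the small period is at most $\Theta$, a constant. Its prediction-displacement term $\norm{x_{j,t}-\tilde x_{j,t}}_2^2$ is not yet controlled, however. I would bound it by the full $H$-share times $C'$ and absorb it using the Minty inequality. That needs $\sum_t H_t\le S_{T+1}$ with coefficient $C'$ multiplied against it, which fails unless $C'<1$.

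The fix I would try first is to weight differently. Keep the growth inequality in terms of $\rho_{i,t+1}-\rho_{i,t}$, but exploit that a small learner's displacement enters with factor $\rho_{j,t}/(\rho_{i,t}\rho_{j,t})=1/\rho_{i,t}$. So only large-norm receiving learners $i$ with $\rho_{i,t}\ge\Theta$ can turn it into damped growth. Summing growth only over currently large learners $i$ then gives coefficient $\le \frac14$ on every $j$'s term. Small receivers contribute bounded total growth, at most $N\Theta$ overall. Telescoping with $S_T\le S_{T+1}$ gives $S_{T+1}\le C_0+\frac13\,\frac{\bar w}{\underline w}\sum_t H_t$-type bounds, and $\Theta$ is chosen large enough to make the net coefficient below $1$. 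Combined with $\sum_t H_t\le S_{T+1}$, this gives $S_{T+1}\le C_0'$. Since $w_i>0$, every $\rho_{i,T+1}$ is bounded by $C_0'/\underline w$, and Fact~\ref{fact:rvu-intermediate} bounds each local regret by $\rho_{i,T+1}$.
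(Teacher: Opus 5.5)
Your final argument is correct and is essentially the paper's proof. You threshold on the \emph{receiving} learner $i$, so that $1/\rho_{i,t}$ supplies the damping while $1/\rho_{j,t}\le1$ only bounds the other factor; you sum the above-threshold recurrences so that the factor $N$ is absorbed into the threshold; and you close with the weighted-Minty stability bound from Fact~\ref{fact:rvu-intermediate}. The paper's differences are cosmetic: it uses per-learner thresholds $\Theta_i$ and the unweighted potential $\sum_i\rho_{i,T+1}$, converting the weighted stability sum via $w_{\max}/w_{\min}$, where you use the weighted $S_t$.

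One small correction: the below-threshold growth is not bounded by $N\Theta$ alone. You must add the one-step overshoot on each learner's crossing round. This is finite because each $F_i$ is bounded, so $\rho_{i,t+1}-\rho_{i,t}\le\norm{g_{i,t}}_2^2/(2\rho_{i,t})$ is uniformly bounded.
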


\begin{proof}
Let
\[
w_{\min}:=\min_i w_i,
\qquad
w_{\max}:=\max_i w_i,
\]
and write
\[
c_{ij}:=
2(1+\sqrt{k_j})^2(1+\sqrt{k_i})^2L_i^2.
\]
Since $\rho_{j,t}\ge1$ for every learner $j$, choose a finite threshold
$\Theta_i\ge1$ such that
\[
\frac{c_{ij}}{\Theta_i}
\le
\frac{w_{\min}}{4Nw_{\max}}
\qquad\text{for every }j.
\]
Hence, whenever $\rho_{i,t}\ge\Theta_i$, every coefficient in
\eqref{eq:product-growth} satisfies
\[
\frac{c_{ij}}{\rho_{i,t}\rho_{j,t}}
\le
\frac{w_{\min}}{4Nw_{\max}}.
\]
If learner $i$ never reaches $\Theta_i$, its norm is already bounded;
otherwise monotonicity ensures that it remains above $\Theta_i$ after
its first crossing.

Summing the above-threshold recurrences in
\eqref{eq:product-growth} over the $N$ learners and $T$ rounds removes the factor $1/N$ and telescopes. Thus, after
absorbing the below-threshold  contributions into a finite
constant $C$ independent of $T$,
\[
\sum_i \rho_{i,T+1}
\le
C+
\frac{w_{\min}}{4w_{\max}}
\sum_{t=1}^T\sum_i
\frac{\rho_{i,t}}{2k_i}
\norm{x_{i,t}-\tilde x_{i,t}}_2^2
+
\frac{w_{\min}}{4w_{\max}}
\sum_i\rho_{i,T+1},
\]
where the last term telescopes the norm-growth terms in
\eqref{eq:product-growth}.

It remains to control the squared prediction displacements. Since
$x_i^\star$ is a feasible comparator,
\[
\Reg_i(T)
\ge
\sum_{t=1}^T
\ip{F_i(x_t)}{x_i^\star-x_{i,t}}.
\]
Multiplying by $w_i$, summing over $i$, and using the weighted Minty
condition gives
\[
0\le\sum_i w_i\Reg_i(T).
\]
Fact~\ref{fact:rvu-intermediate} therefore yields
\[
\sum_{t=1}^T\sum_i
w_i\frac{\rho_{i,t}}{2k_i}
\norm{x_{i,t}-\tilde x_{i,t}}_2^2
\le
\sum_i w_i\rho_{i,T+1}.
\]
Using $w_i\ge w_{\min}$ on the left and $w_i\le w_{\max}$ on the right,
\begin{equation}
\sum_{t=1}^T\sum_i
\frac{\rho_{i,t}}{2k_i}
\norm{x_{i,t}-\tilde x_{i,t}}_2^2
\le
\frac{w_{\max}}{w_{\min}}
\sum_i\rho_{i,T+1}.
\label{eq:unweighted-stability}
\end{equation}
Substituting above gives
\[
\sum_i\rho_{i,T+1}
\le
C+\frac14\sum_i\rho_{i,T+1}
+\frac14\sum_i\rho_{i,T+1},
\]
and hence
\[
\sum_i\rho_{i,T+1}\le2C.
\]
Thus every local regret-vector norm remains bounded. Finally,
Fact~\ref{fact:rvu-intermediate} gives
\[
\Reg_i(T)\le\rho_{i,T+1}=\mathcal O_T(1)
\]
for every learner $i$.
\end{proof}

% The resulting bounds also imply square summability of the current local
% residuals on the product of simplices. We state this technical
% consequence in Appendix~B and use it below to control the current CFR profile. 
% The $\mathcal O_T(T^{-1})$ average result, by
% contrast, follows from the bilinear sequence-form zero-sum game.

\subsection{Simultaneous IR-PCFR+}
\label{sec:irpcfr}
We now apply Theorem~\ref{thm:weighted-minty} to the CFR operator from Section~\ref{sec:cfr-setup}, which is bounded and Lipschitz on the
behavioral strategy space~\cite{farina2023instability}. Run one IR-PRM+ learner at every information set, each with the unit-norm initialization \eqref{eq:positive-init}. For $t\ge2$, the predictors at information sets $I\in\mathcal I_X$ and $J\in\mathcal I_Y$ are
\[
m_{I,t}^X=F_I^X(x_{t-1},y_{t-1}),\qquad
m_{J,t}^Y=F_J^Y(x_{t-1},y_{t-1}).
\]
The local strategies at all information sets are formed simultaneously. The counterfactual utilities $F_I^X(x_t,y_t)$ and $F_J^Y(x_t,y_t)$ are then evaluated at that behavioral profile, and all local regret vectors are updated in parallel. Set $\bar q_T^X=T^{-1}\sum_{t=1}^T q_t^X$ and $\bar q_T^Y=T^{-1}\sum_{t=1}^T q_t^Y$.

% The counterfactual-utility operator is bounded and Lipschitz on the
% behavioral strategy space~\cite{farina2023instability}.

% For $I\in\mathcal I_X$, define the local counterfactual regret
% \[
% \Reg_I^X(T)
% :=
% \max_{p\in\Delta_{|A(I)|}}
% \sum_{t=1}^T
% \ip{F_I^X(x_t,y_t)}{p-x_{I,t}},
% \]
% and define $\Reg_J^Y(T)$ analogously. Define the sequence-form external regrets
% \begin{align*}
% \Reg_X^{\mathcal Q}(T)
% &:=
% \max_{\widehat q^X\in\mathcal Q_X}
% \sum_{t=1}^T
% \ip{A_{\rm seq}q_t^Y}{\widehat q^X-q_t^X},\\
% \end{align*}
% and define $\Reg_Y^{\mathcal Q}(T)$ analogously. Finally, 

\begin{theorem}[Constant regret for simultaneous IR-PCFR+]
\label{thm:irpcfr}
In any finite two-player zero-sum perfect-recall extensive-form game admitting a full-reach Nash equilibrium, simultaneous IR-PCFR+ with previous-utility prediction at each information set has both players' sequence-form external regrets bounded by finite game-dependent constants independent of $T$. Consequently,
\[
\Gap_{\mathcal Q}(\bar q_T^X,\bar q_T^Y)=\mathcal O_T(T^{-1}).
\]
\end{theorem}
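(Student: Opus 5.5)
The plan is to reduce Theorem~\ref{thm:irpcfr} to Theorem~\ref{thm:weighted-minty} via the CFR decomposition, and then convert bounded local regrets into bounded sequence-form regrets.

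\textbf{Step 1: verify the hypotheses of Theorem~\ref{thm:weighted-minty}.} Take the product space $\mathcal X=\prod_{I\in\mathcal I_X}\Delta_{|A(I)|}\times\prod_{J\in\mathcal I_Y}\Delta_{|A(J)|}$. Let $F$ be the stacked counterfactual-utility operator $(F_I^X)_I,(F_J^Y)_J$. This operator is bounded and Lipschitz on the behavioral strategy space~\cite{farina2023instability}, which gives the constants $L_i$ in \eqref{eq:product-lipschitz}.

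The simultaneous IR-PCFR+ update with predictors $F_I^X(x_{t-1},y_{t-1})$ and $F_J^Y(x_{t-1},y_{t-1})$ is exactly \eqref{eq:product-prediction}, with the unit-norm initialization. By the result of Zhang et al.~\cite{zhang2026}, a full-reach Nash equilibrium $(x^\star,y^\star)$ makes $F$ satisfy the weighted Minty condition \eqref{eq:weighted-minty}. The weights $w_I=q^{X,\star}[p_I]$ and $w_J=q^{Y,\star}[p_J]$ are strictly positive by full reach. Theorem~\ref{thm:weighted-minty} then gives $\Reg_I(T)\le\rho_{I,T+1}\le C$ uniformly in $T$ for every information set.

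\textbf{Step 2: transfer local regrets to sequence-form regret.} I will use the standard CFR decomposition (Appendix~C.1): for any $\widehat q^X\in\mathcal Q_X$ induced by a behavioral $\widehat x$,
\[
\sum_{t=1}^T\ip{A_{\rm seq}q_t^Y}{\widehat q^X-q_t^X}
=\sum_{I\in\mathcal I_X}\widehat q^X[p_I]\sum_{t=1}^T\ip{F_I^X(x_t,y_t)}{\widehat x_I-x_{I,t}}.
\]
Since $\widehat q^X[p_I]\in[0,1]$, the right side is at most $\sum_I\max(\Reg_I(T),0)\le|\mathcal I_X|\,C$.

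The one subtlety is that the weighting uses the comparator's reach rather than $q_t$. I will handle it by checking that the identity is the usual telescoping: expand $\ip{F_I^X}{\widehat x_I}$ recursively and cancel the child terms $\ip{F_K^X}{x_{K,t}}$ against the corresponding pieces of the parent. Every realization plan is induced by some behavioral strategy, with arbitrary choices at unreached sets, so the maximum over $\mathcal Q_X$ is covered. The same argument applies to $Y$.

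\textbf{Step 3: the average gap.} By bilinearity of the zero-sum game,
\[
T\,\Gap_{\mathcal Q}(\bar q_T^X,\bar q_T^Y)=\Reg_X^{\mathcal Q}(T)+\Reg_Y^{\mathcal Q}(T),
\]
which is bounded by a constant, giving the $\mathcal O_T(T^{-1})$ rate.

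I expect the main obstacle to be Step~1's use of Zhang et al.'s weighted Minty result. That statement must be applied to the operator in the exact recursive form \eqref{eq:cfr-counterfactual-X}, with weights indexed so that the product-of-simplices setting matches. Since the paper states this reduction is inherited, the work reduces to bookkeeping: confirming that the Lipschitz constant and positive weights are game-dependent constants independent of $T$.
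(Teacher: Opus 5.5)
Your proposal is correct and follows the paper's proof essentially step for step. You obtain bounded local counterfactual regrets from Theorem~\ref{thm:weighted-minty}, using the full-reach weighted Minty condition and the bounded, Lipschitz counterfactual operator. You then transfer them to sequence-form regret through the summed pointwise CFR decomposition with comparator reach weights in $[0,1]$, and finish with the exact identity $\Reg_X^{\mathcal Q}(T)+\Reg_Y^{\mathcal Q}(T)=T\,\Gap_{\mathcal Q}(\bar q_T^X,\bar q_T^Y)$.
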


Theorem~\ref{thm:irpcfr} uses the same local IR-PRM+ analysis as Theorem~\ref{thm:weighted-minty}; the standard CFR decomposition then transfers the local constant-regret guarantees to the global sequence-form game. The full
proof is given in Appendix~C.2.

The product-space movement argument also controls the current local
counterfactual residuals; Appendix~B.2 makes this precise. The
pointwise CFR decomposition then transfers those local residual bounds to the
current sequence-form duality gap; the full proof of the result below is given
in Appendix~C.3.

Let $\mathcal Z_{\mathcal Q}^\star$ denote the sequence-form Nash equilibrium set.
\begin{theorem}[Current sequence-form convergence]
\label{thm:cfr-current}
In the same setting,
\begin{equation}
\sum_{t=1}^{\infty}
\Gap_{\mathcal Q}(q_t^X,q_t^Y)^2<\infty.
\label{eq:cfr-current-square-sum}
\end{equation}
Consequently,
\[
\Gap_{\mathcal Q}(q_t^X,q_t^Y)\to0,
\qquad
\operatorname{dist}\bigl((q_t^X,q_t^Y),\mathcal Z_{\mathcal Q}^\star\bigr)\to0.
\]
Moreover,
\begin{equation}
\min_{1\le t\le T}
\Gap_{\mathcal Q}(q_t^X,q_t^Y)
=
\mathcal O_T(T^{-1/2}).
\label{eq:cfr-best-current}
\end{equation}
\end{theorem}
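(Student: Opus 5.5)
Once Theorem~\ref{thm:weighted-minty} has bounded all local regret-vector norms, I will show that the current local counterfactual residuals are square-summable. I will then transfer them to the sequence-form gap through the pointwise CFR decomposition.

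\emph{Square-summable movement.} By Theorem~\ref{thm:weighted-minty} there is $\bar\rho<\infty$ with $1\le\rho_{i,t}\le\bar\rho$ for all $i,t$. Inequality~\eqref{eq:unweighted-stability} then gives
\[
\sum_{t}\sum_i\norm{x_{i,t}-\tilde x_{i,t}}_2^2\le 2k_{\max}\frac{w_{\max}}{w_{\min}}\sum_i\rho_{i,T+1}<\infty ,
\]
uniformly in $T$, since $\rho_{i,t}\ge1$. Lemma~\ref{lem:update-change} gives
\[
\sum_t\norm{\tilde x_{i,t+1}-x_{i,t}}_2^2\le2(1+\sqrt{k_i})^2\sum_t(\rho_{i,t+1}-\rho_{i,t})\le 2(1+\sqrt{k_i})^2\bar\rho .
\]
By Corollary~\ref{cor:movement}, $\sum_t\norm{x_{t+1}-x_t}_2^2<\infty$ for the full behavioral profile. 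Lipschitzness~\eqref{eq:product-lipschitz} then gives $\sum_t\norm{u_{i,t}-m_{i,t}}_2^2<\infty$.

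\emph{Local residuals.} Next I bound $\delta_{i,t}:=\max_{p}\ip{F_i(x_t)}{p-x_{i,t}}$. I split it as
\[
\delta_{i,t}\le\max_p\ip{m_{i,t}}{p-x_{i,t}}+2\norm{u_{i,t}-m_{i,t}}_2 .
\]
For the first term I use the predicted-utility residual bound (Lemma~4 in Appendix~A.2, stated there for a single IR-PRM+ learner): with bounded norm, it is at most $C(\bar\rho,k_i)\norm{x_{i,t}-\tilde x_{i,t}}_2$. The mechanism is that $\tilde r_t+m_t-\gamma_t\1=r_t$. The coordinates where $m_{i,t}$ exceeds its $x_{i,t}$-average are exactly those where $r_t$ grew relative to $\tilde r_t$. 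Because the norm of the positive part is preserved, this growth translates into displacement of the normalized vector with constants depending on $\bar\rho$. Squaring and summing gives $\sum_t\delta_{i,t}^2<\infty$ for every learner.

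\emph{Transfer to the sequence form.} The pointwise CFR decomposition (Appendix~C.1) gives, for any $\widehat q^X\in\mathcal Q_X$,
\[
\ip{A_{\rm seq}q_t^Y}{\widehat q^X-q_t^X}=\sum_{I}\widehat q^X[p_I]\,\ip{F_I^X(x_t,y_t)}{\widehat x_I-x_{I,t}}\le\sum_I\delta_{I,t},
\]
using $\widehat q^X[p_I]\in[0,1]$. The same holds for $Y$. Hence $\Gap_{\mathcal Q}(q_t^X,q_t^Y)\le\sum_{i}\delta_{i,t}$, and by $(\sum_i a_i)^2\le N\sum_i a_i^2$ this yields~\eqref{eq:cfr-current-square-sum}.

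\emph{Consequences.} Square summability forces $\Gap_{\mathcal Q}(q_t^X,q_t^Y)\to0$. It also gives
\[
\min_{t\le T}\Gap_{\mathcal Q}^2\le T^{-1}\sum_t\Gap_{\mathcal Q}^2 ,
\]
which is~\eqref{eq:cfr-best-current}. For the distance statement I argue by compactness. $\mathcal Q_X\times\mathcal Q_Y$ is compact and $\Gap_{\mathcal Q}$ is continuous and vanishes exactly on $\mathcal Z_{\mathcal Q}^\star$. If $\operatorname{dist}\ge\epsilon$ along a subsequence, a limit point would lie off $\mathcal Z^\star_{\mathcal Q}$ yet have zero gap, a contradiction.

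\emph{Main obstacle.} I expect the predicted-residual step to be the hardest, that is, applying Lemma~4 uniformly across all information sets. Its constant must depend only on $\bar\rho$ and $k_i$, not on $t$. The shift $\gamma_t$ may zero out coordinates, so I would first handle coordinates that stay positive in $\pos{r_t}$ and then bound the best-response gain on clipped coordinates using $\gamma_t$ together with the $\ell_1$ normalization.
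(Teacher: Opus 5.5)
Your proposal is correct and follows essentially the same route as the paper. Bounded norms together with \eqref{eq:unweighted-stability} and Lemma~\ref{lem:update-change} give square-summable movement; Lemma~\ref{lem:prediction-residual} plus Lipschitzness then give square-summable local residuals (the paper's Corollary~\ref{cor:weighted-current}); and the pointwise CFR decomposition with weights in $[0,1]$ bounds $\Gap_{\mathcal Q}(q_t^X,q_t^Y)$ by their sum. The step you flag as the main obstacle is already fully handled by Lemma~\ref{lem:prediction-residual}: its constant $2\sqrt{2k_i}\,\rho_{i,t}\le 2\sqrt{2k_i}\,\bar\rho$ is uniform in $t$, and clipped coordinates enter only through a nonpositive term that its proof simply drops, so you need no separate treatment of $\gamma_t$.
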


\section{Alternating Updates}
\label{sec:alternating}

We next consider alternating IR-PRM+. Set $y_0=\tilde y_1$ and form $x_1$ using zero initial prediction. For each $t\ge1$, after $x_t$ has been formed, the column player observes $-A^\top x_t$ against $y_{t-1}$, updates its stored regret vector, and reuses this utility as the prediction producing $y_t$. The row player then observes $Ay_t$ against $x_t$, updates its stored regret vector, and reuses this utility as the prediction producing $x_{t+1}$. Thus each alternating round passes through the intermediate profile $(x_t,y_{t-1})$ before producing the full profile $(x_t,y_t)$.

% For each $t\ge1$, the row player
% updates first, producing $x_t$ while the column strategy remains
% $y_{t-1}$. The column player then updates, producing $y_t$ in response
% to $x_t$. Thus the two updates pass through the intermediate profile
% $(x_t,y_{t-1})$ and produce the full profile $(x_t,y_t)$.

This change in update order introduces two differences from the
simultaneous analysis. First, the two regret-vector growth bounds are
staggered across the row and column updates. Second, the column player's
directly controlled regret is evaluated along the intermediate profiles
$(x_t,y_{t-1})$, whereas its external regret in Theorem~\ref{thm:alt-matrix} is measured along the full
profiles $(x_t,y_t)$. The analysis below handles both effects while
retaining the guarantees of the simultaneous case.

\begin{theorem}[Constant regret with alternation]
\label{thm:alt-matrix}
In any finite two-player zero-sum matrix game, alternating IR-PRM+ with
previous-utility prediction has both players' external regrets bounded by game-dependent
constants independent of $T$. Consequently,
\[
\Gap(\bar x_T,\bar y_T)=\mathcal O_T(T^{-1}).
\]
\end{theorem}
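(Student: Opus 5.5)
The plan is to mirror the proof of Theorem~\ref{thm:main}, with two changes: the norm-growth recurrences are staggered, and the column player's regret along the full profiles is compared with its regret along the intermediate profiles. First I fix the indexing. The column learner's $t$-th update uses utility $u_t^Y=-A^\top x_t$ and prediction $m_t^Y=-A^\top x_{t-1}$, and it plays $y_{t-1}$ (in its own counter, round $t$ plays the previous strategy). The row learner's round-$t$ update uses $u_t^X=Ay_t$ and $m_t^X=Ay_{t-1}$, evaluated at $x_t$. Hence the prediction errors are again $A(y_t-y_{t-1})$ and $-A^\top(x_t-x_{t-1})$. The difference from the simultaneous case is that each error involves the opponent movement that occurs within the same half-round.

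Applying \eqref{eq:growth}, \eqref{eq:center-bound} and Corollary~\ref{cor:movement} exactly as in Lemma~\ref{lem:norm-growth}, I expect staggered recurrences of the form
\[
\rho^X_{t+1}-\rho^X_t\le \frac{\kappa_A}{\rho^X_t\rho^Y_{t}}\Bigl[\tfrac{\rho^Y_t}{2k^Y}\norm{y_t-\tilde y_t}_2^2+\rho^Y_t-\rho^Y_{t-1}\Bigr],
\]
and symmetrically for $Y$ with $x_t,\tilde x_t$ and an $X$-norm indexed one half-step earlier. The norm in the denominator may be $\rho^X_{t+1}$ or $\rho^X_t$ depending on the convention, and monotonicity lets me replace it by the smaller one. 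Both products of norms are nondecreasing and at least $1$, so the same threshold~\eqref{eq:threshold} makes all coefficients at most $1/4$ after a finite time $\tau$. Summing and telescoping as before gives $S_{T+1}\le C_0+\tfrac13\sum_t H_t$, where $H_t$ collects the two weighted prediction displacements, shifted by at most one index, with any boundary terms absorbed into $C_0$.

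The main obstacle is the analogue of \eqref{eq:zero-sum-identity}. Fact~\ref{fact:rvu-intermediate} applied to each learner gives
\[
\Reg_X(T)\le\rho^X_{T+1}-\sum_t\tfrac{\rho^X_t}{2k^X}\norm{x_t-\tilde x_t}^2
\]
along the profiles $(x_t,y_t)$, and
\[
\widetilde{\Reg}_Y(T)\le\rho^Y_{T+1}-\sum_t\tfrac{\rho^Y_t}{2k^Y}\norm{y_{t-1}-\tilde y_{t-1}}^2
\]
along $(x_t,y_{t-1})$. To get nonnegativity I compare the two $Y$-regrets. For any comparator $q$,
\[
\sum_t\ip{-A^\top x_t}{q-y_t}=\sum_t\ip{-A^\top x_t}{q-y_{t-1}}+\sum_t x_t^\top A(y_t-y_{t-1}).
\]
I then write $x_t=x_{t-1}+(x_t-x_{t-1})$ in the last sum. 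The part $\sum_t x_{t-1}^\top A(y_t-y_{t-1})$ does not telescope directly, so I sum by parts:
\[
\sum_t x_t^\top A(y_t-y_{t-1})=x_T^\top Ay_T-x_1^\top Ay_0-\sum_t (x_{t}-x_{t-1})^\top Ay_{t-1}.
\]
This is not small by itself, so I try instead to bound the total correction by a cross term $\|A\|_2\sum_t\norm{x_t-x_{t-1}}\norm{y_t-y_{t-1}}$ plus constants. The approach would be to pair the row player's regret against its own intermediate-profile counterpart as well, so that the two bilinear corrections cancel up to $\sum_t (x_{t+1}-x_t)^\top A(y_t-y_{t-1})$-type terms. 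By Young's inequality and Corollary~\ref{cor:movement} these are at most
\[
\epsilon\sum_t H_t+\frac{C}{\min\rho}\,(S_{T+1}-S_1),
\]
and the extra factor coming from the coefficient $1/(\rho^X\rho^Y)$ in the movement bound makes them absorbable once the threshold holds. This gives $0\le T\,\Gap(\bar x_T,\bar y_T)\le \Reg_X+\Reg_Y\le S_{T+1}-\tfrac12\sum_t H_t+C_1+\tfrac14 S_{T+1}$, possibly after enlarging the threshold. Hence $\sum_t H_t\le c\,S_{T+1}+C_1$ with $c$ small enough that substituting into $S_{T+1}\le C_0+\tfrac13\sum_t H_t$ closes, giving $S_{T+1}=\mathcal O_T(1)$.

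Finally, boundedness of $\rho$ gives $\widetilde{\Reg}_Y(T)$ and $\Reg_X(T)$ bounded. Since $\sum_t\norm{y_t-y_{t-1}}^2\le 2\sum_t H_t+C\sum_t(\rho_{t+1}-\rho_t)<\infty$ by Corollary~\ref{cor:movement}, the same holds for $x$, and the difference $\Reg_Y-\widetilde{\Reg}_Y$ is controlled by the summation-by-parts estimate above. This yields the constant bound on both external regrets, and $\Gap(\bar x_T,\bar y_T)\le(\Reg_X+\Reg_Y)/T=\mathcal O_T(T^{-1})$. I expect the delicate point to be obtaining a correction term that is genuinely second order in the movements rather than first order. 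If summation by parts leaves only first-order terms, I would fall back on measuring the $Y$ regret along intermediate profiles and using the zero-sum identity for the interleaved sequence of profiles $(x_t,y_{t-1}),(x_t,y_t)$.
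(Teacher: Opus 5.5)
Your staggered recurrences and threshold step match the paper's up to indexing. The step you flag as delicate, however, is a genuine gap, and the repair you propose cannot work. Everything hinges on the payoff shift $D_T:=\sum_{t=1}^T x_t^\top A(y_t-y_{t-1})$. Exactly, $\Reg_X(T)+\widetilde{\Reg}_Y(T)=T\,\Gap(\bar x_T,\bar y_T)-D_T$ and $\Reg_Y(T)=\widetilde{\Reg}_Y(T)+D_T$, so both the stability estimate and the final transfer to full-profile regret need an upper bound on $D_T$.

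Summation by parts and pairing with a shifted or intermediate-profile row regret are identities valid for arbitrary sequences. They either reproduce $D_T$ or exchange it for the row player's cumulative half-step payoff change $\sum_t (x_{t+1}-x_t)^\top Ay_t$; the two add up to the boundary term $x_{T+1}^\top Ay_T-x_1^\top Ay_0$. No second correction appears that could cancel $D_T$ down to second order. Consider arbitrary sequences with small steps, e.g.\ slow rotation about the equilibrium of matching pennies. There $D_T$ is first order in the steps and can have either sign, while the cross terms $\norm{A}_2\sum_t\norm{x_t-x_{t-1}}_2\norm{y_t-y_{t-1}}_2$ are smaller by a factor of the step size. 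So no bound of the kind you want can come from algebra alone. Square summability of the movements does not control a first-order sum either. Your fallback, the interleaved zero-sum identity, is the same identity with the same unsigned $D_T$.

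What is missing is a \emph{signed} property of the update itself. The strategy $y_t$ is formed from the updated regret vector using the just-observed utility $-A^\top x_t$ as its prediction, so moving from $y_{t-1}$ to $y_t$ cannot lose much against $x_t$. The paper's Lemma~\ref{lem:alt-reused-utility} makes this precise: $\ip{-A^\top x_t}{y_t-y_{t-1}}\ge-(\rho^Y_{t+1}-\rho^Y_t)$. Its proof has two parts:
\begin{itemize}
\item The shift $\gamma$ of the new prediction step is compared with the old payoff, using $\ip{\tilde r_{t+1}}{\pos{r_t}}\ge\rho_t^2$ from \eqref{eq:inner-lower} together with Cauchy--Schwarz.
\item The new strategy earns at least $\gamma$, by the disjoint-support argument and norm preservation.
\end{itemize}
The resulting bound is first order but one-sided, so it telescopes and gives $D_T\le\rho^Y_{T+1}$. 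Hence $\Reg_X+\widetilde{\Reg}_Y\ge-\rho^Y_{T+1}$, so the stability sum is at most $\rho^X_{T+1}+2\rho^Y_{T+1}$. That still closes your recursion with the factor $1/3$, and it also gives $\Reg_Y\le\widetilde{\Reg}_Y+\rho^Y_{T+1}=\mathcal O_T(1)$.

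You should also fix the indexing. The column move $y_{t-1}\to y_t$ is governed by $\tilde y_{t+1}$ and $\rho^Y_{t+1}-\rho^Y_t$, and the column stability term is $\norm{y_{t-1}-\tilde y_t}_2^2$. With this indexing, summing the row recurrence to $T$ leaves a terminal term $\frac{\rho^Y_{T+1}}{2k^Y}\norm{y_T-\tilde y_{T+1}}_2^2$. This is not a $T$-independent constant, so it cannot go into $C_0$. It must be bounded by $\frac12\rho^Y_{T+1}$ and absorbed on the left.
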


\paragraph{Proof idea.}
The alternating updates lead to a staggered pair of norm-growth
recurrences. The column recurrence contains the reciprocal factor
$1/(\rho_t^X\rho_t^Y)$, while the row recurrence contains
$1/(\rho_t^X\rho_{t+1}^Y)$. Since $\rho_t^Y$ is nondecreasing, the row coefficient is no larger than the column coefficient. The same threshold used in the simultaneous analysis therefore controls both, and summing
them in alternating order gives bounded regret-vector norms.

For the column player, this argument initially gives bounded regret on
the intermediate profiles $(x_t,y_{t-1})$. Replacing $y_{t-1}$ by $y_t$
changes the cumulative regret by an amount controlled by
$\rho_{t+1}^Y-\rho_t^Y$ at each index $t$. These increments telescope, so
boundedness of $\rho_t^Y$ yields bounded full-profile regret as well. The
complete argument is given in Appendix~D.1.

The same bounds imply square summability of the consecutive strategy movements needed
to compare consecutive predictions, which gives convergence of the
current profiles. The full argument is given in Appendix~D.2.

\begin{theorem}[Current-profile convergence]
\label{thm:alt-matrix-current}
In the same setting,
\begin{equation}
\sum_{t=1}^{\infty}\Gap(x_t,y_t)^2<\infty.
\label{eq:alt-matrix-gap-square-sum}
\end{equation}
Consequently,
\[
\Gap(x_t,y_t)\to0,
\qquad
\operatorname{dist}\bigl((x_t,y_t),\mathcal Z^\star\bigr)\to0.
\]
Moreover,
\begin{equation}
\min_{1\le t\le T}\Gap(x_t,y_t)=\mathcal O_T(T^{-1/2}).
\label{eq:alt-matrix-best-current}
\end{equation}
\end{theorem}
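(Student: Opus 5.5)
The plan is to reduce the current-gap bound to the square-summable quantities already produced by Theorem~\ref{thm:alt-matrix} and its proof. From that theorem the norms $\rho_t^X,\rho_t^Y$ are bounded by some $\bar\rho<\infty$. Combining the two players' Fact~\ref{fact:rvu-intermediate} inequalities with zero-sum nonnegativity, as in \eqref{eq:zero-sum-identity}, gives $\sum_t \rho_t\norm{x_t-\tilde x_t}_2^2\le 2k^X\,\mathcal O_T(1)$. Since $\rho_t\ge1$, this yields $\sum_t\norm{x_t-\tilde x_t}_2^2<\infty$, and likewise for $Y$. For $Y$ the bound first holds along the intermediate-profile regret, and we then use the telescoping correction from the proof of Theorem~\ref{thm:alt-matrix}.

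Lemma~\ref{lem:update-change} gives $\norm{\tilde x_{t+1}-x_t}_2^2\le C(\rho_{t+1}-\rho_t)$ because $\rho_{t+1}\ge1$. These increments telescope to at most $\bar\rho$. Corollary~\ref{cor:movement} then gives $\sum_t\norm{x_{t+1}-x_t}_2^2<\infty$ and $\sum_t\norm{y_{t+1}-y_t}_2^2<\infty$.

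Next I would invoke the predicted-utility residual bound (Lemma 4 in Appendix~A.2). With bounded $\rho_t$, it gives $\max_p\ip{m_t}{p-x_t}\le C'\norm{x_t-\tilde x_t}_2$ for each learner. Under alternation the row player's prediction for $x_{t+1}$ is $m^X_{t+1}=Ay_t$, so $\max_p\ip{Ay_t}{p-x_{t+1}}\le C'\norm{x_{t+1}-\tilde x_{t+1}}_2$. Switching from $x_{t+1}$ to $x_t$ costs at most $\norm{A}\norm{x_{t+1}-x_t}$ in the bilinear value, so the row best-response residual at $(x_t,y_t)$ is at most
\[
C'\norm{x_{t+1}-\tilde x_{t+1}}_2+\norm{A}_2\norm{x_{t+1}-x_t}_2 .
\]
The column player's prediction for $y_t$ is $-A^\top x_t$, which is exactly the current utility at the full profile $(x_t,y_t)$. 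Its residual is therefore at most $C'\norm{y_t-\tilde y_t}_2$, with no movement correction. Summing the two residuals gives $\Gap(x_t,y_t)$. Squaring, using $(a+b+c)^2\le3(a^2+b^2+c^2)$, and applying the square-summability facts above yields \eqref{eq:alt-matrix-gap-square-sum}.

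The consequences then follow routinely. Square summability forces $\Gap(x_t,y_t)\to0$. If $\operatorname{dist}((x_t,y_t),\mathcal Z^\star)\not\to0$, compactness gives a subsequence converging to a profile with positive distance from $\mathcal Z^\star$. Continuity of $\Gap$ would make that limit have zero gap, which means it is a Nash equilibrium, a contradiction. Finally,
\[
T\min_{t\le T}\Gap(x_t,y_t)^2\le\sum_t\Gap(x_t,y_t)^2<\infty,
\]
which gives \eqref{eq:alt-matrix-best-current}.

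The main obstacle I expect is getting the indices right under alternation. The row player's predictive step producing $x_{t+1}$ uses $\tilde x_{t+1}$ and the prediction $Ay_t$. I must check that Lemma 4's hypothesis applies to that step, with $\tilde r_{t+1}$ updated from the utility $Ay_t$ against $x_t$. I also need Fact~\ref{fact:rvu-intermediate}'s stability sum to be taken along the intermediate profiles for $Y$, and to remain bounded after the telescoping correction. If the index bookkeeping breaks, the fallback is to bound the residual at the intermediate profile $(x_{t+1},y_t)$ instead and transfer it to $(x_t,y_t)$ via the square-summable movement $\norm{x_{t+1}-x_t}_2$.
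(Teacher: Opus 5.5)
Your proposal is correct and follows the paper's overall structure: bounded norms together with the stability estimate \eqref{eq:alt-stability}, then Lemma~\ref{lem:update-change} for the post-update displacements, then Lemma~\ref{lem:prediction-residual} plus a movement correction, then compactness.

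The one genuine difference is the row residual. The paper applies Lemma~\ref{lem:prediction-residual} to the step that produced $x_t$ from the stale prediction $Ay_{t-1}$. It then pays $\sqrt2\norm{A}_2\norm{y_t-y_{t-1}}_2$ for the opponent's movement. You instead apply the lemma to the next step, whose prediction $Ay_t$ is exactly the current row utility at $(x_t,y_t)$, and pay $\norm{A}_2\norm{x_{t+1}-x_t}_2$ for your own movement. This is valid: the lemma only concerns one predictive step from a stored vector of norm $\rho^X_{t+1}$, wherever that vector came from.

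Your route buys three things. It treats the two players symmetrically, since each residual is read off a step whose prediction equals the full-profile utility. It holds already at $t=1$, so no separate boundary term is needed. It also requires square summability of the row movements only, whereas the paper's route requires only the column movements. Both movement sums are finite, so either route works.

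One bookkeeping point. In the paper's half-step indexing, $y_t$ is produced from $\tilde y_{t+1}$ with norm $\rho^Y_{t+1}$. Your column bound should therefore read $2\sqrt{2k^Y}\rho^Y_{t+1}\norm{y_t-\tilde y_{t+1}}_2$, as in \eqref{eq:alt-current-y}. That is the reindexed column stability term, which \eqref{eq:alt-stability} makes square summable.
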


\subsection{Alternating IR-PCFR+}
The CFR decomposition itself is unchanged, but alternation staggers the local counterfactual utilities. Define the intermediate and full behavioral profiles by
\[
h_t=(x_t,y_{t-1}),
\qquad
z_t=(x_t,y_t).
\]
At each column-player information set $J$, the learner observes $F_J^Y(h_t)$ against $y_{J,t-1}$, updates its stored regret vector, and reuses this utility as the prediction producing $y_{J,t}$. This forms the full profile $z_t$. Each row-player learner at information set $I$ then observes $F_I^X(z_t)$ against $x_{I,t}$, updates its stored regret vector, and reuses this utility as the prediction producing $x_{I,t+1}$.

Thus the row-player local regret is controlled along the full profiles $z_t$, whereas the column-player local regret is first controlled along the intermediate profiles $h_t$. The remaining step is to handle this staggered weighted comparison and transfer the column half-step bounds to sequence-form regret along the full profiles.

\begin{theorem}[Constant regret for alternating IR-PCFR+]
\label{thm:alt-cfr}
In any finite two-player zero-sum perfect-recall extensive-form game admitting a full-reach Nash equilibrium, alternating IR-PCFR+ with previous-utility prediction at each information set has both players' sequence-form external regrets bounded by finite game-dependent constants independent of $T$. Consequently,
\[
\Gap_{\mathcal Q}(\bar q_T^X,\bar q_T^Y)=\mathcal O_T(T^{-1}).
\]
\end{theorem}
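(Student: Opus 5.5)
We will combine the product-of-simplices machinery behind Theorem~\ref{thm:weighted-minty} with the staggered bookkeeping from the alternating matrix analysis (Theorem~\ref{thm:alt-matrix}). The aim is to show that every local regret-vector norm stays bounded. We then transfer the local bounds to sequence-form regret along the full profiles $z_t$ through the standard CFR decomposition of Appendix~C.

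\textbf{Step 1: staggered norm-growth recurrences.} For a column learner $J$ at its round-$t$ update, the prediction error is $F_J^Y(h_t)-F_J^Y(h_{t-1})$. By Lipschitzness \eqref{eq:product-lipschitz}, this is bounded by $L_J$ times the movement $\norm{h_t-h_{t-1}}_2$, and that movement involves $x_t-x_{t-1}$ and $y_{t-1}-y_{t-2}$. For a row learner $I$, the error is $F_I^X(z_t)-F_I^X(z_{t-1})$, whose movement involves $x_t-x_{t-1}$ and $y_t-y_{t-1}$.

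We split each local movement with Corollary~\ref{cor:movement} and feed the result into \eqref{eq:growth}, exactly as in Lemma~\ref{lem:product-growth}. This yields recurrences of the same form. The coefficients are $c_{ij}/(\rho_{i,\cdot}\rho_{j,\cdot})$, where the norm indices are shifted by at most one update according to which learner moved most recently.

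Since all norms are nondecreasing and at least one, each shifted coefficient is dominated by the coefficient evaluated at the earlier index. Hence the per-learner thresholds $\Theta_i$ chosen in the proof of Theorem~\ref{thm:weighted-minty} (now with $2N$ replacing $N$ to absorb the extra cross terms) again make every above-threshold coefficient at most $w_{\min}/(8Nw_{\max})$. Summing over learners and the interleaved update order telescopes the $(\rho_{j,t}-\rho_{j,t-1})$ terms. This gives
\[
\sum_i\rho_{i,T+1}\le C+\tfrac14\tfrac{w_{\min}}{w_{\max}}\sum_t\sum_i\tfrac{\rho_{i,t}}{2k_i}\norm{x_{i,t}-\tilde x_{i,t}}_2^2+\tfrac14\sum_i\rho_{i,T+1}.
\]

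\textbf{Step 2: weighted Minty with staggered evaluation points (main obstacle).} Fact~\ref{fact:rvu-intermediate} bounds row regret along $z_t$ but column regret along $h_t$, so the weighted Minty condition \eqref{eq:weighted-minty} cannot be applied to a single profile directly. The plan is to rewrite each column term
\[
\ip{F_J^Y(h_t)}{y^\star_J-y_{J,t-1}}
\]
as
\[
\ip{F_J^Y(z_t)}{y^\star_J-y_{J,t}}
\]
plus an error. That error is at most $G\norm{y_{J,t}-y_{J,t-1}}_2+L_J\norm{y_t-y_{t-1}}_2$, with $G$ bounding $F$.

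A linear error of this kind does not sum to something controlled. We therefore instead mirror the matrix argument of Appendix~D.1. On the local-strategy term, the displacement from $y_{J,t-1}$ to $y_{J,t}$ is dominated by the regret-vector increment $\rho_{J,t+1}-\rho_{J,t}$, since the inner-product structure in \eqref{eq:inner-lower} gives a first-order (not square-root) bound on the regret change. These increments telescope. The remaining Lipschitz cross term we bound by Young's inequality, splitting it into the squared movements of Step~1 times a small constant plus the same telescoping increments. With those absorbed, $\sum_i w_i\Reg_i\ge -C'-\epsilon\sum_i\rho_{i,T+1}-\epsilon\sum H$ along $z_t$. This recovers \eqref{eq:unweighted-stability} up to constants, and substituting it into Step~1 closes $\sum_i\rho_{i,T+1}\le C''$ exactly as before.

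\textbf{Step 3: transfer to sequence form.} Bounded norms give bounded local row regrets along $z_t$ and bounded column regrets along $h_t$. The telescoping replacement of Step~2, applied with arbitrary comparators $p$ in place of $y_J^\star$, converts the column regrets to regrets along $z_t$ at an additive cost of $O(\rho_{J,T+1})$. The CFR decomposition of Appendix~C.2 (sequence-form regret is at most the sum of local counterfactual regrets, each with weight at most one) then bounds both players' sequence-form external regrets. Finally, the zero-sum identity $T\,\Gap_{\mathcal Q}(\bar q^X_T,\bar q^Y_T)=\Reg^{\mathcal Q}_X+\Reg^{\mathcal Q}_Y$ gives the $\mathcal O_T(T^{-1})$ rate. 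Step~2 is where we expect the most care to be needed, specifically in making the staggered Minty comparison cost only telescoping or absorbable terms.
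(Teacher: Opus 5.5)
Your Step~2 does not close. The failure is exactly at the term you propose to handle with Young's inequality. Per column information set,
\[
\ip{F_J^Y(h_t)}{y_J^\star-y_{J,t-1}}
=\ip{F_J^Y(z_t)}{y_J^\star-y_{J,t}}
+\ip{F_J^Y(h_t)}{y_{J,t}-y_{J,t-1}}
+\ip{F_J^Y(h_t)-F_J^Y(z_t)}{y_J^\star-y_{J,t}} .
\]
The middle term is fine. Lemma~\ref{lem:alt-reused-utility} bounds it below by $-(\rho^Y_{J,t+1}-\rho^Y_{J,t})$; note that this is a one-sided bound on the signed payoff change along the reused utility, not a bound on the displacement itself.

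The last term is generally nonzero in CFR, because $F_J^Y$ depends on the column player's own strategies at descendant information sets. This is unlike $-A^\top x_t$ in the matrix case, which is why Appendix~D.1 never meets it. The term is first order in $\norm{y_t-y_{t-1}}_2$, multiplied by $\norm{y_J^\star-y_{J,t}}_2$, which is $O(1)$ and need not shrink. Young's inequality therefore leaves a per-round constant of order $1/\epsilon$, hence $\Theta(T/\epsilon)$ after summation. Nothing available makes $\sum_t\norm{y_t-y_{t-1}}_2$ bounded. Corollary~\ref{cor:movement} controls only squared movement, so linear movement is controlled by quantities like $\sqrt{\rho_{t+1}-\rho_t}$, which are not summable. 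Even square-summable movement gives only $O(\sqrt T)$ by Cauchy--Schwarz. The same remainder reappears in Step~3, where you convert column regrets from $h_t$ to $z_t$ locally with an arbitrary comparator $p$.

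The missing idea is to do this conversion at the sequence-form level rather than one information set at a time, using the comparator's own realization weights. Apply \eqref{eq:cfr-decomposition-Y} at $h_t$ with comparator $y_t$, whose parent-sequence weights are $q_t^Y[p_J]\in[0,1]$. This gives the exact identity
\[
U(q_t^X,q_{t-1}^Y)-U(q_t^X,q_t^Y)=\sum_J q_t^Y[p_J]\ip{F_J^Y(h_t)}{y_{J,t}-y_{J,t-1}}\ge-\sum_J\bigl(\rho^Y_{J,t+1}-\rho^Y_{J,t}\bigr),
\]
with no Lipschitz remainder. In aggregate, your weighted remainders equal $\sum_J(q_t^Y[p_J]-q^{Y,\star}[p_J])\ip{F_J^Y(h_t)}{y_{J,t}-y_{J,t-1}}$. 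That expression has no sign, and it becomes controllable only after recombining with the middle terms as above.

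You also need no Minty inequality at $z_t$. Compare row deviations at $z_t$ and column deviations at $h_t$ directly with the equilibrium weights. Their sum is $[U(q^{X,\star},q_t^Y)-U(q_t^X,q_t^Y)]+[U(q_t^X,q_{t-1}^Y)-U(q_t^X,q^{Y,\star})]$. Dropping the saddle-point term $U(q^{X,\star},q_t^Y)-U(q_t^X,q^{Y,\star})\ge0$ leaves exactly the payoff shift above, which telescopes.

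For Step~3, use the same idea: half-step and full-profile sequence-form column regrets differ by the comparator-independent shift
\[
D_T=\sum_t\bigl[U(q_t^X,q_{t-1}^Y)-U(q_t^X,q_t^Y)\bigr]\ge-\sum_J\rho^Y_{J,T+1}.
\]

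One adjustment follows. This lower bound enters with coefficient one, not $\epsilon$, so the stability sum is bounded by $\frac{w_{\max}+1}{w_{\min}}\sum_i\rho_{i,T+1}$ rather than $\frac{w_{\max}}{w_{\min}}\sum_i\rho_{i,T+1}$. Your Step~1 thresholds must be tuned against that constant, which is harmless since they are free.
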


\paragraph{Proof idea.}
The staggered matrix-game argument is applied to the local counterfactual-utility learners. The full-reach equilibrium weights compare row-player deviations at $z_t$ with column-player deviations at $h_t$, so an additional term appears from changing the column strategy from $y_{t-1}$ to $y_t$. For each column learner, this payoff change is bounded below by the negative increase in its regret-vector norm. Summing over time therefore telescopes, allowing the staggered weighted comparison to yield bounded local regret. The standard CFR decomposition transfers the row bounds directly to full-profile sequence-form regret, while the same payoff-shift bound converts the column half-step regret to full-profile regret. Details are in Appendix~E.1.

The same local bounds also yield control of the current full profiles; the
proof is given in Appendix~E.2.

\begin{theorem}[Current sequence-form convergence]
\label{thm:alt-cfr-current}
In the same setting,
\begin{equation}
\sum_{t=1}^{\infty}\Gap_{\mathcal Q}(q_t^X,q_t^Y)^2<\infty.
\label{eq:alt-cfr-current-square-sum}
\end{equation}
Consequently,
\[
\Gap_{\mathcal Q}(q_t^X,q_t^Y)\to0,
\qquad
\operatorname{dist}\bigl((q_t^X,q_t^Y),\mathcal Z_{\mathcal Q}^\star\bigr)\to0.
\]
Moreover,
\begin{equation}
\min_{1\le t\le T}
\Gap_{\mathcal Q}(q_t^X,q_t^Y)
=
\mathcal O_T(T^{-1/2}).
\label{eq:alt-cfr-best-current}
\end{equation}
\end{theorem}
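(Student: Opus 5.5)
We prove Theorem~\ref{thm:alt-cfr-current} by combining the alternating local bounds behind Theorem~\ref{thm:alt-cfr} with the residual argument behind Theorem~\ref{thm:cfr-current}. The starting point is that the proof of Theorem~\ref{thm:alt-cfr} gives three facts. First, every local regret-vector norm $\rho_{I,t}^X$ and $\rho_{J,t}^Y$ stays bounded by a constant $\bar\rho$. Second, telescoping the norm growth, $\sum_t(\rho_{i,t+1}-\rho_{i,t})\le\bar\rho$ for every learner. Third, by Fact~\ref{fact:rvu-intermediate} together with the staggered weighted Minty comparison, $\sum_t\rho_{i,t}\norm{x_{i,t}-\tilde x_{i,t}}_2^2<\infty$, and since $\rho_{i,t}\ge1$, also $\sum_t\norm{x_{i,t}-\tilde x_{i,t}}_2^2<\infty$. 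Lemma~\ref{lem:update-change} combined with $\rho_{i,t+1}\ge1$ gives $\sum_t\norm{\tilde x_{i,t+1}-x_{i,t}}_2^2\le 2(1+\sqrt{k_i})^2\bar\rho$. Corollary~\ref{cor:movement} then makes all local consecutive movements square summable, i.e.\ $\sum_t\norm{x_{t+1}-x_t}_2^2+\norm{y_{t+1}-y_t}_2^2<\infty$.

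The next step is a local residual bound at the full profile $z_t$.
\begin{itemize}
\item For a row learner $I$, the prediction used to form $x_{I,t}$ is $F_I^X(z_{t-1})$. Lemma~4 (Appendix~A.2), with bounded $\rho$, gives $\max_p\ip{F_I^X(z_{t-1})}{p-x_{I,t}}\le C\norm{x_{I,t}-\tilde x_{I,t}}_2$. Lipschitzness of $F^X$ lets us swap $z_{t-1}$ for $z_t$ at cost $L\norm{z_t-z_{t-1}}_2$.
\item For a column learner $J$, the prediction is $F_J^Y(h_t)$. The residual at $h_t$ is bounded by $C\norm{y_{J,t}-\tilde y_{J,t}}_2$, and passing from $h_t$ to $z_t$ costs $L\norm{y_t-y_{t-1}}_2$.
\end{itemize}
Hence every local residual at $z_t$ is bounded by a square-summable sequence $\epsilon_t$.

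Then apply the pointwise CFR decomposition (Appendix~C.1). The sequence-form best-response value of each player equals a reach-weighted sum of local counterfactual improvements, with weights at most one. This gives $\Gap_{\mathcal Q}(q_t^X,q_t^Y)\le C'\epsilon_t$ and therefore \eqref{eq:alt-cfr-current-square-sum}. From this, $\Gap_{\mathcal Q}(q_t^X,q_t^Y)\to0$. Because $\Gap_{\mathcal Q}$ is continuous and vanishes exactly on $\mathcal Z_{\mathcal Q}^\star$, compactness of $\mathcal Q_X\times\mathcal Q_Y$ gives the distance convergence. Finally, $T\min_{t\le T}\Gap_{\mathcal Q}^2\le\sum_t\Gap_{\mathcal Q}^2$ yields \eqref{eq:alt-cfr-best-current}.

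The main obstacle is the column player's staggered prediction. Its residual is naturally controlled at $h_t$ against $y_{J,t}$ after prediction, so the step to $z_t$ must be handled carefully. The same issue arises in the decomposition, where local improvements must be aggregated under the realization plans induced at $z_t$. The approach is to route everything through Lipschitzness in the full behavioral profile and the square-summable movement bounds above.
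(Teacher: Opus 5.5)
Your proposal is correct and follows the paper's proof essentially step for step: square-summable prediction and post-update displacements, local residual bounds via Lemma~\ref{lem:prediction-residual} plus Lipschitzness (paying $\norm{z_t-z_{t-1}}_2$ for row learners and $\norm{z_t-h_t}_2$ for column learners), and then the pointwise CFR decomposition with weights at most one. The only discrepancy is notational. Under the paper's half-step indexing, the stored column strategy preceding $y_{J,t}$ is $\tilde y_{J,t+1}$, with norm $\rho^Y_{J,t+1}$. So your $\norm{y_{J,t}-\tilde y_{J,t}}_2$ should read $\norm{y_{J,t}-\tilde y_{J,t+1}}_2$, which is exactly the reindexed term in the stability sum and is square summable.
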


\section{Conclusion and Future Work}
We showed that previous-utility prediction lets IR-PRM+ recover the constant-regret and convergence guarantees of extra-gradient IREG-PRM+ under both simultaneous and alternating updates, with corresponding guarantees for IR-PCFR+. The key step is a post-update displacement bound linking strategy movement to regret-vector growth. We do not repeat an empirical evaluation, since the practical performance of these IR-PRM+ variants has already been evaluated by Zhang et al.~\cite{zhang2026}; our contribution is to provide the missing theoretical guarantees. Future work is to understand how far these guarantees extend beyond the zero-sum and weighted-Minty settings considered here, and whether the full-reach assumption in the CFR analysis can be weakened.

\appendix
\numberwithin{equation}{section}

\section{Proofs for the Simultaneous Matrix-Game Analysis}
\label{app:sim-matrix-proofs}

\subsection{Proof of Fact~\ref{fact:rvu-intermediate}}
\begin{proof}
The IR-PRM+ RVU analysis yields the following bound before the negative
prediction displacement term is discarded. Let $R_T$ denote the cumulative regret vector. The proof of Zhang et
al.~\cite{zhang2026}, before eliminating the terminal regret-vector norm,
gives the coordinatewise inequality
\[
R_T\le \tilde r_{T+1}-
\left(\sum_{t=1}^T\frac{\rho_t}{2k}\norm{x_t-\tilde x_t}_2^2\right)\1.
\]
External regret is the largest coordinate of $R_T$. Since the quantity in
parentheses is the same in every coordinate,
\begin{align*}
\Reg(T)
&=\max_a R_{T,a}\\
&\le \norm{\tilde r_{T+1}}_\infty
-
\sum_{t=1}^T\frac{\rho_t}{2k}\norm{x_t-\tilde x_t}_2^2\\
&\le \rho_{T+1}
-
\sum_{t=1}^T\frac{\rho_t}{2k}\norm{x_t-\tilde x_t}_2^2,
\end{align*}
where the last step uses
$\norm{\tilde r_{T+1}}_\infty\le\norm{\tilde r_{T+1}}_2=\rho_{T+1}$.
This is \eqref{eq:rvu-intermediate}.
\end{proof}

\subsection{Predicted-utility residual bound}
\label{app:prediction-residual}
The following auxiliary lemma converts a small prediction displacement into approximate optimality for the predicted utility. It is used only for the current-profile results, not for the constant-regret proof.

\begin{lemma}[Predicted-utility residual bound]
\label{lem:prediction-residual}
For a $k$-action IR-PRM+ learner,
\begin{equation}
\max_{p\in\Delta_k}\ip{m_t}{p-x_t}
\le
2\sqrt{2k}\,\rho_t\norm{x_t-\tilde x_t}_2.
\label{eq:prediction-residual}
\end{equation}
\end{lemma}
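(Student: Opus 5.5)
The plan is to rewrite the predicted utility through the prediction step itself, so that the residual becomes an inner product with the difference of the two Euclidean-normalized regret vectors. That difference is then tied to the prediction displacement $x_t-\tilde x_t$ by a Lipschitz bound for the map $x\mapsto x/\norm{x}_2$ on the simplex.

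\textbf{Step 1: remove the shift.} Write $a:=\pos{r_t}$ and $b:=\tilde r_t$. Both are nonnegative, and by construction of $\gamma_t$,
\[
\norm{a}_2=\norm{b}_2=\rho_t.
\]
Since $r_t=\tilde r_t+m_t-\gamma_t\1$, we have $m_t=r_t-b+\gamma_t\1$. For any $p\in\Delta_k$ we have $\ip{\1}{p-x_t}=0$, so
\[
\ip{m_t}{p-x_t}=\ip{r_t-b}{p-x_t}.
\]

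\textbf{Step 2: pass to the positive part.} Two facts let me replace $r_t$ by $a$ with an inequality. First, $r_t\le a$ coordinatewise and $p\ge0$, so $\ip{r_t}{p}\le\ip{a}{p}$. Second, $x_t$ is supported where $r_t>0$, so $\ip{r_t}{x_t}=\ip{a}{x_t}$. Together these give
\[
\ip{m_t}{p-x_t}\le\ip{a-b}{p-x_t}\le\norm{a-b}_2\,\norm{p-x_t}_2\le\sqrt2\,\norm{a-b}_2 ,
\]
where the last step uses that the diameter of the simplex in $\ell_2$ is $\sqrt2$.

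\textbf{Step 3: compare the normalized vectors.} Since $x_t=a/\norm{a}_1$ and $\tilde x_t=b/\norm{b}_1$, and both $a$ and $b$ have Euclidean norm $\rho_t$,
\[
\frac{a}{\rho_t}=\frac{x_t}{\norm{x_t}_2},
\qquad
\frac{b}{\rho_t}=\frac{\tilde x_t}{\norm{\tilde x_t}_2}.
\]
It therefore suffices to show, for $x,y\in\Delta_k$,
\[
\norm{\frac{x}{\norm x_2}-\frac{y}{\norm y_2}}_2\le 2\sqrt k\,\norm{x-y}_2 .
\]
I would split this as
\[
\norm{\frac{x-y}{\norm x_2}}_2+\norm y_2\left|\frac1{\norm x_2}-\frac1{\norm y_2}\right|.
\]
The first term is at most $\norm{x-y}_2/\norm{x}_2$. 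The second equals $\bigl|\norm y_2-\norm x_2\bigr|/\norm x_2$, which is also at most $\norm{x-y}_2/\norm x_2$ by the reverse triangle inequality. So the sum is at most $2\norm{x-y}_2/\norm x_2$. Finally, $\norm{x}_2\ge1/\sqrt k$ on the simplex by Cauchy--Schwarz applied to $\ip{x}{\1}=1$.

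\textbf{Conclusion.} Step 3 gives $\norm{a-b}_2\le2\sqrt k\,\rho_t\norm{x_t-\tilde x_t}_2$. Combining with Step 2 yields \eqref{eq:prediction-residual} with exactly the constant $2\sqrt{2k}$. I expect the only delicate point to be Step 2: the sign/support argument is what allows replacing $r_t$ by its positive part $a$, and it needs to be stated carefully. The rest is routine.
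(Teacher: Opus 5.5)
Your proposal is correct and follows essentially the same route as the paper. Both proofs remove the shift using $\ip{\1}{p-x_t}=0$ and replace $r_t$ by $\pos{r_t}$ through the same sign/support argument; the paper packages this as $q_t=\pos{r_t}-r_t\ge0$ with $\ip{q_t}{x_t}=0$. Both then apply Cauchy--Schwarz with the $\sqrt2$ simplex diameter, and finish with the same add-and-subtract comparison of the Euclidean-normalized vectors using $\norm{x_t}_2\ge1/\sqrt k$.
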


\begin{proof}
By \eqref{eq:rho-lower-one}, $\rho_t>0$. Fix an arbitrary comparator
$p\in\Delta_k$. We will first bound the residual for this fixed $p$ and
then maximize over $p$.

From
$r_t=\tilde r_t+m_t-\gamma_t\1$, define
\[
q_t:=\pos{r_t}-r_t\ge0.
\]
Rearranging gives
\[
m_t=\gamma_t\1+(\pos{r_t}-\tilde r_t)-q_t.
\]
The vector $q_t$ is supported only on coordinates where $r_t<0$, while
$x_t$ is supported on coordinates where $\pos{r_t}>0$. Hence their
supports are disjoint and $\ip{q_t}{x_t}=0$. Also
$\ip{\1}{p-x_t}=0$ because both $p$ and $x_t$ lie in the simplex. Thus
\begin{align*}
\ip{m_t}{p-x_t}
&=
\ip{\pos{r_t}-\tilde r_t}{p-x_t}
-
\ip{q_t}{p-x_t}\\
&=
\ip{\pos{r_t}-\tilde r_t}{p-x_t}
-
\ip{q_t}{p}\\
&\le
\ip{\pos{r_t}-\tilde r_t}{p-x_t}.
\end{align*}
The last term is controlled by Cauchy--Schwarz. Since any two simplex
points are at Euclidean distance at most $\sqrt2$,
\begin{equation}
\ip{m_t}{p-x_t}
\le
\sqrt2\,\norm{\pos{r_t}-\tilde r_t}_2.
\label{eq:prediction-residual-first-step}
\end{equation}

It remains to relate the difference of the two regret vectors to the
prediction displacement $x_t-\tilde x_t$. The prediction step preserves
Euclidean norm, so
$\norm{\pos{r_t}}_2=\norm{\tilde r_t}_2=\rho_t$. Because $x_t$ and
$\tilde x_t$ are their $\ell_1$ normalizations, equivalently
\[
\pos{r_t}=\rho_t\frac{x_t}{\norm{x_t}_2},
\qquad
\tilde r_t=\rho_t\frac{\tilde x_t}{\norm{\tilde x_t}_2}.
\]
Let $a=\norm{x_t}_2$ and $b=\norm{\tilde x_t}_2$. Adding and subtracting
$\rho_t\tilde x_t/a$ gives
\begin{align*}
\norm{\pos{r_t}-\tilde r_t}_2
&=
\rho_t\norm{\frac{x_t}{a}-\frac{\tilde x_t}{b}}_2\\
&\le
\frac{\rho_t}{a}\norm{x_t-\tilde x_t}_2
+
\rho_t\left|\frac1a-\frac1b\right|\norm{\tilde x_t}_2\\
&=
\frac{\rho_t}{a}\norm{x_t-\tilde x_t}_2
+
\frac{\rho_t|b-a|}{a}\\
&\le
\frac{2\rho_t}{a}\norm{x_t-\tilde x_t}_2.
\end{align*}
The last step uses the reverse triangle inequality
$|b-a|\le\norm{x_t-\tilde x_t}_2$. Finally,
$x_t\in\Delta_k$ implies $a=\norm{x_t}_2\ge1/\sqrt{k}$, so
\begin{equation}
\norm{\pos{r_t}-\tilde r_t}_2
\le
2\sqrt{k}\,\rho_t\norm{x_t-\tilde x_t}_2.
\label{eq:prediction-residual-vector-step}
\end{equation}
Combining \eqref{eq:prediction-residual-first-step} and
\eqref{eq:prediction-residual-vector-step} gives
\[
\ip{m_t}{p-x_t}
\le
2\sqrt{2k}\,\rho_t\norm{x_t-\tilde x_t}_2.
\]
The bound is independent of $p$, so maximizing over $p\in\Delta_k$
proves \eqref{eq:prediction-residual}.
\end{proof}

\subsection{Proof of Theorem~\ref{thm:current-matrix}}
\begin{proof}
We first show that the prediction and post-update displacements, and hence the
consecutive strategy movements, are square summable. We then use the residual lemma
for the predicted utility to convert these bounds into square summability of the current best-response residuals and,
hence, of the current duality gap.

The proof of Theorem~\ref{thm:main} gives two facts that we will repeatedly
use: the regret-vector norms remain bounded over time, and
$\sum_t H_t<\infty$. Expanding the definition of $H_t$,
\[
\sum_{t=1}^{\infty}
\frac{\rho_t^X}{2k^X}\norm{x_t-\tilde x_t}_2^2<\infty,
\qquad
\sum_{t=1}^{\infty}
\frac{\rho_t^Y}{2k^Y}\norm{y_t-\tilde y_t}_2^2<\infty.
\]
Since $\rho_t^X,\rho_t^Y\ge1$, the factors $\rho_t^X$ and $\rho_t^Y$ can be removed directly, giving
\begin{equation}
\sum_{t=1}^\infty\norm{x_t-\tilde x_t}_2^2<\infty,
\qquad
\sum_{t=1}^\infty\norm{y_t-\tilde y_t}_2^2<\infty.
\label{eq:matrix-prediction-movement-sum}
\end{equation}

The post-update displacements are square summable for the same reason. For example, Lemma~\ref{lem:update-change} and $\rho_{t+1}^X\ge1$ give
\[
\norm{\tilde x_{t+1}-x_t}_2^2
\le
2(1+\sqrt{k^X})^2
(\rho_{t+1}^X-\rho_t^X).
\]
The norm increments telescope, and the norm sequence is bounded, so summing
the right-hand side over time is finite. The same argument applies to the column player;
thus
\[
\sum_{t=1}^\infty\norm{\tilde x_{t+1}-x_t}_2^2<\infty,
\qquad
\sum_{t=1}^\infty\norm{\tilde y_{t+1}-y_t}_2^2<\infty.
\]
To recover the movement between consecutive played strategies, insert the
intermediate strategies and use
$\norm{a+b}_2^2\le2\norm a_2^2+2\norm b_2^2$. Together with the two
square-summability statements above, this gives
\begin{equation}
\sum_{t=2}^\infty\norm{x_t-x_{t-1}}_2^2<\infty,
\qquad
\sum_{t=2}^\infty\norm{y_t-y_{t-1}}_2^2<\infty.
\label{eq:matrix-full-movement-sum}
\end{equation}

We now translate these square-summability bounds into current best-response residuals. Define the row and column
best-response residuals
\[
R_t^X:=\max_{x'\in\Delta_{k^X}}\ip{Ay_t}{x'-x_t},
\qquad
R_t^Y:=\max_{y'\in\Delta_{k^Y}}\ip{-A^\top x_t}{y'-y_t}.
\]
For $t\ge2$, consider first the row player, for which the previous-utility predictor is
$m_t^X=Ay_{t-1}$. Add and subtract this prediction inside the inner product.
For any $x'\in\Delta_{k^X}$,
\begin{align*}
\ip{Ay_t}{x'-x_t}
&=
\ip{m_t^X}{x'-x_t}
+
\ip{A(y_t-y_{t-1})}{x'-x_t}\\
&\le
2\sqrt{2k^X}\,\rho_t^X\norm{x_t-\tilde x_t}_2
+
\sqrt2\norm A_2\norm{y_t-y_{t-1}}_2.
\end{align*}
The first term is controlled by Lemma~\ref{lem:prediction-residual}. The second uses
Cauchy--Schwarz, the operator-norm bound for $A$, and the simplex diameter
$\sqrt2$. Now define
\[
C_X:=2\sqrt{2k^X}\sup_t\rho_t^X.
\]
Maximizing over $x'$ gives
\begin{align}
R_t^X
&\le C_X\norm{x_t-\tilde x_t}_2
+\sqrt2\norm A_2\norm{y_t-y_{t-1}}_2.
\label{eq:matrix-current-X}
\end{align}
For the column player, the identical calculation with
$m_t^Y=-A^\top x_{t-1}$ gives
\begin{align}
R_t^Y
&\le C_Y\norm{y_t-\tilde y_t}_2
+\sqrt2\norm A_2\norm{x_t-x_{t-1}}_2,
\label{eq:matrix-current-Y}
\end{align}
where $C_Y:=2\sqrt{2k^Y}\sup_t\rho_t^Y$.

Each term on the right of \eqref{eq:matrix-current-X}--
\eqref{eq:matrix-current-Y} is square summable by
\eqref{eq:matrix-prediction-movement-sum} and
\eqref{eq:matrix-full-movement-sum}. The initial residual terms are finite and contribute only a bounded constant. Using
$(a+b)^2\le2a^2+2b^2$ therefore gives
\[
\sum_t(R_t^X)^2<\infty,
\qquad
\sum_t(R_t^Y)^2<\infty.
\]
The matrix-game duality gap is exactly the sum of these two residuals,
$\Gap(x_t,y_t)=R_t^X+R_t^Y$. A final application of
$(a+b)^2\le2a^2+2b^2$ yields \eqref{eq:matrix-gap-square-sum}.

Square summability implies $\Gap(x_t,y_t)\to0$. To prove that the distance to the Nash equilibrium set converges to zero, suppose instead that the distance does not tend
to zero. Then there are $\varepsilon>0$ and a subsequence
$(x_{t_j},y_{t_j})$ satisfying
$\operatorname{dist}((x_{t_j},y_{t_j}),\mathcal Z^\star)\ge\varepsilon$.
Compactness of $\Delta_{k^X}\times\Delta_{k^Y}$ gives a convergent subsubsequence.
Its limit has zero gap by continuity and hence lies in $\mathcal Z^\star$,
contradicting the distance lower bound. Therefore
$\operatorname{dist}((x_t,y_t),\mathcal Z^\star)\to0$.

Finally, let
$K:=\sum_{t=1}^\infty\Gap(x_t,y_t)^2<\infty$. Since every term is
nonnegative,
\[
T\min_{1\le t\le T}\Gap(x_t,y_t)^2
\le
\sum_{t=1}^T\Gap(x_t,y_t)^2
\le K.
\]
Taking square roots gives \eqref{eq:matrix-best-current}.
\end{proof}

\section{Proofs for the Product-of-Simplices Analysis}
\label{app:product-proofs}

\subsection{Proof of Lemma~\ref{lem:product-growth}}
\begin{proof}
Fix a learner $i$ and a round $t\ge2$. The purpose of the calculation is to
express learner $i$'s norm growth in terms of nonnegative quantities $B_{j,t}$, one for each learner $j$. Previous-utility prediction gives
\[
u_{i,t}-m_{i,t}=F_i(x_t)-F_i(x_{t-1}).
\]
Applying the bound \eqref{eq:center-bound} and then Lipschitzness,
\begin{align}
\norm{g_{i,t}}_2^2
&\le
(1+\sqrt{k_i})^2
\norm{F_i(x_t)-F_i(x_{t-1})}_2^2
\notag\\
&\le
(1+\sqrt{k_i})^2L_i^2
\norm{x_t-x_{t-1}}_2^2.
\label{eq:product-g-square}
\end{align}
Since we use the product Euclidean norm,
\[
\norm{x_t-x_{t-1}}_2^2
=
\sum_{j=1}^N\norm{x_{j,t}-x_{j,t-1}}_2^2.
\]
Thus it remains to control each component's movement separately.

Define
\[
B_{j,t}:=
\frac{\rho_{j,t}}{2k_j}\norm{x_{j,t}-\tilde x_{j,t}}_2^2
+(\rho_{j,t}-\rho_{j,t-1}).
\]
Corollary~\ref{cor:movement} gives
\begin{align*}
\norm{x_{j,t}-x_{j,t-1}}_2^2
&\le
2\norm{x_{j,t}-\tilde x_{j,t}}_2^2
+
\frac{4(1+\sqrt{k_j})^2}{\rho_{j,t}}
(\rho_{j,t}-\rho_{j,t-1}).
\end{align*}
We bound both terms using the same quantity $B_{j,t}$. Since
$(1+\sqrt{k_j})^2\ge k_j$,
\[
2\norm{x_{j,t}-\tilde x_{j,t}}_2^2
\le
\frac{4(1+\sqrt{k_j})^2}{\rho_{j,t}}
\frac{\rho_{j,t}}{2k_j}
\norm{x_{j,t}-\tilde x_{j,t}}_2^2.
\]
Therefore
\begin{equation}
\norm{x_{j,t}-x_{j,t-1}}_2^2
\le
\frac{4(1+\sqrt{k_j})^2}{\rho_{j,t}}B_{j,t}.
\label{eq:product-block-movement}
\end{equation}

Substituting \eqref{eq:product-block-movement} into the component decomposition
of \eqref{eq:product-g-square} gives
\[
\norm{g_{i,t}}_2^2
\le
\sum_{j=1}^N
\frac{4(1+\sqrt{k_j})^2(1+\sqrt{k_i})^2L_i^2}{\rho_{j,t}}
B_{j,t}.
\]
Finally, the one-step norm-growth inequality \eqref{eq:growth} for learner
$i$ divides this expression by $2\rho_{i,t}$:
\[
\rho_{i,t+1}-\rho_{i,t}
\le
\sum_{j=1}^N
\frac{2(1+\sqrt{k_j})^2(1+\sqrt{k_i})^2L_i^2}
{\rho_{i,t}\rho_{j,t}}B_{j,t}.
\]
Expanding the definition of $B_{j,t}$ is exactly
\eqref{eq:product-growth}.
\end{proof}

\subsection{Current residual under weighted Minty}
\label{app:weighted-current}
\begin{corollary}[Current residual under weighted Minty]
\label{cor:weighted-current}
Under the assumptions of Theorem~\ref{thm:weighted-minty},
\begin{equation}
\sum_{t=1}^{\infty}
\left(
\sum_{i=1}^{N}
\max_{p_i\in\Delta_{k_i}}
\ip{F_i(x_t)}{p_i-x_{i,t}}
\right)^2
<\infty.
\label{eq:weighted-current-square-sum}
\end{equation}
Consequently,
\[
\sum_{i=1}^{N}
\max_{p_i\in\Delta_{k_i}}
\ip{F_i(x_t)}{p_i-x_{i,t}}
\to0,
\]
and
\begin{equation}
\min_{1\le t\le T}
\sum_{i=1}^{N}
\max_{p_i\in\Delta_{k_i}}
\ip{F_i(x_t)}{p_i-x_{i,t}}
=
\mathcal O_T(T^{-1/2}).
\label{eq:weighted-current-best}
\end{equation}
\end{corollary}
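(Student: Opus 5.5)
The plan is to copy the proof of Theorem~\ref{thm:current-matrix}, replacing the two-player coupling by the product-of-simplices quantities from Theorem~\ref{thm:weighted-minty}. That proof gives two facts we will use. First, every $\rho_{i,t}$ is bounded by a constant $\bar\rho$ independent of $t$. Second, inequality \eqref{eq:unweighted-stability} combined with boundedness of the norms gives
\[
\sum_{t=1}^\infty\sum_i\frac{\rho_{i,t}}{2k_i}\norm{x_{i,t}-\tilde x_{i,t}}_2^2<\infty .
\]
Since $\rho_{i,t}\ge1$, this yields $\sum_t\norm{x_{i,t}-\tilde x_{i,t}}_2^2<\infty$ for every learner $i$.

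Next I will control the post-update displacements. Lemma~\ref{lem:update-change} with $\rho_{i,t+1}\ge1$ bounds $\norm{\tilde x_{i,t+1}-x_{i,t}}_2^2$ by $2(1+\sqrt{k_i})^2(\rho_{i,t+1}-\rho_{i,t})$. These increments telescope to at most $\bar\rho-1$, so the post-update displacements are square summable. Corollary~\ref{cor:movement} then gives $\sum_t\norm{x_{i,t}-x_{i,t-1}}_2^2<\infty$ for each $i$. Summing over the finitely many learners gives $\sum_t\norm{x_t-x_{t-1}}_2^2<\infty$ in the product norm.

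The residual step comes next. For $t\ge2$ and any $p_i\in\Delta_{k_i}$, write $F_i(x_t)=m_{i,t}+(F_i(x_t)-F_i(x_{t-1}))$. Lemma~\ref{lem:prediction-residual} bounds $\ip{m_{i,t}}{p_i-x_{i,t}}$ by $2\sqrt{2k_i}\,\bar\rho\,\norm{x_{i,t}-\tilde x_{i,t}}_2$. Lipschitzness \eqref{eq:product-lipschitz} and the simplex diameter $\sqrt2$ bound the second piece by $\sqrt2L_i\norm{x_t-x_{t-1}}_2$. Maximizing over $p_i$ gives
\[
R_{i,t}\le C_i\norm{x_{i,t}-\tilde x_{i,t}}_2+\sqrt2L_i\norm{x_t-x_{t-1}}_2 .
\]
Summing over $i$, the total residual is at most a finite sum of $2N$ square-summable sequences. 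The Cauchy--Schwarz inequality $(\sum_{\ell=1}^{2N}a_\ell)^2\le 2N\sum_\ell a_\ell^2$ then gives \eqref{eq:weighted-current-square-sum}. The $t=1$ term is finite because each $F_i$ is bounded, so it contributes only a constant.

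The consequences follow directly. Square summability of a nonnegative sequence forces its terms to tend to zero. The best-iterate bound \eqref{eq:weighted-current-best} follows as before from $T\min_{t\le T}(\cdot)^2\le\sum_t(\cdot)^2$, then taking square roots.

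The only real point of care, and the main obstacle, is confirming that \eqref{eq:unweighted-stability} together with the final norm bound in Theorem~\ref{thm:weighted-minty} yields finite summed stability terms uniformly in $T$. It does, since $\sum_i\rho_{i,T+1}\le 2C$ for every $T$. Everything else is the matrix argument applied learner by learner.
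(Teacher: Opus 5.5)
Your proposal is correct and follows essentially the same route as the paper. Both arguments use square summability of the prediction and post-update displacements from \eqref{eq:unweighted-stability} and Lemma~\ref{lem:update-change}, the add-and-subtract residual bound via Lemma~\ref{lem:prediction-residual} and Lipschitzness, and Cauchy--Schwarz over finitely many terms; the only cosmetic difference is that you apply Cauchy--Schwarz directly to the $2N$ bounding sequences, whereas the paper first shows each local residual is square summable and then uses $(\sum_i a_{i,t})^2\le N\sum_i a_{i,t}^2$.
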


\begin{proof}
Theorem~\ref{thm:weighted-minty} gives local norms that remain bounded over time, and
\eqref{eq:unweighted-stability} gives a finite total stability sum:
\begin{equation}
\sum_{t=1}^{\infty}\sum_{i=1}^{N}
\frac{\rho_{i,t}}{2k_i}\norm{x_{i,t}-\tilde x_{i,t}}_2^2<\infty.
\label{eq:weighted-pred-movement}
\end{equation}
Since $\rho_{i,t}\ge1$ for every $i$, each prediction displacement is square summable. For its post-update displacement,
Lemma~\ref{lem:update-change} gives
\[
\norm{\tilde x_{i,t+1}-x_{i,t}}_2^2
\le
2(1+\sqrt{k_i})^2
(\rho_{i,t+1}-\rho_{i,t}),
\]
and the right-hand side is summable because the bounded monotone norm
increments telescope. For each $i$, inserting $\tilde x_{i,t}$
between $x_{i,t-1}$ and $x_{i,t}$ and using
$\|a+b\|_2^2\le 2\|a\|_2^2+2\|b\|_2^2$, then summing over the $N$
components, yields
\begin{equation}
\sum_{t=2}^{\infty}\norm{x_t-x_{t-1}}_2^2<\infty.
\label{eq:product-full-movement-sum}
\end{equation}

For $t\ge2$, we next bound the current residual for learner $i$. Add and
subtract the previous-utility prediction $F_i(x_{t-1})$:
\begin{align*}
&\max_{p_i\in\Delta_{k_i}}
\ip{F_i(x_t)}{p_i-x_{i,t}}\\
&\quad\le
\max_{p_i\in\Delta_{k_i}}
\ip{F_i(x_{t-1})}{p_i-x_{i,t}}\\
&\qquad+
\max_{p_i\in\Delta_{k_i}}
\ip{F_i(x_t)-F_i(x_{t-1})}{p_i-x_{i,t}}.
\end{align*}
Lemma~\ref{lem:prediction-residual} bounds the first term. For the second,
Cauchy--Schwarz, Lipschitzness, and
$\norm{p_i-x_{i,t}}_2\le\sqrt2$ give
\[
\max_{p_i}\ip{F_i(x_t)-F_i(x_{t-1})}{p_i-x_{i,t}}
\le
\sqrt2L_i\norm{x_t-x_{t-1}}_2.
\]
Therefore
\begin{align}
\max_{p_i\in\Delta_{k_i}}\ip{F_i(x_t)}{p_i-x_{i,t}}
&\le 2\sqrt{2k_i}\,\rho_{i,t}
\norm{x_{i,t}-\tilde x_{i,t}}_2
\notag\\
&\quad+\sqrt2L_i\norm{x_t-x_{t-1}}_2.
\label{eq:product-current-local}
\end{align}
Both sequences on the right are square summable: the first by
\eqref{eq:weighted-pred-movement} and the time-independent norm bound, and the second
by \eqref{eq:product-full-movement-sum}. Hence the local residual is square
summable for $t\ge2$ and the single, first iteration residual is finite.

Let $a_{i,t}$ denote the nonnegative local residual in
\eqref{eq:product-current-local}. Since there are $N$ local residuals,
Cauchy--Schwarz in $\mathbb R^N$ gives
\[
\left(\sum_{i=1}^Na_{i,t}\right)^2
\le
N\sum_{i=1}^Na_{i,t}^2.
\]
Summing over $t$ and using the square summability of every $a_{i,t}$ proves
\eqref{eq:weighted-current-square-sum}. In particular the summed residual
converges to zero.

Finally, if
$K:=\sum_t(\sum_i a_{i,t})^2<\infty$, then
\[
T\min_{1\le t\le T}\left(\sum_i a_{i,t}\right)^2
\le
\sum_{t=1}^T\left(\sum_i a_{i,t}\right)^2
\le K.
\]
Taking square roots yields \eqref{eq:weighted-current-best}.
\end{proof}

\section{Proofs for Simultaneous IR-PCFR+}
\label{app:sim-cfr-proofs}

For $I\in\mathcal I_X$, define the local counterfactual regret
\[
\Reg_I^X(T)
:=
\max_{p\in\Delta_{|A(I)|}}
\sum_{t=1}^T
\ip{F_I^X(x_t,y_t)}{p-x_{I,t}},
\]
and define $\Reg_J^Y(T)$ analogously. The sequence-form external regrets are
\[
\Reg_X^{\mathcal Q}(T)
:=
\max_{\widehat q^X\in\mathcal Q_X}
\sum_{t=1}^T
\ip{A_{\rm seq}q_t^Y}{\widehat q^X-q_t^X},
\]
with $\Reg_Y^{\mathcal Q}(T)$ defined analogously.

\subsection{Standard CFR decomposition}
\label{app:cfr-decomposition}
The counterfactual utility vectors are defined so that a global sequence-form deviation decomposes into weighted local behavioral deviations. The following pointwise identity is the standard local-to-global relation underlying the CFR regret decomposition~\cite{zinkevich2007,farina2019,zhang2026}.

\begin{fact}[Standard pointwise CFR decomposition]
\label{fact:cfr-decomposition}
\hfill\break
Let $(x,y)$ induce $(q^X,q^Y)$. Let $\widehat x$ be any comparator behavioral strategy inducing $\widehat q^X$. Then
\begin{equation}
\ip{A_{\rm seq}q^Y}{\widehat q^X-q^X}
=
\sum_{I\in\mathcal I_X}
\widehat q^X[p_I]
\ip{F_I^X(x,y)}{\widehat x_I-x_I}.
\label{eq:cfr-decomposition-X}
\end{equation}
Similarly, for any comparator $\widehat y$ inducing $\widehat q^Y$,
\begin{equation}
\ip{-A_{\rm seq}^{\top}q^X}{\widehat q^Y-q^Y}
=
\sum_{J\in\mathcal I_Y}
\widehat q^Y[p_J]
\ip{F_J^Y(x,y)}{\widehat y_J-y_J}.
\label{eq:cfr-decomposition-Y}
\end{equation}
\end{fact}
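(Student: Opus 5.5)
The identity follows from one algebraic telescoping identity applied twice: once with the comparator $\widehat x$, and once with $\widehat x=x$. Subtracting the two cancels everything that does not involve the deviation. I will write the argument for $X$; the $Y$ statement is identical, with $-A_{\rm seq}^\top q^X$ in place of $A_{\rm seq}q^Y$ and the recursion for $F_J^Y$ in place of \eqref{eq:cfr-counterfactual-X}.

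Fix $(x,y)$ and write $g:=A_{\rm seq}q^Y$, indexed by $X$-sequences. Let $\widehat x$ be any behavioral strategy with realization plan $\widehat q^X$. Two structural facts from perfect recall will be used.
\begin{itemize}
\item[(i)] Every $X$-sequence other than the empty sequence $\varnothing$ has the form $Ia$ for a unique $I\in\mathcal I_X$ and $a\in A(I)$, and $\widehat q^X[Ia]=\widehat q^X[p_I]\,\widehat x_I[a]$.
\item[(ii)] Every information set $K$ has a unique parent sequence $p_K$. Either $p_K=\varnothing$ ($K$ is a root), or $p_K=Ia$ for a unique pair, with $K\in C_{Ia}$.
\end{itemize}
If $g$ has a coordinate at $\varnothing$, it contributes $g[\varnothing]$ to both $\langle g,\widehat q^X\rangle$ and $\langle g,q^X\rangle$, because $\widehat q^X[\varnothing]=q^X[\varnothing]=1$. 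It therefore cancels from the left side of \eqref{eq:cfr-decomposition-X}, and I drop it.

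The key step is the following computation. Multiply the recursion \eqref{eq:cfr-counterfactual-X} by $\widehat q^X[p_I]\widehat x_I[a]=\widehat q^X[Ia]$ and sum over $I$ and $a$:
\[
\sum_{I}\widehat q^X[p_I]\ip{F_I^X(x,y)}{\widehat x_I}
=\sum_{I,a}\widehat q^X[Ia]\,g[Ia]
+\sum_{I,a}\sum_{K\in C_{Ia}}\widehat q^X[p_K]\ip{F_K^X(x,y)}{x_K}.
\]
By (i), the first sum on the right is $\langle g,\widehat q^X\rangle$. By (ii), the double sum runs over every non-root $K$ exactly once. Moving the non-root terms to the left leaves
\[
\sum_{I}\widehat q^X[p_I]\ip{F_I^X(x,y)}{\widehat x_I-x_I}
=\langle g,\widehat q^X\rangle-\sum_{K\ \text{root}}\ip{F_K^X(x,y)}{x_K},
\]
where I used $\widehat q^X[\varnothing]=1$. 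The subtracted root term does not depend on $\widehat x$, because $F^X$ is evaluated at $(x,y)$ throughout.

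Now apply the same identity with $\widehat x=x$. The left side vanishes, so $\langle g,q^X\rangle=\sum_{K\ \text{root}}\ip{F_K^X(x,y)}{x_K}$. Substituting this into the display gives exactly \eqref{eq:cfr-decomposition-X}.

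The only step that needs care is the reindexing in (ii): each non-root information set must appear exactly once in the child sums, and roots must not appear there. This is where perfect recall and the definition of $C_\sigma$ are used. Everything else is finite-sum algebra.
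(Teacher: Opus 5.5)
Your proof is correct and complete. The paper does not prove this Fact: it states it as the standard local-to-global CFR identity and cites~\cite{zinkevich2007,farina2019,zhang2026}. Your proposal therefore supplies a self-contained derivation that the text omits.

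The derivations in those references are organized recursively over the game tree. They establish the corresponding relation subtree by subtree and then specialize to the roots. Your argument flattens that induction into a single global sum. You multiply the recursion \eqref{eq:cfr-counterfactual-X} by $\widehat q^X[Ia]=\widehat q^X[p_I]\widehat x_I[a]$, sum over all pairs $(I,a)$, and reindex the child sums so that each non-root information set appears exactly once. You then identify the leftover root term by taking the comparator equal to $x$, which also yields the value identity $\ip{A_{\rm seq}q^Y}{q^X}=\sum_{K\ \text{root}}\ip{F_K^X(x,y)}{x_K}$ at no extra cost.

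What this buys is brevity, and it shows exactly where perfect recall is used: each information set has a unique parent sequence, and each nonempty sequence decomposes uniquely as $Ia$. Your handling of a possible empty-sequence coordinate of $A_{\rm seq}q^Y$ is also right. Both realization plans assign it value one, so it cancels, and it never enters the recursion. The inductive form gives subtree-level identities as a by-product, which are convenient for compositional arguments. However, only the global identity you prove is invoked in this paper: in \eqref{eq:cfr-weighted-minty}, in the regret and current-gap transfers, and in the alternating half-step comparisons.
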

A behavioral profile has \emph{full reach} if each player's realization plan assigns strictly positive probability to the parent sequence of every information set of that player. Suppose $(q^{X,\star},q^{Y,\star})$ is the realization-plan pair induced by a full-reach Nash equilibrium, with corresponding behavioral strategies $(x^\star,y^\star)$. Applying the pointwise decomposition with these equilibrium comparators gives

\begin{align}
&\sum_{I\in\mathcal I_X}
q^{X,\star}[p_I]
\ip{F_I^X(x,y)}{x_I^\star-x_I}
\notag\\[-0.2em]
&\quad+
\sum_{J\in\mathcal I_Y}
q^{Y,\star}[p_J]
\ip{F_J^Y(x,y)}{y_J^\star-y_J}
\notag\\
&\qquad=
U(q^{X,\star},q^Y)-U(q^X,q^{Y,\star})
\ge0.
\label{eq:cfr-weighted-minty}
\end{align}
The inequality is saddle-point optimality. Under the full-reach condition, every parent-sequence weight $q^{X,\star}[p_I]$ and $q^{Y,\star}[p_J]$ is strictly positive. Hence the CFR counterfactual-utility operator satisfies the weighted Minty condition used in Section~\ref{sec:weighted-minty}; this is the full-reach CFR reduction of Zhang et al.~\cite{zhang2026}.

Summing \eqref{eq:cfr-decomposition-X}--\eqref{eq:cfr-decomposition-Y} over time and upper-bounding each local cumulative deviation by the corresponding local external regret gives the standard cumulative CFR regret decomposition; cf. Proposition~6.1 of Zhang et al.~\cite{zhang2026}. At a single iteration, the same pointwise identities transfer bounds on local residuals to the sequence-form residuals.

\subsection{Proof of Theorem~\ref{thm:irpcfr}}
\begin{proof}
Equation~\eqref{eq:cfr-weighted-minty} supplies the weighted Minty condition, while the boundedness and Lipschitzness of the counterfactual-utility operator stated in Section~\ref{sec:irpcfr} supply the remaining assumptions of Theorem~\ref{thm:weighted-minty}. Hence every local counterfactual regret is $\mathcal O_T(1)$.We now transfer those local bounds to the sequence-form external regrets.

Fix a sequence-form comparator $\widehat q^X$ induced by a behavioral
strategy $\widehat x$. Summing the pointwise CFR decomposition
\eqref{eq:cfr-decomposition-X} over $t$ gives
\[
\sum_{t=1}^T\ip{A_{\rm seq}q_t^Y}{\widehat q^X-q_t^X}
=
\sum_{I\in\mathcal I_X}\widehat q^X[p_I]
\sum_{t=1}^T
\ip{F_I^X(x_t,y_t)}{\widehat x_I-x_{I,t}}.
\]
For each information set $I$, the inner sum is one particular fixed-action
comparison and is therefore at most the local external regret
$\Reg_I^X(T)$. Since $0\le\widehat q^X[p_I]\le1$,
\begin{align*}
\sum_{t=1}^T\ip{A_{\rm seq}q_t^Y}{\widehat q^X-q_t^X}
&\le
\sum_I\widehat q^X[p_I]\Reg_I^X(T)\\
&\le
\sum_I\pos{\Reg_I^X(T)}
=\mathcal O_T(1).
\end{align*}
The bound is uniform in the comparator $\widehat q^X$, so maximizing over
all sequence-form comparators gives
$\Reg_X^{\mathcal Q}(T)=\mathcal O_T(1)$. The same argument for the column player gives
$\Reg_Y^{\mathcal Q}(T)=\mathcal O_T(1)$.

For completeness, we now make the regret-to-gap identity explicit. By
linearity in the comparator,
\begin{align*}
\Reg_X^{\mathcal Q}(T)
&=
T\max_{\widehat q^X}
\ip{A_{\rm seq}\bar q_T^Y}{\widehat q^X}
-
\sum_{t=1}^T U(q_t^X,q_t^Y),\\
\Reg_Y^{\mathcal Q}(T)
&=
\sum_{t=1}^T U(q_t^X,q_t^Y)
-
T\min_{\widehat q^Y}
\ip{A_{\rm seq}\widehat q^Y}{\bar q_T^X}.
\end{align*}
The realized-payoff sums cancel when the two equations are added. The
remaining two optimization terms are exactly the sequence-form duality gap,
so
\begin{equation}
\Reg_X^{\mathcal Q}(T)+\Reg_Y^{\mathcal Q}(T)
=T\Gap_{\mathcal Q}(\bar q_T^X,\bar q_T^Y).
\label{eq:cfr-gap-identity}
\end{equation}
Since the left side is bounded independently of $T$, division by $T$ gives
$\Gap_{\mathcal Q}(\bar q_T^X,\bar q_T^Y)=\mathcal O_T(T^{-1})$.
\end{proof}

\subsection{Proof of Theorem~\ref{thm:cfr-current}}
\begin{proof}
The product-of-simplices corollary already gives square summability of the
sum of the local current residuals. The only remaining step is to show
that the sequence-form duality gap is no larger than that local
sum.

Write
\begin{align*}
R_{I,t}^X&:=\max_{p\in\Delta_{|A(I)|}}
\ip{F_I^X(x_t,y_t)}{p-x_{I,t}},\\
R_{J,t}^Y&:=\max_{p\in\Delta_{|A(J)|}}
\ip{F_J^Y(x_t,y_t)}{p-y_{J,t}},
\end{align*}
and set
\[
R_t:=\sum_{I\in\mathcal I_X}R_{I,t}^X+
\sum_{J\in\mathcal I_Y}R_{J,t}^Y.
\]
Corollary~\ref{cor:weighted-current} gives
\begin{equation}
\sum_{t=1}^{\infty}R_t^2<\infty.
\label{eq:cfr-local-current-square-sum}
\end{equation}

Fix a round $t$ and a sequence-form row comparator $\widehat q^X$ induced
by $\widehat x$. The pointwise decomposition gives
\[
\ip{A_{\rm seq}q_t^Y}{\widehat q^X-q_t^X}
=
\sum_{I\in\mathcal I_X}
\widehat q^X[p_I]
\ip{F_I^X(x_t,y_t)}{\widehat x_I-x_{I,t}}.
\]
For every $I$, the local inner product is at most $R_{I,t}^X$, and the
realization weight satisfies $0\le\widehat q^X[p_I]\le1$. Therefore
\[
\ip{A_{\rm seq}q_t^Y}{\widehat q^X-q_t^X}
\le
\sum_{I\in\mathcal I_X}R_{I,t}^X.
\]
The right side does not depend on the comparator, so maximizing over
$\widehat q^X$ yields
\[
\max_{\widehat q^X}
\ip{A_{\rm seq}q_t^Y}{\widehat q^X-q_t^X}
\le
\sum_{I\in\mathcal I_X}R_{I,t}^X.
\]
The same argument for the column player gives
\[
\max_{\widehat q^Y}
\ip{-A_{\rm seq}^{\top}q_t^X}{\widehat q^Y-q_t^Y}
\le
\sum_{J\in\mathcal I_Y}R_{J,t}^Y.
\]
The sequence-form duality gap is the sum of these two sequence-form residuals.
Hence
\[
0\le\Gap_{\mathcal Q}(q_t^X,q_t^Y)\le R_t.
\]
Squaring and summing, \eqref{eq:cfr-local-current-square-sum}
gives \eqref{eq:cfr-current-square-sum}.

It follows that the current sequence-form gap converges to zero. For the
distance conclusion, suppose instead that
\[
\operatorname{dist}\bigl((q_t^X,q_t^Y),\mathcal Z_{\mathcal Q}^\star\bigr)
\]
did not tend to zero. Then some $\varepsilon>0$ and subsequence would remain at
least $\varepsilon$ away from the sequence-form Nash equilibrium set. The realization-
plan polytopes are compact, so this subsequence has a convergent
subsubsequence. Continuity of $\Gap_{\mathcal Q}$ makes the limit have zero
gap, hence the limit belongs to $\mathcal Z_{\mathcal Q}^\star$, a
contradiction.

Finally, if
$K:=\sum_t\Gap_{\mathcal Q}(q_t^X,q_t^Y)^2<\infty$, then
\[
T\min_{1\le t\le T}\Gap_{\mathcal Q}(q_t^X,q_t^Y)^2
\le K.
\]
Taking square roots gives \eqref{eq:cfr-best-current}.
\end{proof}

\section{Proofs for Alternating IR-PRM+}
\label{app:alt-matrix}
We use the alternating timing from Section~\ref{sec:alternating}, including $y_0=\tilde y_1$ and the first column update at $(x_1,y_0)$. For the row player, $\tilde x_t$ is the strategy induced by its regret vector before the prediction producing $x_t$. For the column player, the indexing is shifted by one half-step: $y_{t-1}$ is the strategy present when $-A^\top x_t$ is observed, the update produces the next stored strategy $\tilde y_{t+1}$ and norm $\rho_{t+1}^Y$, and the prediction based on $-A^\top x_t$ then produces $y_t$.

\begin{lemma}[One-step inequality with the updated prediction]
\label{lem:alt-reused-utility}
Consider one $k$-action IR-PRM+ learner. After it plays $x_t$, observes $u_t$, and updates its regret vector, suppose that the next prediction is $m_{t+1}=u_t$. If $x_{t+1}$ is the strategy produced by this prediction, then
\begin{equation}
\ip{u_t}{x_{t+1}-x_t}
\ge
-(\rho_{t+1}-\rho_t).
\label{eq:alt-reused-utility}
\end{equation}
\end{lemma}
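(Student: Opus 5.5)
The plan is to rewrite the left-hand side of \eqref{eq:alt-reused-utility} in terms of regret vectors, and then reduce everything to a lower bound on the scalar shift $\gamma_{t+1}$ used in the prediction step producing $x_{t+1}$.

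\textbf{Step 1: shift invariance.} Both $x_t$ and $x_{t+1}$ lie in $\Delta_k$, so $\ip{\1}{x_{t+1}-x_t}=0$. Replacing $u_t$ by $u_t-c\1$ for a constant $c$ therefore leaves the left-hand side unchanged. It also leaves the algorithm unchanged: $g_t$ is invariant, and $\gamma_{t+1}$ shifts by exactly $-c$. Hence we may assume without loss of generality that $\ip{u_t}{x_t}=0$.

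\textbf{Step 2: decompose the left-hand side.} The prediction step with $m_{t+1}=u_t$ gives $r_{t+1}=\tilde r_{t+1}+u_t-\gamma_{t+1}\1$, so $u_t=r_{t+1}-\tilde r_{t+1}+\gamma_{t+1}\1$. Pairing this with $x_{t+1}$ and using $\ip{\1}{x_{t+1}}=1$, and using $\ip{u_t}{x_t}=0$ for the $x_t$ term, yields
\[
\ip{u_t}{x_{t+1}-x_t}=\ip{r_{t+1}-\tilde r_{t+1}}{x_{t+1}}+\gamma_{t+1}.
\]

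\textbf{Step 3: the first term is nonnegative.} Since $x_{t+1}$ is supported where $r_{t+1}>0$, we have $\ip{r_{t+1}}{x_{t+1}}=\ip{\pos{r_{t+1}}}{x_{t+1}}$. Write $s=\pos{r_{t+1}}$ and $\ell=\norm{s}_1$, so that $x_{t+1}=s/\ell$. Then
\[
\ip{s-\tilde r_{t+1}}{x_{t+1}}=\frac{\rho_{t+1}^2-\ip{\tilde r_{t+1}}{s}}{\ell}\ge0 ,
\]
because $\norm{s}_2=\norm{\tilde r_{t+1}}_2=\rho_{t+1}$ and Cauchy--Schwarz gives $\ip{\tilde r_{t+1}}{s}\le\rho_{t+1}^2$. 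It therefore suffices to prove
\[
\gamma_{t+1}\ge-(\rho_{t+1}-\rho_t).
\]

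\textbf{Step 4: lower bound on $\gamma_{t+1}$ (main step).} The map $\gamma\mapsto\norm{\pos{\tilde r_{t+1}+u_t-\gamma\1}}_2$ is nonincreasing, and $\gamma_{t+1}$ is a value at which it equals $\rho_{t+1}$. So it is enough to show that at $\gamma=-(\rho_{t+1}-\rho_t)$ the norm is at least $\rho_{t+1}$. (If the norm is flat at level $\rho_{t+1}$ over an interval, pick the largest such $\gamma$, consistent with uniqueness.)

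To bound this norm from below, test against the nonnegative Euclidean unit vector $w=\pos{r_t}/\rho_t$. For $w\ge0$ we have $\ip{\pos z}{w}\ge\ip{z}{w}$, so
\[
\norm{\pos{\tilde r_{t+1}+u_t+(\rho_{t+1}-\rho_t)\1}}_2\ge\ip{\tilde r_{t+1}}{w}+\ip{u_t}{w}+(\rho_{t+1}-\rho_t)\norm{w}_1 .
\]
The three terms are handled as follows.
\begin{itemize}
\item By \eqref{eq:inner-lower}, $\ip{\tilde r_{t+1}}{w}\ge\rho_t$.
\item Since $w$ is a positive multiple of $x_t$, $\ip{u_t}{w}=0$ by the normalization in Step 1.
\item Since $w$ is a nonnegative unit vector, $\norm{w}_1\ge\norm{w}_2=1$, and $\rho_{t+1}-\rho_t\ge0$ by monotonicity.
\end{itemize}
The sum is therefore at least $\rho_t+(\rho_{t+1}-\rho_t)=\rho_{t+1}$, which gives $\gamma_{t+1}\ge-(\rho_{t+1}-\rho_t)$. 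Substituting into Step 2 proves \eqref{eq:alt-reused-utility}.

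The step I expect to be most delicate is this bound on $\gamma_{t+1}$. Everything hinges on finding the right test vector, namely $\pos{r_t}$, so that \eqref{eq:inner-lower} supplies the $\rho_t$ term and the centering of $u_t$ at $x_t$ kills the utility term. The edge case of a non-strictly-monotone norm map also needs a brief check.
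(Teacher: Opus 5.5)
Your proof is correct and follows the paper's argument. You use the same split into $\ip{u_t}{x_{t+1}}-\gamma_{t+1}\ge0$, via Cauchy--Schwarz with $\norm{\pos{r_{t+1}}}_2=\norm{\tilde r_{t+1}}_2=\rho_{t+1}$, and $\gamma_{t+1}-\ip{u_t}{x_t}\ge-(\rho_{t+1}-\rho_t)$, via the test vector $\pos{r_t}$, $\pos{v}\ge v$, and \eqref{eq:inner-lower}.

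The only difference is cosmetic. You evaluate the norm map at the trial shift $-(\rho_{t+1}-\rho_t)$ and use its monotonicity; your flat-interval case cannot occur, since the map is strictly decreasing wherever it is positive and $\rho_{t+1}\ge1$. The paper instead tests directly at $\gamma_{t+1}$, bounds $\ip{\pos{r_t}}{\pos{r_{t+1}}}\le\rho_t\rho_{t+1}$, and divides by $\norm{\pos{r_t}}_1\ge\rho_t$.
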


\begin{proof}
By \eqref{eq:rho-lower-one}, $\rho_t>0$. Write
$q_t=[r_t]_+$ and $Q_t=\norm{q_t}_1$, so
$x_t=q_t/Q_t$ and $\norm{q_t}_2=\rho_t$. Let $q_{t+1}$ be the
positive-part vector after applying the prediction $m_{t+1}=u_t$ to the
updated regret vector $\tilde r_{t+1}$. Thus, for the prediction shift
$\gamma_{t+1}$,
\[
q_{t+1}
=
[\tilde r_{t+1}+u_t-\gamma_{t+1}\1]_+,
\qquad
\norm{q_{t+1}}_2=\rho_{t+1}.
\]
This prediction preserves Euclidean norm, which gives the second
equality.

We first compare the prediction shift $\gamma_{t+1}$ with the payoff of the
previously played strategy $x_t$. Since $[v]_+\ge v$ coordinatewise and
$q_t\ge0$,
\[
\ip{q_t}{q_{t+1}}
\ge
\ip{q_t}{\tilde r_{t+1}+u_t-\gamma_{t+1}\1}.
\]
Using $\ip{q_t}{u_t}=Q_t\ip{u_t}{x_t}$ and
$\ip{q_t}{\1}=Q_t$, rearrangement gives
\begin{equation}
Q_t\bigl(\gamma_{t+1}-\ip{u_t}{x_t}\bigr)
\ge
\ip{q_t}{\tilde r_{t+1}}-\ip{q_t}{q_{t+1}}.
\label{eq:alt-reused-utility-shift-start}
\end{equation}
The first inner product on the right is at least $\rho_t^2$ by the same
argument as \eqref{eq:inner-lower}, since the update vector satisfies
$\ip{g_t}{q_t}=0$. The second is at most
$\rho_t\rho_{t+1}$ by Cauchy--Schwarz. Therefore
\[
Q_t\bigl(\gamma_{t+1}-\ip{u_t}{x_t}\bigr)
\ge
\rho_t^2-\rho_t\rho_{t+1}
=
-\rho_t(\rho_{t+1}-\rho_t).
\]
Because $Q_t=\norm{q_t}_1\ge\norm{q_t}_2=\rho_t$, division by $Q_t$
yields
\begin{equation}
\gamma_{t+1}-\ip{u_t}{x_t}
\ge
-(\rho_{t+1}-\rho_t).
\label{eq:alt-reused-utility-shift-bound}
\end{equation}

Next we show that the strategy produced by the updated prediction, $x_{t+1}$, has payoff at
least $\gamma_{t+1}$. Define
\[
h:=q_{t+1}-(\tilde r_{t+1}+u_t-\gamma_{t+1}\1)\ge0.
\]
The residual $h$ is nonzero only where the unclipped vector is negative,
whereas $q_{t+1}$ is zero there. Thus $h$ and $q_{t+1}$ have disjoint
support and $\ip{h}{x_{t+1}}=0$. With
$Q_{t+1}:=\norm{q_{t+1}}_1$ and $x_{t+1}=q_{t+1}/Q_{t+1}$, dotting
$q_{t+1}=\tilde r_{t+1}+u_t-\gamma_{t+1}\1+h$ with
$q_{t+1}$ gives
\begin{align*}
\ip{u_t}{x_{t+1}}-\gamma_{t+1}
&=
\frac{\norm{q_{t+1}}_2^2-
\ip{\tilde r_{t+1}}{q_{t+1}}}{Q_{t+1}}\\
&\ge0.
\end{align*}
The last inequality follows from Cauchy--Schwarz and
$\norm{\tilde r_{t+1}}_2=\norm{q_{t+1}}_2=\rho_{t+1}$.
Adding this nonnegative quantity to \eqref{eq:alt-reused-utility-shift-bound} gives
\[
\ip{u_t}{x_{t+1}-x_t}
=
\bigl(\ip{u_t}{x_{t+1}}-\gamma_{t+1}\bigr)
+
\bigl(\gamma_{t+1}-\ip{u_t}{x_t}\bigr)
\ge
-(\rho_{t+1}-\rho_t),
\]
which proves \eqref{eq:alt-reused-utility}.

\end{proof}

We next record the two staggered norm-growth inequalities, which use the same constants as Lemma~\ref{lem:norm-growth}; no additional game parameter is required.

\begin{lemma}[Alternating regret-vector norm growth]
\label{lem:alt-matrix-growth}
For every $t\ge2$, the alternating updates satisfy
\begin{align}
\rho_{t+1}^Y-\rho_t^Y
&\le
\frac{2(1+\sqrt{k^X})^2(1+\sqrt{k^Y})^2\norm A_2^2}
{\rho_t^Y\rho_t^X}
\notag\\[-0.3em]
&\quad\times
\left[
\frac{\rho_t^X}{2k^X}\norm{x_t-\tilde x_t}_2^2
+(\rho_t^X-\rho_{t-1}^X)
\right],
\label{eq:alt-growth-y}\\
\rho_{t+1}^X-\rho_t^X
&\le
\frac{2(1+\sqrt{k^Y})^2(1+\sqrt{k^X})^2\norm A_2^2}
{\rho_t^X\rho_{t+1}^Y}
\notag\\[-0.3em]
&\quad\times
\left[
\frac{\rho_{t+1}^Y}{2k^Y}\norm{y_t-\tilde y_{t+1}}_2^2
+(\rho_{t+1}^Y-\rho_t^Y)
\right].
\label{eq:alt-growth-x}
\end{align}
\end{lemma}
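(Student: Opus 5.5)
The plan is to repeat the proof of Lemma~\ref{lem:norm-growth} once for each half-step of the alternating schedule. In each case we combine the one-step growth bound \eqref{eq:growth}, the centering bound \eqref{eq:center-bound}, and the movement bound of Corollary~\ref{cor:movement}, applied to the opponent. Only the indices change, because the column player's indexing is shifted by one half-step.

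\emph{Column bound \eqref{eq:alt-growth-y}.} The column update that produces $\tilde y_{t+1}$ and $\rho_{t+1}^Y$ uses the observed utility $u=-A^\top x_t$. Its prediction was the previously reused utility $-A^\top x_{t-1}$, which produced $y_{t-1}$ for $t\ge2$. The prediction error is therefore $-A^\top(x_t-x_{t-1})$. From \eqref{eq:growth}, \eqref{eq:center-bound}, and $\norm{A^\top}_2=\norm A_2$ we get
\[
\rho_{t+1}^Y-\rho_t^Y\le\frac{(1+\sqrt{k^Y})^2\norm A_2^2}{2\rho_t^Y}\norm{x_t-x_{t-1}}_2^2 .
\]
The row player's indexing is the standard one: $\tilde x_t$ comes from the update after $x_{t-1}$, and $x_t$ is obtained from $\tilde x_t$ by prediction. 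Corollary~\ref{cor:movement} at time $t-1$ therefore applies verbatim and gives
\[
\norm{x_t-x_{t-1}}_2^2\le 2\norm{x_t-\tilde x_t}_2^2+\frac{4(1+\sqrt{k^X})^2}{\rho_t^X}(\rho_t^X-\rho_{t-1}^X).
\]
Substituting, and using $(1+\sqrt{k^X})^2\ge k^X$ exactly as in Lemma~\ref{lem:norm-growth}, gives \eqref{eq:alt-growth-y}.

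\emph{Row bound \eqref{eq:alt-growth-x}.} The row update producing $\tilde x_{t+1}$ and $\rho^X_{t+1}$ observes $Ay_t$. Its prediction was $Ay_{t-1}$, the row utility reused after the previous round, which produced $x_t$. The error is therefore $A(y_t-y_{t-1})$, and \eqref{eq:growth} with \eqref{eq:center-bound} yields a factor $(1+\sqrt{k^X})^2\norm A_2^2/(2\rho_t^X)$ times $\norm{y_t-y_{t-1}}_2^2$. We now need the column analogue of Corollary~\ref{cor:movement} under the shifted indexing. In this indexing $y_{t-1}$ is updated to the stored strategy $\tilde y_{t+1}$, whose norm is $\rho^Y_{t+1}$, and prediction then produces $y_t$. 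Inserting $\tilde y_{t+1}$ gives
\[
\norm{y_t-y_{t-1}}_2^2\le 2\norm{y_t-\tilde y_{t+1}}_2^2+2\norm{\tilde y_{t+1}-y_{t-1}}_2^2 .
\]
Lemma~\ref{lem:update-change} bounds the last term by $2(1+\sqrt{k^Y})^2(\rho^Y_{t+1}-\rho^Y_t)/\rho^Y_{t+1}$. Here $\rho^Y_t$ is the norm of the positive part that produced $y_{t-1}$, because prediction preserves the norm. Substituting and again using $(1+\sqrt{k^Y})^2\ge k^Y$ gives \eqref{eq:alt-growth-x}, with $\rho^Y_{t+1}$ in place of $\rho^Y_t$.

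\emph{Main obstacle.} The delicate part is the bookkeeping of the shifted column indices. One must check that each quantity in Lemma~\ref{lem:update-change} and Corollary~\ref{cor:movement} is matched correctly. Specifically, the pre-update positive part for the column player has norm $\rho^Y_t$ and normalizes to $y_{t-1}$. Its successor $\tilde y_{t+1}$ has norm $\rho^Y_{t+1}$, and the inequality \eqref{eq:inner-lower} still holds for this pair. We also need to confirm that the predictions really are the previously reused utilities for $t\ge2$; this is where the hypothesis $t\ge2$ is used. Once these identifications are verified, both inequalities follow by the same algebra as Lemma~\ref{lem:norm-growth}, with the same constant $\kappa_A$.
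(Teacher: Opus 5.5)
Your proposal is correct and follows essentially the same route as the paper. For each half-step you combine \eqref{eq:growth} and \eqref{eq:center-bound} with the opponent's movement bound: Corollary~\ref{cor:movement} at time $t-1$ for the row movement, and its shifted-index version with the newly produced norm $\rho_{t+1}^Y$ for the column movement, which you derive directly from Lemma~\ref{lem:update-change}. You then absorb the prediction-displacement term using $(1+\sqrt{k})^2\ge k$, as the paper does. Your explicit check that \eqref{eq:inner-lower} holds for the shifted pair is exactly the identification the paper asserts when it says the corollary applies under half-step indexing; that pair is the positive part of norm $\rho_t^Y$ producing $y_{t-1}$ together with its successor $\tilde y_{t+1}$.
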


\begin{proof}
The derivation is the same two-step conversion as in
Lemma~\ref{lem:norm-growth}, but the column player is half a step ahead of
the row player. We therefore derive the two recurrences separately so that
the shifted indices are visible.

For the column player, the utility observed at its current half-step is
$-A^\top x_t$, while its previous-utility prediction was
$-A^\top x_{t-1}$. Thus its prediction error is
$-A^\top(x_t-x_{t-1})$. Applying the centering bound
\eqref{eq:center-bound}, the operator-norm inequality, and then
\eqref{eq:growth} gives
\begin{align*}
\rho_{t+1}^Y-\rho_t^Y
&\le
\frac{\norm{g_t^Y}_2^2}{2\rho_t^Y}\\
&\le
\frac{(1+\sqrt{k^Y})^2\norm A_2^2}{2\rho_t^Y}
\norm{x_t-x_{t-1}}_2^2.
\end{align*}
The row player's consecutive movement is controlled by
Corollary~\ref{cor:movement}:
\[
\norm{x_t-x_{t-1}}_2^2
\le
2\norm{x_t-\tilde x_t}_2^2
+
\frac{4(1+\sqrt{k^X})^2}{\rho_t^X}
(\rho_t^X-\rho_{t-1}^X).
\]
Substitution gives
\begin{align*}
\rho_{t+1}^Y-\rho_t^Y
&\le
\frac{(1+\sqrt{k^Y})^2\norm A_2^2}{\rho_t^Y}
\norm{x_t-\tilde x_t}_2^2\\
&\quad+
\frac{2(1+\sqrt{k^X})^2(1+\sqrt{k^Y})^2\norm A_2^2}
{\rho_t^Y\rho_t^X}
(\rho_t^X-\rho_{t-1}^X).
\end{align*}
As in the simultaneous proof, $(1+\sqrt{k^X})^2/k^X\ge1$, so
\[
\frac{(1+\sqrt{k^Y})^2\norm A_2^2}{\rho_t^Y}
\le
\frac{2(1+\sqrt{k^X})^2(1+\sqrt{k^Y})^2\norm A_2^2}
{\rho_t^Y\rho_t^X}
\frac{\rho_t^X}{2k^X}.
\]
Putting the two terms under this common coefficient yields
\eqref{eq:alt-growth-y}.

For the row player, the prediction that produced $x_t$ used $Ay_{t-1}$,
whereas the utility observed after the column update is $Ay_t$. Hence the
row prediction error is $A(y_t-y_{t-1})$, and the same argument gives
\[
\rho_{t+1}^X-\rho_t^X
\le
\frac{(1+\sqrt{k^X})^2\norm A_2^2}{2\rho_t^X}
\norm{y_t-y_{t-1}}_2^2.
\]
The movement $y_{t-1}\to y_t$ is one complete update of the column player
under this half-step indexing. Therefore Corollary~\ref{cor:movement} applies
with the newly produced norm $\rho_{t+1}^Y$:
\[
\norm{y_t-y_{t-1}}_2^2
\le
2\norm{y_t-\tilde y_{t+1}}_2^2
+
\frac{4(1+\sqrt{k^Y})^2}{\rho_{t+1}^Y}
(\rho_{t+1}^Y-\rho_t^Y).
\]
Substituting and using $(1+\sqrt{k^Y})^2/k^Y\ge1$ gives
\eqref{eq:alt-growth-x}.

The different column denominators in the two recurrences are therefore not
an extra assumption: they record which column norm exists at the relevant
point under the staggered indexing. The column recurrence uses $\rho_t^Y$ before
the column update, while the row recurrence uses
$\rho_{t+1}^Y$ after that update. All denominators are bounded below by one under the initialization \eqref{eq:positive-init}.
\end{proof}

\subsection{Proof of Theorem~\ref{thm:alt-matrix}}
\begin{proof}
Use the constant $\kappa_A$ defined in Lemma~\ref{lem:norm-growth}, namely
$\kappa_A=2(1+\sqrt{k^X})^2(1+\sqrt{k^Y})^2\norm A_2^2$.
The proof again splits according to whether the norm product is below or above the threshold, but the two
players now use staggered indices. By \eqref{eq:positive-init} and
monotonicity, both norms are at least one throughout. Suppose first that
\begin{equation}
\rho_t^X\rho_t^Y<4\kappa_A
\label{eq:alt-threshold}
\end{equation}
for every $t$. Since both norms are at least one, this product bound directly gives
\[
\rho_t^X\le4\kappa_A,
\qquad
\rho_t^Y\le4\kappa_A,
\]
so both norms are already bounded over all rounds.

Now suppose the threshold is reached. Let $\tau\ge2$ be the first iteration
from which it holds; if it already holds at iteration $1$, take $\tau=2$ and
absorb the first iteration into the constants. As in the
simultaneous proof, the first crossing can overshoot by only a bounded
amount: positive-part projection is nonexpansive, so
$\rho_{t+1}-\rho_t\le\norm{g_t}_2$, and matrix-game prediction errors are
bounded over all rounds. We therefore absorb the finite crossing and all earlier
rounds into constants and analyze the suffix $t\ge\tau$.

For $t\ge\tau$, both coefficients in
\eqref{eq:alt-growth-y}--\eqref{eq:alt-growth-x} are at most $1/4$.
For the row recurrence, the denominator contains $\rho_{t+1}^Y$, and
monotonicity gives $\rho_{t+1}^Y\ge\rho_t^Y$, so the same threshold is
sufficient. Hence
\begin{align}
\rho_{t+1}^Y-\rho_t^Y
&\le
\frac14\left[
\frac{\rho_t^X}{2k^X}\norm{x_t-\tilde x_t}_2^2
+(\rho_t^X-\rho_{t-1}^X)
\right],
\label{eq:alt-quarter-y}\\
\rho_{t+1}^X-\rho_t^X
&\le
\frac14\left[
\frac{\rho_{t+1}^Y}{2k^Y}\norm{y_t-\tilde y_{t+1}}_2^2
+(\rho_{t+1}^Y-\rho_t^Y)
\right].
\label{eq:alt-quarter-x}
\end{align}

We now sum these two staggered recurrences carefully. Define
\begin{align*}
P_T&:=\sum_{t=\tau}^T
\frac{\rho_t^X}{2k^X}\norm{x_t-\tilde x_t}_2^2,\\
Q_T&:=\sum_{t=\tau}^T
\frac{\rho_{t+1}^Y}{2k^Y}\norm{y_t-\tilde y_{t+1}}_2^2,\\
X_T&:=\rho_{T+1}^X-\rho_\tau^X,
\qquad
Y_T:=\rho_{T+1}^Y-\rho_\tau^Y,\\
b_\tau&:=\rho_\tau^X-\rho_{\tau-1}^X.
\end{align*}
Summing \eqref{eq:alt-quarter-y} gives
\begin{equation}
Y_T\le\frac14P_T+\frac14X_T+\frac14b_\tau,
\label{eq:alt-summed-y}
\end{equation}
where the extra $b_\tau$ comes from the one-index shift in the row norm
increments. Summing \eqref{eq:alt-quarter-x} gives
\begin{equation}
X_T\le\frac14Q_T+\frac14Y_T.
\label{eq:alt-summed-x}
\end{equation}
Substituting \eqref{eq:alt-summed-y} into \eqref{eq:alt-summed-x} gives
\[
X_T
\le
\frac14Q_T+\frac1{16}P_T+\frac1{16}X_T+\frac1{16}b_\tau.
\]
Thus
\[
X_T\le\frac4{15}Q_T+\frac1{15}P_T+\frac1{15}b_\tau.
\]
Putting this bound back into \eqref{eq:alt-summed-y} gives
\[
Y_T\le\frac4{15}P_T+\frac1{15}Q_T+\frac4{15}b_\tau.
\]
Adding the last two displays yields the clean combined bound
\begin{equation}
X_T+Y_T
\le
\frac13(P_T+Q_T+b_\tau).
\label{eq:alt-combined-growth}
\end{equation}
This is where the factor $1/3$ in the alternating proof comes from.

Reindex the shifted column sum in $Q_T$ by $s=t+1$. All interior terms
become the corresponding shifted prediction displacement terms
$\frac{\rho_s^Y}{2k^Y}\norm{y_{s-1}-\tilde y_s}_2^2$; only the terminal
term at $s=T+1$ remains outside the usual sum. Absorbing the finite prefix
and $b_\tau$ into a constant $C_{\rm alt}$ gives
\begin{align}
\rho_{T+1}^X+\rho_{T+1}^Y
&\le C_{\rm alt}
+
\frac13\sum_{t=1}^T\left[
\frac{\rho_t^X}{2k^X}\norm{x_t-\tilde x_t}_2^2
+
\frac{\rho_t^Y}{2k^Y}\norm{y_{t-1}-\tilde y_t}_2^2
\right]
\notag\\
&\quad+
\frac13\frac{\rho_{T+1}^Y}{2k^Y}
\norm{y_T-\tilde y_{T+1}}_2^2.
\label{eq:alt-norm-sum}
\end{align}

We next bound the prediction displacement sum on the right. Define the column player's external regret under the
half-step indexing by
\[
\Reg_Y^{\rm half}(T)
:=
\max_{y'\in\Delta_{k^Y}}\sum_{t=1}^T
\ip{-A^\top x_t}{y'-y_{t-1}}.
\]
To see the relation with the duality gap, expand the two regrets. The row
regret is
\[
\Reg_X(T)
=
T\max_{x'\in\Delta_{k^X}}\ip{A\bar y_T}{x'}
-
\sum_{t=1}^T x_t^\top Ay_t,
\]
while the half-step column external regret is
\[
\Reg_Y^{\rm half}(T)
=
\sum_{t=1}^T x_t^\top Ay_{t-1}
-
T\min_{y'\in\Delta_{k^Y}}\ip{Ay'}{\bar x_T}.
\]
Adding them and inserting and subtracting
$\sum_t x_t^\top Ay_t$ gives
\begin{equation}
\Reg_X(T)+\Reg_Y^{\rm half}(T)
=
T\Gap(\bar x_T,\bar y_T)
+
\sum_{t=1}^T\ip{-A^\top x_t}{y_t-y_{t-1}}.
\label{eq:alt-social}
\end{equation}
Thus the only difference from the usual zero-sum identity is the payoff
change caused by the column update.

Lemma~\ref{lem:alt-reused-utility}, applied to the column player under the half-step indexing, bounds each such payoff change from below:
\[
\ip{-A^\top x_t}{y_t-y_{t-1}}
\ge
-(\rho_{t+1}^Y-\rho_t^Y).
\]
Summing telescopes, and $\rho_1^Y=1$ gives a payoff-shift lower bound
$-(\rho_{T+1}^Y-\rho_1^Y)\ge-\rho_{T+1}^Y$. Since the gap in
\eqref{eq:alt-social} is nonnegative,
\begin{equation}
\Reg_X(T)+\Reg_Y^{\rm half}(T)
\ge-\rho_{T+1}^Y.
\label{eq:alt-regret-lower}
\end{equation}

Fact~\ref{fact:rvu-intermediate}, applied to the row and column updates at their respective indices, gives an upper bound on the same regret sum:
\begin{align*}
\Reg_X(T)+\Reg_Y^{\rm half}(T)
&\le
\rho_{T+1}^X+\rho_{T+1}^Y\\
&\quad-
\sum_{t=1}^T
\frac{\rho_t^X}{2k^X}\norm{x_t-\tilde x_t}_2^2\\
&\quad-
\sum_{t=1}^T
\frac{\rho_t^Y}{2k^Y}\norm{y_{t-1}-\tilde y_t}_2^2.
\end{align*}
Let
\[
H_T^{\rm alt}
:=
\sum_{t=1}^T\left[
\frac{\rho_t^X}{2k^X}\norm{x_t-\tilde x_t}_2^2
+
\frac{\rho_t^Y}{2k^Y}\norm{y_{t-1}-\tilde y_t}_2^2
\right].
\]
Combining the upper bound above with \eqref{eq:alt-regret-lower} gives
\[
-\rho_{T+1}^Y
\le
\rho_{T+1}^X+\rho_{T+1}^Y-H_T^{\rm alt}.
\]
Moving the prediction displacement terms to the left and the lower-bound term to the right gives
\begin{equation}
H_T^{\rm alt}
\le
\rho_{T+1}^X+2\rho_{T+1}^Y.
\label{eq:alt-stability}
\end{equation}

The remaining terminal prediction displacement in \eqref{eq:alt-norm-sum} is also
controlled by terminal norm. If $k^Y=1$, the column simplex is a singleton and
the term is zero. If $k^Y\ge2$, the squared simplex diameter is at most $2$,
so
\[
\frac{\rho_{T+1}^Y}{2k^Y}
\norm{y_T-\tilde y_{T+1}}_2^2
\le
\frac{\rho_{T+1}^Y}{k^Y}
\le
\frac12\rho_{T+1}^Y
\le
\frac12(\rho_{T+1}^X+\rho_{T+1}^Y).
\]
Let $S_{T+1}:=\rho_{T+1}^X+\rho_{T+1}^Y$. From
\eqref{eq:alt-stability}, the prediction displacement contribution in
\eqref{eq:alt-norm-sum} contributes at most
$\frac13(\rho_{T+1}^X+2\rho_{T+1}^Y)\le\frac23S_{T+1}$, while the
terminal prediction displacement contributes at most $\frac16S_{T+1}$. Hence
\[
S_{T+1}
\le
C_{\rm alt}+\left(\frac23+\frac16\right)S_{T+1}
=
C_{\rm alt}+\frac56S_{T+1}.
\]
Moving the last term to the left gives $S_{T+1}\le6C_{\rm alt}$, so both
norm sequences remain bounded over time.

Fact~\ref{fact:rvu-intermediate} now gives
$\Reg_X(T)=\mathcal O_T(1)$ and $\Reg_Y^{\rm half}(T)=\mathcal O_T(1)$. To relate the half-step external regret to the full-profile external regret, note that the difference
between the two comparator objectives is independent of the comparator:
\begin{align*}
\Reg_Y(T)
&=
\Reg_Y^{\rm half}(T)
-
\sum_{t=1}^T\ip{-A^\top x_t}{y_t-y_{t-1}}\\
&\le
\Reg_Y^{\rm half}(T)+\rho_{T+1}^Y
=\mathcal O_T(1),
\end{align*}
where the inequality again uses Lemma~\ref{lem:alt-reused-utility} and telescoping.
Thus both full-profile external regrets are $\mathcal O_T(1)$. The usual zero-sum
regret-to-gap identity then gives
$\Gap(\bar x_T,\bar y_T)=\mathcal O_T(T^{-1})$.
\end{proof}

\subsection{Proof of Theorem~\ref{thm:alt-matrix-current}}
\begin{proof}
The previous proof gives regret-vector norms that remain bounded over time and the
prediction displacement bound \eqref{eq:alt-stability}. Since every player starts with norm one, its norm
is bounded below by a fixed positive constant. Therefore
\eqref{eq:alt-stability} implies square summability of the row prediction
displacements $\norm{x_t-\tilde x_t}_2$ and the shifted column prediction
displacements $\norm{y_{t-1}-\tilde y_t}_2$.

Lemma~\ref{lem:update-change} gives the corresponding post-update displacement
bounds. Specifically,
\[
\norm{\tilde x_{t+1}-x_t}_2^2
\le C_X(\rho_{t+1}^X-\rho_t^X),
\]
and, under the half-step column indexing,
\[
\norm{\tilde y_{t+1}-y_{t-1}}_2^2
\le C_Y(\rho_{t+1}^Y-\rho_t^Y),
\]
for fixed finite constants $C_X,C_Y$. The right-hand sides are summable
because the bounded monotone norm increments telescope. Inserting these
updated strategies between consecutive played strategies therefore gives
\[
\sum_t\norm{x_t-x_{t-1}}_2^2<\infty,
\qquad
\sum_t\norm{y_t-y_{t-1}}_2^2<\infty.
\]

We now bound the two current best-response residuals. The column strategy $y_t$ is produced using $-A^\top x_t$ as its
prediction, which is also the current column utility.
Lemma~\ref{lem:prediction-residual} therefore gives directly
\begin{equation}
\max_{y'\in\Delta_{k^Y}}\ip{-A^\top x_t}{y'-y_t}
\le
2\sqrt{2k^Y}\,\rho_{t+1}^Y
\norm{y_t-\tilde y_{t+1}}_2.
\label{eq:alt-current-y}
\end{equation}
For $t\ge2$, the row strategy $x_t$ was produced from the prediction
$Ay_{t-1}$, whereas its current utility is $Ay_t$. Adding and subtracting
$Ay_{t-1}$ gives, for every $x'\in\Delta_{k^X}$,
\begin{align*}
\ip{Ay_t}{x'-x_t}
&=
\ip{Ay_{t-1}}{x'-x_t}
+
\ip{A(y_t-y_{t-1})}{x'-x_t}\\
&\le
2\sqrt{2k^X}\,\rho_t^X\norm{x_t-\tilde x_t}_2
+
\sqrt2\norm A_2\norm{y_t-y_{t-1}}_2.
\end{align*}
Maximizing over $x'$ yields
\begin{align}
\max_{x'\in\Delta_{k^X}}\ip{Ay_t}{x'-x_t}
&\le
2\sqrt{2k^X}\,\rho_t^X
\norm{x_t-\tilde x_t}_2
\notag\\[-0.2em]
&\quad+
\sqrt2\norm A_2\norm{y_t-y_{t-1}}_2.
\label{eq:alt-current-x}
\end{align}
The factor $\sqrt2$ in the second term is the Euclidean diameter bound
$\norm{x'-x_t}_2\le\sqrt2$.

The norm sequences are bounded, and every prediction displacement and consecutive
strategy movement on the right sides of \eqref{eq:alt-current-y}--\eqref{eq:alt-current-x} is square summable.
Thus both current best-response residuals are square summable. The row display is used for
$t\ge2$; its iteration-$1$ residual is finite and contributes only one bounded
term. Since the matrix-game gap is the sum of the two residuals,
\[
\sum_{t=1}^{\infty}\Gap(x_t,y_t)^2<\infty.
\]
Consequently $\Gap(x_t,y_t)\to0$.

Let $K:=\sum_t\Gap(x_t,y_t)^2<\infty$. Then
\[
T\min_{1\le t\le T}\Gap(x_t,y_t)^2\le K,
\]
so the best-current-iterate gap is $\mathcal O_T(T^{-1/2})$. Finally, if the distance
to $\mathcal Z^\star$ did not tend to zero, a subsequence would stay a
fixed positive distance away. Compactness would give a convergent
subsubsequence, and continuity of the gap would make its limit a zero-gap
point, hence a Nash equilibrium, a contradiction. Therefore
$\operatorname{dist}((x_t,y_t),\mathcal Z^\star)\to0$.
\end{proof}

\section{Proofs for Alternating IR-PCFR+}
\label{app:alt-cfr}
\subsection{Proof of Theorem~\ref{thm:alt-cfr}}
\begin{proof}
Let
\[
h_t=(x_t,y_{t-1}),
\qquad
z_t=(x_t,y_t).
\]
The proof has the same structure as the alternating matrix-game proof, but
we must track finitely many information sets. We first make the local
norm-growth recurrences explicit, then sum them with a threshold
argument, and finally use the full-reach equilibrium weights to control the
prediction displacement terms.

For each information set $H$, let $k_H:=|A(H)|$. For $I\in\mathcal I_X$ and $J\in\mathcal I_Y$, let $L_I^X$ and $L_J^Y$ denote finite Euclidean Lipschitz constants for the respective counterfactual-utility maps $F_I^X$ and $F_J^Y$ on the product of behavioral simplices.

For $t\ge2$, the local learner at column-player information set $J$ compares the new half-step profile
$h_t$ with $h_{t-1}$, while the local learner at row-player information set $I$ compares the new full
profile $z_t$ with $z_{t-1}$. Therefore
\eqref{eq:center-bound}, Lipschitzness, and \eqref{eq:growth} give
\begin{align}
\rho^Y_{J,t+1}-\rho^Y_{J,t}
&\le
\frac{(1+\sqrt{k_J})^2(L_J^Y)^2}{2\rho^Y_{J,t}}
\norm{h_t-h_{t-1}}_2^2,
\label{eq:alt-cfr-y-growth}\\
\rho^X_{I,t+1}-\rho^X_{I,t}
&\le
\frac{(1+\sqrt{k_I})^2(L_I^X)^2}{2\rho^X_{I,t}}
\norm{z_t-z_{t-1}}_2^2.
\label{eq:alt-cfr-x-growth}
\end{align}
We now expand the two profile movements over information sets:
\begin{align*}
\norm{h_t-h_{t-1}}_2^2
&=
\sum_{K\in\mathcal I_X}\norm{x_{K,t}-x_{K,t-1}}_2^2
+
\sum_{L\in\mathcal I_Y}\norm{y_{L,t-1}-y_{L,t-2}}_2^2,\\
\norm{z_t-z_{t-1}}_2^2
&=
\sum_{K\in\mathcal I_X}\norm{x_{K,t}-x_{K,t-1}}_2^2
+
\sum_{L\in\mathcal I_Y}\norm{y_{L,t}-y_{L,t-1}}_2^2.
\end{align*}
The row-player local utilities are evaluated under full-profile indexing, while the two column
movements differ by one half-step.

Define the nonnegative quantities
\begin{align*}
B^X_{K,t}
&:=
\frac{\rho^X_{K,t}}{2k_K}\norm{x_{K,t}-\tilde x_{K,t}}_2^2
+(\rho^X_{K,t}-\rho^X_{K,t-1}),\\
B^Y_{L,t}
&:=
\frac{\rho^Y_{L,t}}{2k_L}\norm{y_{L,t-1}-\tilde y_{L,t}}_2^2
+(\rho^Y_{L,t}-\rho^Y_{L,t-1}).
\end{align*}
Corollary~\ref{cor:movement}, with the same algebra as in
\eqref{eq:product-block-movement}, gives
\begin{align*}
\norm{x_{K,t}-x_{K,t-1}}_2^2
&\le
\frac{4(1+\sqrt{k_K})^2}{\rho^X_{K,t}}B^X_{K,t},\\
\norm{y_{L,t-1}-y_{L,t-2}}_2^2
&\le
\frac{4(1+\sqrt{k_L})^2}{\rho^Y_{L,t}}B^Y_{L,t},\\
\norm{y_{L,t}-y_{L,t-1}}_2^2
&\le
\frac{4(1+\sqrt{k_L})^2}{\rho^Y_{L,t+1}}B^Y_{L,t+1}.
\end{align*}

Substituting these information-set bounds into
\eqref{eq:alt-cfr-y-growth}--\eqref{eq:alt-cfr-x-growth} exposes the same
two reciprocal norms as in the simultaneous product proof. To write the
coefficients compactly, for any local information set
$H\in\mathcal I_X\cup\mathcal I_Y$ define
\begin{align*}
c^{Y}_{J,H}
&:=2(1+\sqrt{k_J})^2(L_J^Y)^2(1+\sqrt{k_H})^2,\\
c^{X}_{I,H}
&:=2(1+\sqrt{k_I})^2(L_I^X)^2(1+\sqrt{k_H})^2.
\end{align*}
Then the column recurrence is
\begin{align}
\rho^Y_{J,t+1}-\rho^Y_{J,t}
&\le
\sum_{K\in\mathcal I_X}
\frac{c^Y_{J,K}}{\rho^Y_{J,t}\rho^X_{K,t}}B^X_{K,t}
+
\sum_{L\in\mathcal I_Y}
\frac{c^Y_{J,L}}{\rho^Y_{J,t}\rho^Y_{L,t}}B^Y_{L,t}.
\label{eq:alt-cfr-y-recurrence}
\end{align}
% \eqref{eq:alt-cfr-y-growth}--\eqref{eq:alt-cfr-x-growth} exposes the same two reciprocal norms as in the simultaneous product proof. To write the coefficients compactly, for any local information set
% $H\in\mathcal I_X\cup\mathcal I_Y$ define
% \begin{align*}
% c^{Y}_{J,H}
% &:=2(1+\sqrt{k_J})^2(L_J^Y)^2(1+\sqrt{k_H})^2,\\
% c^{X}_{I,H}
% &:=2(1+\sqrt{k_I})^2(L_I^X)^2(1+\sqrt{k_H})^2.
% \end{align*}

% Then the column recurrence is
% \begin{align}
% \rho^Y_{J,t+1}-\rho^Y_{J,t}
% &\le
% \sum_{K\in\mathcal I_X}
% \frac{c^Y_{J,K}}{\rho^Y_{J,t}\rho^X_{K,t}}B^X_{K,t}
% +
% \sum_{L\in\mathcal I_Y}
% \frac{c^Y_{J,L}}{\rho^Y_{J,t}\rho^Y_{L,t}}B^Y_{L,t},
% \label{eq:alt-cfr-y-recurrence}
% \end{align}
and the row recurrence is
\begin{align}
\rho^X_{I,t+1}-\rho^X_{I,t}
&\le
\sum_{K\in\mathcal I_X}
\frac{c^X_{I,K}}{\rho^X_{I,t}\rho^X_{K,t}}B^X_{K,t}
+
\sum_{L\in\mathcal I_Y}
\frac{c^X_{I,L}}{\rho^X_{I,t}\rho^Y_{L,t+1}}B^Y_{L,t+1}.
\label{eq:alt-cfr-x-recurrence}
\end{align}
These displays are the local analogue of
\eqref{eq:alt-growth-y}--\eqref{eq:alt-growth-x}. The only difference
between the two indexings is the index $t+1$ on the column quantities entering the
row recurrence.

We next sum these recurrences. Let $M$ be the total number of information sets
and fix $\varepsilon\in(0,1/3)$. Every local norm is at least one. Hence, exactly as
in Theorem~\ref{thm:weighted-minty}, we can choose a finite threshold for each local learner
such that, whenever its regret-vector norm reaches that threshold, every
coefficient in its recurrence
\eqref{eq:alt-cfr-y-recurrence} or
\eqref{eq:alt-cfr-x-recurrence} is at most $\varepsilon/M$. Bounded
counterfactual utilities give a round-independent one-step bound on local norm growth,
so the first crossing of each threshold has only a finite overshoot. Put all
below-threshold growth and all crossing overshoots into a finite constant
$\widetilde C_\varepsilon$.

Let
\[
S_T:=
\sum_{I\in\mathcal I_X}\rho^X_{I,T+1}
+
\sum_{J\in\mathcal I_Y}\rho^Y_{J,T+1}
\]
and
\[
\mathcal S_T:=
\sum_{t,I}\frac{\rho^X_{I,t}}{2k_I}
\norm{x_{I,t}-\tilde x_{I,t}}_2^2
+
\sum_{t,J}\frac{\rho^Y_{J,t}}{2k_J}
\norm{y_{J,t-1}-\tilde y_{J,t}}_2^2.
\]
At a fixed round, summing the column recurrences above threshold makes the
total coefficient on each $B^X_{K,t}$ and $B^Y_{L,t}$
at most $\varepsilon$. Summing the above-threshold row recurrences likewise
makes the total coefficient on each $B^X_{K,t}$ and shifted
$B^Y_{L,t+1}$ at most $\varepsilon$. Consequently
\begin{align*}
S_T
\le
\widetilde C_\varepsilon
+\varepsilon\sum_t\left[
2\sum_K B^X_{K,t}
+
\sum_L B^Y_{L,t}
+
\sum_L B^Y_{L,t+1}
\right].
\end{align*}
The row quantities appear twice because they enter both the half-step and full-profile
movements. The column quantities appear once under each of the two shifted indexings.

Expanding these quantities, the stability parts contribute at most
$2\varepsilon\mathcal S_T$, apart from the single terminal column stability
term created by reindexing $B^Y_{L,t+1}$. The norm-increment parts telescope:
the row increments appear twice and the column increments appear twice.
The terminal column prediction displacement term is at most the corresponding terminal
norm because
\[
\frac{\rho^Y_{L,T+1}}{2k_L}
\norm{y_{L,T}-\tilde y_{L,T+1}}_2^2
\le
\rho^Y_{L,T+1},
\]
using the squared simplex diameter bound $2$. Thus all telescoping and
endpoint terms together are at most $3\varepsilon S_T$, after absorbing
finite initial boundary terms into $\widetilde C_\varepsilon$. We obtain
\[
S_T
\le
\widetilde C_\varepsilon
+2\varepsilon\mathcal S_T
+3\varepsilon S_T.
\]
Since $\varepsilon<1/3$, moving the last term to the left gives
\begin{equation}
S_T\le
\frac{\widetilde C_\varepsilon}{1-3\varepsilon}
+
\frac{2\varepsilon}{1-3\varepsilon}\,\mathcal S_T.
\label{eq:alt-cfr-high-low}
\end{equation}
Thus
\eqref{eq:alt-cfr-high-low} gives the required relation between terminal
norm growth and the accumulated prediction displacement terms. The denominator
$1-3\varepsilon$ and numerator $2\varepsilon$ arise from the staggered
prediction displacement and terminal-norm contributions above.

We now control $\mathcal S_T$ using the equilibrium weights supplied by the
full-reach condition. First, apply the pointwise CFR decomposition at
$h_t=(x_t,y_{t-1})$ with $y_t$ as the column comparator. Since the
comparator induces $q_t^Y$, this gives
\[
U(q_t^X,q_{t-1}^Y)-U(q_t^X,q_t^Y)
=
\sum_J q_t^Y[p_J]
\ip{F_J^Y(h_t)}{y_{J,t}-y_{J,t-1}}.
\]

Lemma~\ref{lem:alt-reused-utility} gives, for each column information set $J$,
\[
\ip{F_J^Y(h_t)}{y_{J,t}-y_{J,t-1}}
\ge
-(\rho^Y_{J,t+1}-\rho^Y_{J,t}).
\]
Since $0\le q_t^Y[p_J]\le1$ and every norm increment is nonnegative,
weighting and summing preserves the lower bound
\begin{equation}
U(q_t^X,q_{t-1}^Y)-U(q_t^X,q_t^Y)
\ge
-\sum_J(\rho^Y_{J,t+1}-\rho^Y_{J,t}).
\label{eq:alt-cfr-payoff-lower}
\end{equation}

Next fix a full-reach equilibrium
$(q^{X,\star},q^{Y,\star})$. Applying the pointwise CFR decomposition with
the equilibrium strategies as comparators gives the row deviation term at
$z_t$ and the column deviation term at $h_t$. Their sum is
% Next take a full-reach equilibrium
% $(q^{X,\star},q^{Y,\star})$. The pointwise CFR decomposition gives the
% row equilibrium-deviation term at $z_t$ and the column equilibrium-deviation
% term at $h_t$. Their sum is
\begin{align*}
&\sum_Iq^{X,\star}[p_I]
\ip{F_I^X(z_t)}{x_I^\star-x_{I,t}}\\
&\quad+
\sum_Jq^{Y,\star}[p_J]
\ip{F_J^Y(h_t)}{y_J^\star-y_{J,t-1}}\\
&=
\bigl[U(q^{X,\star},q_t^Y)-U(q_t^X,q_t^Y)\bigr]\\
&\quad+
\bigl[U(q_t^X,q_{t-1}^Y)-U(q_t^X,q^{Y,\star})\bigr].
\end{align*}
Because $(q^{X,\star},q^{Y,\star})$ is a saddle point,
$U(q^{X,\star},q_t^Y)\ge U(q_t^X,q^{Y,\star})$. Dropping their
nonnegative difference yields
\begin{align}
&\sum_Iq^{X,\star}[p_I]
\ip{F_I^X(z_t)}{x_I^\star-x_{I,t}}
\notag\\
&\quad+
\sum_Jq^{Y,\star}[p_J]
\ip{F_J^Y(h_t)}{y_J^\star-y_{J,t-1}}
\notag\\
&\quad\ge
U(q_t^X,q_{t-1}^Y)-U(q_t^X,q_t^Y).
\label{eq:alt-cfr-staggered-minty}
\end{align}
To compare the lower and upper estimates on exactly the same quantity, define
\begin{align*}
W_T
:=\sum_{t=1}^T\Bigg[&
\sum_Iq^{X,\star}[p_I]
\ip{F_I^X(z_t)}{x_I^\star-x_{I,t}}\\
&+\sum_Jq^{Y,\star}[p_J]
\ip{F_J^Y(h_t)}{y_J^\star-y_{J,t-1}}
\Bigg].
\end{align*}
Combining \eqref{eq:alt-cfr-staggered-minty} with
\eqref{eq:alt-cfr-payoff-lower} and summing over $t$ gives
\begin{align*}
W_T
&\ge -\sum_{t=1}^T\sum_J
      (\rho^Y_{J,t+1}-\rho^Y_{J,t})\\
&= -\sum_J(\rho^Y_{J,T+1}-\rho^Y_{J,1})\\
&\ge -\sum_J\rho^Y_{J,T+1}.
\end{align*}
The second line is the telescoping step; the last inequality uses
$\rho^Y_{J,1}\ge0$.

On the other hand, each fixed-equilibrium local comparator sum appearing in
$W_T$ is at most the corresponding local external regret. Applying
Fact~\ref{fact:rvu-intermediate} to those row external regrets and half-step
column external regrets gives the upper bound
\begin{align*}
W_T
&\le
\sum_Iq^{X,\star}[p_I]\rho^X_{I,T+1}\\
&\quad+
\sum_Jq^{Y,\star}[p_J]\rho^Y_{J,T+1}\\
&\quad-
\sum_{t,I}q^{X,\star}[p_I]
\frac{\rho^X_{I,t}}{2k_I}
\norm{x_{I,t}-\tilde x_{I,t}}_2^2\\
&\quad-
\sum_{t,J}q^{Y,\star}[p_J]
\frac{\rho^Y_{J,t}}{2k_J}
\norm{y_{J,t-1}-\tilde y_{J,t}}_2^2.
\end{align*}
Putting the lower and upper bounds on the same quantity together and moving
the prediction displacement terms to the left gives
\begin{align}
&\sum_{t,I}q^{X,\star}[p_I]
\frac{\rho^X_{I,t}}{2k_I}
\norm{x_{I,t}-\tilde x_{I,t}}_2^2
+
\sum_{t,J}q^{Y,\star}[p_J]
\frac{\rho^Y_{J,t}}{2k_J}
\norm{y_{J,t-1}-\tilde y_{J,t}}_2^2
\notag\\
&\quad\le
\sum_Iq^{X,\star}[p_I]\rho^X_{I,T+1}
+
\sum_Jq^{Y,\star}[p_J]\rho^Y_{J,T+1}
+
\sum_J\rho^Y_{J,T+1}.
\label{eq:alt-cfr-stability-weighted}
\end{align}
Under the full-reach condition, every parent-sequence weight is strictly positive. Let
$w_{\min}>0$ and $w_{\max}$ be their minimum and maximum. The left side of
\eqref{eq:alt-cfr-stability-weighted} is at least
$w_{\min}\mathcal S_T$, while the right side is at most
$(w_{\max}+1)S_T$. Hence
\[
\mathcal S_T\le K_wS_T,
\qquad
K_w:=\frac{w_{\max}+1}{w_{\min}}.
\]

Substitute this into \eqref{eq:alt-cfr-high-low}:
\[
S_T
\le
\widetilde C_\varepsilon
+
\frac{2\varepsilon K_w}{1-3\varepsilon}S_T.
\]
Choose $\varepsilon<1/(3+2K_w)$. This inequality is exactly the condition
$2\varepsilon K_w<1-3\varepsilon$, so the coefficient of $S_T$ on the
right is strictly less than one. Moving it to the left proves that $S_T$ is bounded independently of $T$.

Dropping the negative prediction displacement terms in
Fact~\ref{fact:rvu-intermediate} now gives $\mathcal O_T(1)$ local external regret at all row
information sets and $\mathcal O_T(1)$ half-step local external regret at all column information sets. The
standard CFR decomposition transfers the row bounds directly to full-profile
sequence-form external regret.

For the column player, the half-step and full-profile sequence-form external
regrets differ only by the realized payoff shift
\[
D_T:=\sum_{t=1}^T
\bigl[U(q_t^X,q_{t-1}^Y)-U(q_t^X,q_t^Y)\bigr].
\]
Summing \eqref{eq:alt-cfr-payoff-lower} gives
\[
D_T\ge-\sum_J\rho^Y_{J,T+1}\ge-C_Y,
\]
where $C_Y<\infty$ is independent of $T$, since the preceding bound on
$S_T$ bounds the finite sum of column regret-vector norms. Thus
$-D_T\le C_Y$. For every fixed column comparator, replacing $q_{t-1}^Y$ by $q_t^Y$ in the
realized profile subtracts exactly $D_T$ from the cumulative regret
expression. Because this correction is independent of the comparator, the
same identity holds after maximizing over comparators. Hence the
full-profile column sequence-form external regret is at most the half-step
sequence-form external regret plus $C_Y$, and is therefore
$\mathcal O_T(1)$. Both players thus have $\mathcal O_T(1)$ full-profile sequence-form
external regret. The exact sequence-form zero-sum regret-to-gap identity then yields
$\Gap_{\mathcal Q}(\bar q_T^X,\bar q_T^Y)=\mathcal O_T(T^{-1})$.
\end{proof}

\subsection{Proof of Theorem~\ref{thm:alt-cfr-current}}
\begin{proof}
The preceding proof gives a bound on $S_T$ independent of $T$ and the weighted
stability estimate \eqref{eq:alt-cfr-stability-weighted}. Because the positive parent-sequence weights are bounded below by $w_{\min}>0$, the unweighted
stability sum $\mathcal S_T$ is finite over the infinite horizon. Every local norm is at least one, so every local prediction displacement appearing in $\mathcal S_T$ is square summable.

Lemma~\ref{lem:update-change} converts bounded monotone local norm growth
into square summability of the corresponding post-update displacements, exactly
as in the simultaneous product proof. Since there are finitely many information sets, summing these bounds over
information sets preserves finiteness. In particular,
\begin{equation}
\sum_t\norm{z_t-z_{t-1}}_2^2<\infty,
\qquad
\sum_t\norm{z_t-h_t}_2^2<\infty.
\label{eq:alt-cfr-z-movement}
\end{equation}
The first sum records full-profile movement. The second records only the
column movement, since $z_t-h_t=(0,y_t-y_{t-1})$.

We next bound the local current residuals. For $t\ge2$, consider first a row information
set $I$. Then $x_{I,t}$ was generated using the previous
full-profile prediction $F_I^X(z_{t-1})$, while the current utility is
$F_I^X(z_t)$. Adding and subtracting that prediction, applying
Lemma~\ref{lem:prediction-residual} to the predicted term, and applying
Lipschitzness plus the simplex diameter bound to the utility difference gives
\begin{align}
R^X_{I,t}
&\le
2\sqrt{2k_I}\,\rho^X_{I,t}
\norm{x_{I,t}-\tilde x_{I,t}}_2
+
\sqrt2L_I^X\norm{z_t-z_{t-1}}_2.
\label{eq:alt-cfr-current-x}
\end{align}
For a column information set $J$, $y_{J,t}$ is
generated from the half-step prediction $F_J^Y(h_t)$. Its current
full-profile utility is $F_J^Y(z_t)$. The same add-and-subtract argument
therefore gives
\begin{align}
R^Y_{J,t}
&\le
2\sqrt{2k_J}\,\rho^Y_{J,t+1}
\norm{y_{J,t}-\tilde y_{J,t+1}}_2
+
\sqrt2L_J^Y\norm{z_t-h_t}_2.
\label{eq:alt-cfr-current-y}
\end{align}
The first term in each display is a residual term for the predicted utility; the second is
the Lipschitz error between the profile used in the prediction and the
current full profile.

All norm sequences in the displays remain bounded over time, and every prediction displacement
and profile movement term there is square summable by the preceding paragraph and
\eqref{eq:alt-cfr-z-movement}. Thus every local residual is square summable; the finitely many initial boundary residuals are finite.

Let $R_t$ be the sum of all row and column local residuals. Because there are
finitely many information sets, Cauchy--Schwarz gives
$\sum_tR_t^2<\infty$. The pointwise CFR decomposition, exactly as in the
proof of Theorem~\ref{thm:cfr-current}, gives
\[
0\le\Gap_{\mathcal Q}(q_t^X,q_t^Y)\le R_t.
\]
Therefore
\[
\sum_{t=1}^{\infty}\Gap_{\mathcal Q}(q_t^X,q_t^Y)^2<\infty,
\]
which proves \eqref{eq:alt-cfr-current-square-sum}. In particular the
current gap tends to zero.

If
$K:=\sum_t\Gap_{\mathcal Q}(q_t^X,q_t^Y)^2<\infty$, then
\[
T\min_{1\le t\le T}\Gap_{\mathcal Q}(q_t^X,q_t^Y)^2\le K,
\]
so the best-current-iterate gap is $\mathcal O_T(T^{-1/2})$. Finally, if the distance
to $\mathcal Z_{\mathcal Q}^\star$ did not tend to zero, compactness of the
realization-plan polytopes would give a convergent subsequence staying a
fixed positive distance away. Continuity of the gap would make its limit a
zero-gap point, hence a sequence-form Nash equilibrium, a contradiction.
Thus
$\operatorname{dist}((q_t^X,q_t^Y),\mathcal Z_{\mathcal Q}^\star)\to0$.
\end{proof}

\section{Standard Zero Initialization}
\label{app:zero-init}
The main text uses $\tilde r_1=k^{-1/2}\1$, so that $\rho_1=1$, only to
avoid repeated division-by-zero cases. Here we prove the same conclusions
for the standard initialization $\tilde r_1=0$. We do not repeat the
analysis for $\rho_t>0$; we treat only rounds with $\rho_t=0$ and the first
transition to positive norm.

For zero initialization, fix an arbitrary fixed default strategy
$x^o\in\Delta_k$. Whenever $\rho_t=0$, set
\[
\tilde x_t=x_t=x^o,
\qquad
m_t=0,
\]
and after observing $u_t$ update
\begin{equation}
\tilde r_{t+1}
=
\pos{u_t-\ip{u_t}{x^o}\1}.
\label{eq:zero-init-update}
\end{equation}
If this vector is zero, use the same rule on the next round. Once the stored regret vector has positive norm, use the standard IR-PRM+
update and the previous-utility predictor from the main text.

\subsection{Handling zero regret vectors}
The following observations are the only additional ingredients needed for
zero initialization.

\begin{lemma}[Zero-initialization boundary cases]
\label{lem:zero-boundary}
Consider a zero-initialized $k$-action learner with the rule above.
Then:
\begin{enumerate}
\item If $\rho_t=\rho_{t+1}=0$, the played strategy does not move and the
current utility has zero best-response residual at $x^o$.
\item If $\rho_t=0<\rho_{t+1}$, the post-update displacement bound
\eqref{eq:update-change} still holds on this first positive transition.
\item If $\tau:=\min\{t:\rho_t>0\}<\infty$ and
$\beta:=\rho_\tau$, then $\beta>0$ and
$\rho_t\ge\beta$ for every $t\ge\tau$.
\item If $\rho_t=0$, the residual bound for the predicted utility
\eqref{eq:prediction-residual} holds trivially because $m_t=0$.
\end{enumerate}
\end{lemma}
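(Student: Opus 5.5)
I will prove the four items separately, directly from the zero-initialization rule and the properties of the standard update already recorded in the main text.

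\emph{Item 1.} If $\rho_t=\rho_{t+1}=0$, then both round $t$ and round $t+1$ use the default rule. Hence $x_t=x_{t+1}=x^o$, and there is no movement. Moreover $\tilde r_{t+1}=\pos{u_t-\ip{u_t}{x^o}\1}=0$. This means $u_t[a]\le\ip{u_t}{x^o}$ for every action $a$, so $\max_{p\in\Delta_k}\ip{u_t}{p-x^o}=0$. The residual at $x^o$ is therefore zero.

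\emph{Item 2.} Here I return to the proof of Lemma~\ref{lem:update-change}. When $\rho_t=0$ and $\rho_{t+1}>0$, the right-hand side of \eqref{eq:update-change} equals $2(1+\sqrt{k})^2\rho_{t+1}/\rho_{t+1}=2(1+\sqrt{k})^2$. The left-hand side is $\norm{\tilde x_{t+1}-x^o}_2^2$, and since both points lie in the simplex it is at most $2\le2(1+\sqrt{k})^2$. So the bound holds trivially on the first positive transition, and no inner-product argument is needed.

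\emph{Item 3.} Positivity of $\beta$ is the definition of $\tau$. For $t\ge\tau$ the learner runs the standard IR-PRM+ update from a nonzero regret vector. The monotonicity result of Zhang et al.~\cite{zhang2026} holds for any nonnegative initial regret vector, so $\rho_{t+1}\ge\rho_t$ for all $t\ge\tau$. Induction then gives $\rho_t\ge\beta$.

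\emph{Item 4.} If $m_t=0$, the left-hand side of \eqref{eq:prediction-residual} is $\max_p\ip{0}{p-x_t}=0$. The right-hand side is $2\sqrt{2k}\cdot0\cdot\norm{x_t-\tilde x_t}_2=0$. So the inequality holds with equality.

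\textbf{Main obstacle.} The only delicate point is item 3, specifically checking that the first positive regret vector $\tilde r_\tau$ counts as a legitimate nonnegative initialization for the cited monotonicity result. This holds because \eqref{eq:zero-init-update} produces a nonnegative vector. The constant $\beta$ then plays the role of the lower bound $1$ from \eqref{eq:rho-lower-one} in later arguments.
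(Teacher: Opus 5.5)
Your proposal is correct and follows the paper's proof item by item. It uses the same four ingredients:
\begin{enumerate}
\item a zero update vector forces $\max_a u_t[a]=\ip{u_t}{x^o}$, so the residual at $x^o$ is zero;
\item the simplex-diameter bound $2\le 2(1+\sqrt{k})^2$ handles the first positive transition;
\item one-step monotonicity of the standard update applies once $\rho_t>0$;
\item the bound is trivial because $m_t=0$.
\end{enumerate}
Your remark that monotonicity is used one step at a time, so $\tilde r_\tau$ needs no special status as an initialization, is exactly why the paper can say that argument ``applies without change.''
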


\begin{proof}
If \eqref{eq:zero-init-update} is zero, then
$u_{t,a}\le\ip{u_t}{x^o}$ for every action $a$. Since
$\ip{u_t}{x^o}$ is a convex combination of the coordinates of $u_t$, this
forces
\[
\max_a u_{t,a}=\ip{u_t}{x^o},
\]
so the best-response residual at the fixed default strategy is zero. The rule
also keeps $x_{t+1}=x_t=x^o$ as long as $\tilde r_t=0$.

For the first positive transition, the right side of
\eqref{eq:update-change} becomes
$2(1+\sqrt{k})^2$, because $\rho_t=0$. The squared Euclidean diameter of
$\Delta_k$ is at most $2$, so the same inequality covers that transition.
After $\rho_t$ becomes positive, the monotonicity argument in the proof of
Lemma~\ref{lem:update-change} applies without change, giving
$\rho_t\ge\rho_\tau=\beta>0$ thereafter. Finally, $m_t=0$ whenever $\rho_t=0$,
so the left side of \eqref{eq:prediction-residual} is zero.
\end{proof}

The main proofs used the numerical lower bound $\rho_t\ge1$ only to keep
reciprocal norm factors uniformly bounded before the threshold is reached and to remove the
weights $\rho_t$ from square-summability statements. On any suffix on which
$\rho_t\ge\beta>0$, the same arguments hold with $1$ replaced by $\beta$;
only the constants change. Thus the finitely many initial rounds with $\rho_t=0$ can be separated
from the argument for $\rho_t>0$.

\subsection{Simultaneous matrix games}
Consider first the two-player simultaneous dynamics. If one player's regret vector
remains zero forever, its strategy is fixed. After one observation, the
other player's previous-utility prediction is exact against that fixed
strategy, so $u_t-m_t=0$ and hence $g_t=0$ and its regret-vector norm stops
growing. Hence both regrets are bounded directly.

Otherwise both regret-vector norms become positive. Let $\tau_X,\tau_Y$ be their first
positive times and let
\[
\beta_X:=\rho^X_{\tau_X}>0,
\qquad
\beta_Y:=\rho^Y_{\tau_Y}>0.
\]
After $\tau=\max\{\tau_X,\tau_Y\}$, monotonicity gives
$\rho_t^X\ge\beta_X$ and $\rho_t^Y\ge\beta_Y$. The coupled recurrences
\eqref{eq:normX}--\eqref{eq:normY} therefore apply on this suffix. Before the product threshold is reached, the fixed lower bounds $\beta_X,\beta_Y$ replace the
unit lower bounds used in the proof of Theorem~\ref{thm:main}; after the
threshold is crossed, the proof is identical. The finitely many preceding
rounds contribute only a finite additive constant. This proves the same
$\mathcal O_T(1)$ regret and $\mathcal O_T(T^{-1})$ average-gap guarantees.

For current-profile convergence, Lemma~\ref{lem:zero-boundary} handles the
rounds with zero norm and the first transitions to positive norm. After $\tau$, the factors
$\rho_t^X$ and $\rho_t^Y$ in the stability sums can be removed using
$\beta_X$ and $\beta_Y$ instead of one. The remainder of the proof of
Theorem~\ref{thm:current-matrix} is unchanged. Hence the square-summable
current gap, convergence in distance to the Nash equilibrium set, and the $\mathcal O_T(T^{-1/2})$
best-current-iterate bound all continue to hold.

\subsection{Products of simplices and simultaneous IR-PCFR+}
For the product-space argument, let $\mathcal A$ be the set of
learners whose regret-vector norms ever become positive. A learner outside $\mathcal A$
maintains a fixed strategy, and the first part of Lemma~\ref{lem:zero-boundary} shows that
its instantaneous local regret and current local residual are identically
zero. In particular, its comparator term in the weighted Minty sum is
nonpositive, so removing those terms can only increase that sum; the
weighted Minty lower bound therefore remains valid on $\mathcal A$. For
each $j\in\mathcal A$, let $\beta_j>0$ be its first positive norm. Since there are finitely many learners, after the last transition to positive norm, every $j\in\mathcal A$ satisfies
$\rho_{j,t}\ge\beta_j$.

On that suffix, the proof of Lemma~\ref{lem:product-growth} applies with the
sum restricted to $j\in\mathcal A$. In the proof of
Theorem~\ref{thm:weighted-minty}, choose the threshold $\Theta_i$ so that
\[
\frac{c_{ij}}{\Theta_i\beta_j}
\le
\frac{w_{\min}}{4Nw_{\max}}
\qquad(j\in\mathcal A).
\]
This is exactly the threshold argument from the main proof with the unit
lower bound replaced by the fixed positive values $\beta_j$. Each learner has at most one transition from zero to positive norm, so all initial
zero-norm rounds and threshold crossings contribute only a finite constant. Indices outside
$\mathcal A$ contribute neither strategy movement nor regret. The weighted Minty
summation then closes exactly as before. The current-residual argument is
unchanged after replacing the unit lower bounds by the $\beta_j$ values.
Consequently Theorem~\ref{thm:weighted-minty} and
Corollary~\ref{cor:weighted-current} also hold under zero initialization.
Applying the same CFR decomposition proves the zero-initialized versions of
Theorems~\ref{thm:irpcfr} and~\ref{thm:cfr-current}.

\subsection{Alternating updates}
For alternating updates, one additional boundary calculation is needed
because the second player uses the newly observed utility as its prediction.
The proof of Lemma~\ref{lem:alt-reused-utility} for positive regret-vector norm already covers every later iteration. We verify the first positive transition here.

Suppose $\rho_t=0<\rho_{t+1}$. Write
$c=\ip{u_t}{x^o}$ and
\[
p=[u_t-c\1]_+,
\qquad
z=\frac{p}{\norm p_1}.
\]
On every coordinate where $p_a>0$, $u_{t,a}-c=p_a$, so
\[
\ip{u_t}{z}-c
=
\frac{\norm p_2^2}{\norm p_1}>0.
\]
Now apply the prediction based on $u_t$. Write
\[
r=[p+u_t-\gamma\1]_+,
\qquad
x^+=\frac{r}{\norm r_1},
\qquad
h=r-(p+u_t-\gamma\1)\ge0.
\]
Norm preservation gives $\norm r_2=\norm p_2$, and $h$ has disjoint
support from $r$. Using $u_t-\gamma\1=r-p-h$ gives
\[
\ip{u_t}{x^+}-\gamma
=
\frac{\norm r_2^2-\ip{p}{r}}{\norm r_1},
\]
whereas dotting the same identity with $z=p/\norm p_1$ gives
\[
\ip{u_t}{z}-\gamma
=
\frac{\ip{r}{p}-\norm p_2^2}{\norm p_1}
-
\ip{h}{z}.
\]
Subtracting and using $\norm r_2=\norm p_2$ yields
\begin{align*}
\ip{u_t}{x^+-z}
&=
\bigl(\norm p_2^2-\ip{p}{r}\bigr)
\left(\frac1{\norm r_1}+\frac1{\norm p_1}\right)
+\ip{h}{z}\\
&=
\frac{\norm{r-p}_2^2}{2}
\left(\frac1{\norm r_1}+\frac1{\norm p_1}\right)
+\ip{h}{z}
\ge0.
\end{align*}
Thus this prediction does not reduce payoff relative to $z$, and
$\ip{u_t}{x_{t+1}-x_t}\ge0$. This is stronger than
\eqref{eq:alt-reused-utility} on the first positive transition. If $\rho_{t+1}=0$, the claim is immediate because the played strategy remains $x^o$.

Once a matrix-game player's or a CFR local learner's regret-vector norm first becomes positive, it has a fixed positive lower bound and all subsequent
staggered recurrences are exactly those proved in the main alternating
analysis. If such a regret vector never attains positive norm, its strategy does not move and its
residual for the utility at which the update with $\rho_t=0$ is performed is
zero. In alternating CFR, when the local regret vector at a column-player information set $J$ is zero, the residual at the full
profile differs from its half-step residual only through the utility
change from $h_t$ to $z_t$, and is bounded by
$\sqrt2L_J^Y\norm{z_t-h_t}_2$; the latter movement is already square
summable in the alternating proof. Since there are only finitely many matrix-game players and CFR local learners, for those that eventually become positive the prefixes before their first positive norm, together with the first positive transitions themselves, contribute only finitely many bounded terms; those that remain at zero are handled above. The proofs of
Theorems~\ref{thm:alt-matrix}, \ref{thm:alt-matrix-current},
\ref{thm:alt-cfr}, and~\ref{thm:alt-cfr-current} therefore carry over with
constants depending on the first positive regret-vector norms.

\begin{corollary}[Zero initialization]
\label{cor:zero-init}
All regret, average-gap, current-gap, Nash-set convergence, and
best-current-iterate conclusions proved in this paper remain valid when
IR-PRM+ and IR-PCFR+ are initialized with $\tilde r_1=0$ and use the
rule above for $\tilde r_t=0$.
\end{corollary}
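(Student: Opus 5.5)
The statement is Corollary~\ref{cor:zero-init}. I would prove it by reduction to the unit-norm analysis. Once every learner whose regret vector ever becomes positive has passed its first positive transition, the dynamics are exactly the IR-PRM+ dynamics treated in the main text. The only difference is that the lower bound $\rho_t\ge1$ is replaced by a fixed positive constant. The proof therefore has two ingredients: a finite prefix that contributes only bounded additive terms, and a suffix on which every earlier argument runs with modified constants.

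\textbf{Step 1: boundary cases.} I would invoke Lemma~\ref{lem:zero-boundary} for the single-learner boundary facts. While $\rho_t=0$, the strategy is frozen at $x^o$ and the residual for the observed utility is zero. At the first positive transition, \eqref{eq:update-change} holds trivially by the simplex diameter. After that transition, $\rho_t\ge\beta:=\rho_\tau>0$. The predicted-utility residual bound \eqref{eq:prediction-residual} is trivial while $m_t=0$. For alternating updates, I would use the direct computation given above, which shows $\ip{u_t}{x_{t+1}-x_t}\ge0$ at the first transition. This replaces Lemma~\ref{lem:alt-reused-utility} there, so the telescoping payoff-shift bounds survive.

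\textbf{Step 2: learners that never become positive.} For learners whose norm stays zero forever, I argue directly. In matrix games, a frozen opponent makes the other player's previous-utility prediction exact after one round, so $g_t=0$ and both regrets stay bounded. In the product and CFR setting, frozen learners have zero instantaneous regret, so dropping them from the weighted Minty sum preserves \eqref{eq:weighted-minty} on the active set $\mathcal A$. They also contribute no movement to Lemma~\ref{lem:product-growth}. In alternating CFR, a frozen column learner's full-profile residual is bounded by $\sqrt2L_J^Y\norm{z_t-h_t}_2$, which is square summable by \eqref{eq:alt-cfr-z-movement}.

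\textbf{Step 3: the suffix.} Let $\tau$ be the last first-positive time among active learners; this is finite because there are finitely many learners. On $t\ge\tau$, I rerun the proofs of Theorems~\ref{thm:main}, \ref{thm:current-matrix}, \ref{thm:weighted-minty}, \ref{thm:irpcfr}, \ref{thm:cfr-current}, and \ref{thm:alt-matrix}--\ref{thm:alt-cfr-current}, with two changes:
\begin{itemize}
\item each threshold is rescaled by the $\beta_j$; for example, $\Theta_i$ is chosen with $c_{ij}/(\Theta_i\beta_j)\le w_{\min}/(4Nw_{\max})$;
\item the weights $\rho_t$ are removed from square-summability statements using $\rho_t\ge\beta$ instead of $\rho_t\ge1$.
\end{itemize}
Fact~\ref{fact:rvu-intermediate} holds for any nonnegative initialization, so it applies from round $1$. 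The prefix before $\tau$ consists of finitely many rounds with bounded prediction errors and bounded overshoot $\rho_{t+1}-\rho_t\le\norm{g_t}_2$. It is therefore absorbed into the constants $C_0$, $C$, $C_{\rm alt}$, and $\widetilde C_\varepsilon$. The CFR decomposition then transfers local bounds to sequence form as before.

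\textbf{Main obstacle.} I expect the hardest step to be the alternating case. The half-step regret correction must still telescope across the zero-to-positive transition, where Lemma~\ref{lem:alt-reused-utility}'s division by $Q_t\ge\rho_t$ fails. I would handle it with the explicit transition computation, which gives a nonnegative payoff change, together with the fact that the summed increments are still bounded by $\rho^Y_{T+1}$.
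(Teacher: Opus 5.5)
Your proposal is correct and follows essentially the same route as the paper's proof in Appendix~\ref{app:zero-init}. The shared steps are: Lemma~\ref{lem:zero-boundary} for zero-norm rounds and the first positive transition; the explicit transition computation in place of Lemma~\ref{lem:alt-reused-utility}; separate treatment of learners that never become positive (exact predictions against a frozen opponent, dropping their nonpositive terms from the weighted Minty sum, and the $\sqrt2L_J^Y\norm{z_t-h_t}_2$ bound in alternating CFR); and a rerun of the main proofs on the suffix with the unit lower bound replaced by the first positive norms $\beta_j$.

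One small correction: the zero best-response residual at $x^o$ holds only on rounds with $\rho_t=\rho_{t+1}=0$, not on the transition round itself, but that round is one of the finitely many absorbed into the prefix constants.
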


\section{Normalization Under $\ell_1$ Scaling}
\label{app:normalization}

\begin{lemma}[Normalization bound]
\label{lem:normalization}
If $u,v\in\R_+^k$ satisfy $\norm u_2=\norm v_2=1$, then
\begin{equation}
\norm{\frac{u}{\norm u_1}-\frac{v}{\norm v_1}}_2
\le
(1+\sqrt{k})\norm{u-v}_2.
\label{eq:normalization-simple}
\end{equation}
\end{lemma}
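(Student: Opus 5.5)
Write $a:=\norm u_1$ and $b:=\norm v_1$. The plan mirrors the add-and-subtract step already used in the proof of Lemma~\ref{lem:prediction-residual}, now with $\ell_1$ normalizations. Since $u,v\in\R_+^k$ are Euclidean unit vectors, $1=\norm u_2\le\norm u_1\le\sqrt k\norm u_2=\sqrt k$, and likewise for $v$. In particular $a,b\ge1$, so no division by zero occurs.

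Insert the intermediate point $v/a$ and use the triangle inequality:
\[
\norm{\frac{u}{a}-\frac{v}{b}}_2
\le
\frac{1}{a}\norm{u-v}_2
+
\left|\frac1a-\frac1b\right|\norm v_2
=
\frac{1}{a}\norm{u-v}_2
+
\frac{|b-a|}{ab}.
\]
The first term is at most $\norm{u-v}_2$ because $a\ge1$.

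For the second term, the reverse triangle inequality for the $\ell_1$ norm gives $|b-a|=\bigl|\norm v_1-\norm u_1\bigr|\le\norm{u-v}_1$. Norm equivalence then gives $\norm{u-v}_1\le\sqrt k\,\norm{u-v}_2$. Together with $ab\ge1$, the second term is at most $\sqrt k\norm{u-v}_2$. Adding the two bounds yields \eqref{eq:normalization-simple}.

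The only point needing care is getting the constant exactly $1+\sqrt k$. This requires using the lower bound $\norm{\cdot}_1\ge\norm{\cdot}_2=1$ for the denominators, rather than the weaker $\norm{\cdot}_1\ge1/\sqrt k$ that one would use for simplex points. It also requires converting the $\ell_1$ difference to $\ell_2$ only once. Nonnegativity of $u,v$ is not actually needed beyond making the normalized vectors simplex points, which is how the lemma is applied in \eqref{eq:normalized-strategy}. Squaring \eqref{eq:normalization-simple} then gives that inequality.
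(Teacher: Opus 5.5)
Your proof is correct and follows the paper's argument step for step. You insert $v/a$, use $a,b\ge 1$ from $\norm{\cdot}_1\ge\norm{\cdot}_2=1$, and bound $|a-b|\le\norm{u-v}_1\le\sqrt{k}\,\norm{u-v}_2$, which yields the constant $1+\sqrt{k}$.
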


\begin{proof}
Let $a=\norm u_1$ and $b=\norm v_1$. Adding and subtracting $v/a$ gives
\begin{align*}
\norm{\frac ua-\frac vb}_2
&\le
\norm{\frac ua-\frac va}_2
+
\norm{\frac va-\frac vb}_2\\
&=
\frac{\norm{u-v}_2}{a}
+
\frac{|a-b|}{ab}\norm v_2.
\end{align*}
Because $u$ and $v$ are nonnegative Euclidean unit vectors,
$a,b\ge1$ and $\norm v_2=1\le b$. Therefore
\[
\norm{\frac ua-\frac vb}_2
\le
\norm{u-v}_2+|a-b|.
\]
By the reverse triangle inequality and norm comparison,
\[
|a-b|
=
|\norm u_1-\norm v_1|
\le
\norm{u-v}_1
\le
\sqrt{k}\norm{u-v}_2.
\]
Combining the last two displays proves \eqref{eq:normalization-simple}.
\end{proof}

\begin{remark}[Sharper normalization constant]
The factor $1+\sqrt{k}$ in Lemma~\ref{lem:normalization} can be improved
to $\sqrt{k}$ by a more involved analysis of the $\ell_1$ normalization
map. This sharper constant changes only the dimension-dependent constants
in our bounds and does not affect any qualitative conclusion or asymptotic
rate. We therefore use the elementary bound above throughout.
\end{remark}

\section{A Two-Displacement RVU Bound for IR-PRM+}
\label{app:two-displacement-rvu}

The single-learner analysis of prediction and post-update displacements also yields a strengthened RVU
inequality for IR-PRM+. The existing RVU bound contains a negative term
for the prediction displacement $x_t-\tilde x_t$. Lemma~\ref{lem:update-change}
shows that growth of the regret-vector norm controls the post-update
displacement $\tilde x_{t+1}-x_t$, allowing an explicit stability term for
this second part of the update as well. This result is not needed for the
main convergence analysis, which works directly with regret-vector norm
growth, but it gives a two-displacement RVU form that makes both the prediction
and post-update displacements explicit.

For comparison, Theorem~3.2 of Zhang et al.~\cite{zhang2026} has the form
\begin{equation}
\Reg(T)
\le
\left(
\rho_1^2+
\sum_{t=1}^T\norm{g_t}_2^2
\right)^{1/2}
-
\frac{1}{2k}
\sum_{t=1}^T
\rho_t\norm{x_t-\tilde x_t}_2^2.
\label{eq:zhang-rvu-form}
\end{equation}

\begin{theorem}[Two-displacement IR-PRM+ RVU bound]
\label{thm:two-displacement-rvu}
For a $k$-action IR-PRM+ learner with any nonnegative initial regret vector,
\begin{align}
\Reg(T)
\le{}&
2\left(
\rho_1^2+
\sum_{t=1}^T\norm{g_t}_2^2
\right)^{1/2}
-\rho_1
\notag\\
&-
\frac{1}{2k}
\sum_{t=1}^T
\rho_t\norm{x_t-\tilde x_t}_2^2
\notag\\
&-
\frac{1}{2(1+\sqrt{k})^2}
\sum_{t=1}^T
\rho_{t+1}\norm{\tilde x_{t+1}-x_t}_2^2.
\label{eq:two-displacement-rvu}
\end{align}
\end{theorem}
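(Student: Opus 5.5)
Starting from Fact~\ref{fact:rvu-intermediate}, we have
\[
\Reg(T)\le\rho_{T+1}-\frac1{2k}\sum_{t=1}^T\rho_t\norm{x_t-\tilde x_t}_2^2 .
\]
So it suffices to prove
\[
\rho_{T+1}+\frac{1}{2(1+\sqrt k)^2}\sum_{t=1}^T\rho_{t+1}\norm{\tilde x_{t+1}-x_t}_2^2
\le 2\Bigl(\rho_1^2+\sum_{t=1}^T\norm{g_t}_2^2\Bigr)^{1/2}-\rho_1 .
\]
Lemma~\ref{lem:update-change}, multiplied by $\rho_{t+1}$, gives
\[
\rho_{t+1}\norm{\tilde x_{t+1}-x_t}_2^2\le 2(1+\sqrt k)^2(\rho_{t+1}-\rho_t).
\]
Hence each post-update term satisfies
\[
\frac{1}{2(1+\sqrt k)^2}\,\rho_{t+1}\norm{\tilde x_{t+1}-x_t}_2^2\le\rho_{t+1}-\rho_t .
\]
These increments telescope, so the whole new negative sum is at most $\rho_{T+1}-\rho_1$. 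The left side above is therefore at most $2\rho_{T+1}-\rho_1$.

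It then remains to show $\rho_{T+1}\le(\rho_1^2+\sum_{t\le T}\norm{g_t}_2^2)^{1/2}$. This follows by summing the one-step inequality $\rho_{t+1}^2-\rho_t^2\le\norm{g_t}_2^2$ of Zhang et al., already used in the proof of Lemma~\ref{lem:norm-growth}. That sum gives $\rho_{T+1}^2\le\rho_1^2+\sum_t\norm{g_t}_2^2$, and taking square roots finishes the argument. The factor $2$ and the $-\rho_1$ in \eqref{eq:two-displacement-rvu} are exactly what this bookkeeping produces: one copy of $\rho_{T+1}$ comes from Fact~\ref{fact:rvu-intermediate}, and the other pays for the telescoped post-update stability terms.

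The main point to check is that Lemma~\ref{lem:update-change} and Fact~\ref{fact:rvu-intermediate} hold for an arbitrary nonnegative initial regret vector, as the theorem claims. If $\rho_1>0$, monotonicity gives $\rho_t>0$ throughout and every step goes through verbatim. If $\rho_1=0$, I would use the zero-initialization conventions of Appendix~F. Lemma~\ref{lem:zero-boundary} shows that the displacement bound still holds on the first positive transition. On rounds with $\rho_t=\rho_{t+1}=0$, the post-update term is multiplied by $\rho_{t+1}=0$ and contributes nothing. The telescoping argument is therefore unchanged.
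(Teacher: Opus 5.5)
Your proposal is correct and follows essentially the same route as the paper: you use Lemma~\ref{lem:update-change} (with the zero-norm transition cases) to get the per-round bound by $\rho_{t+1}-\rho_t$, telescope it to $\rho_{T+1}-\rho_1$, and control $\rho_{T+1}$ via $\rho_{T+1}^2\le\rho_1^2+\sum_t\norm{g_t}_2^2$. The only cosmetic difference is the starting point. The paper adds the nonnegative slack directly to Zhang et al.'s Theorem~3.2, which already contains the square root, whereas you start from Fact~\ref{fact:rvu-intermediate} and apply the square-root bound to both copies of $\rho_{T+1}$.
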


\begin{proof}
Suppose first that $\rho_t>0$. Lemma~\ref{lem:update-change} gives
\[
\norm{\tilde x_{t+1}-x_t}_2^2
\le
\frac{2(1+\sqrt{k})^2(\rho_{t+1}-\rho_t)}
{\rho_{t+1}}.
\]
Multiplying by $\rho_{t+1}/[2(1+\sqrt{k})^2]$ yields
\begin{equation}
\frac{\rho_{t+1}}{2(1+\sqrt{k})^2}
\norm{\tilde x_{t+1}-x_t}_2^2
\le
\rho_{t+1}-\rho_t.
\label{eq:update-displacement-one-step}
\end{equation}

The same inequality holds when $\rho_t=0$. If $\rho_{t+1}=0$, the rule
in Appendix~\ref{app:zero-init} leaves both $x_t$ and
$\tilde x_{t+1}$ equal to the fixed strategy $x^o$, so the left-hand
side of \eqref{eq:update-displacement-one-step} is zero. If
$\rho_t=0<\rho_{t+1}$, then
$x_t,\tilde x_{t+1}\in\Delta_k$, and hence
\[
\norm{\tilde x_{t+1}-x_t}_2^2\le2.
\]
Therefore
\[
\frac{\rho_{t+1}}{2(1+\sqrt{k})^2}
\norm{\tilde x_{t+1}-x_t}_2^2
\le
\frac{\rho_{t+1}}{(1+\sqrt{k})^2}
\le
\rho_{t+1}
=
\rho_{t+1}-\rho_t.
\]
Thus \eqref{eq:update-displacement-one-step} holds for every round.
Summing over $t$ gives
\begin{equation}
\sum_{t=1}^T
\frac{\rho_{t+1}}{2(1+\sqrt{k})^2}
\norm{\tilde x_{t+1}-x_t}_2^2
\le
\rho_{T+1}-\rho_1.
\label{eq:update-displacement-summed}
\end{equation}

The squared-norm increment bound for IR-PRM$^+$ established by Zhang et al.~\cite{zhang2026} gives
\[
\rho_{t+1}^2-\rho_t^2
\le
\norm{g_t}_2^2,
\]
including when $\rho_t=0$. Summing yields
\begin{equation}
\rho_{T+1}
\le
\left(
\rho_1^2+
\sum_{t=1}^T\norm{g_t}_2^2
\right)^{1/2}.
\label{eq:rho-terminal-variation}
\end{equation}
Combining \eqref{eq:update-displacement-summed} and
\eqref{eq:rho-terminal-variation} gives
\begin{align}
&\sum_{t=1}^T
\frac{\rho_{t+1}}{2(1+\sqrt{k})^2}
\norm{\tilde x_{t+1}-x_t}_2^2
\notag\\
&\qquad\le
\left(
\rho_1^2+
\sum_{t=1}^T\norm{g_t}_2^2
\right)^{1/2}
-\rho_1.
\label{eq:update-displacement-variation}
\end{align}
Equivalently,
\begin{align}
&\left(
\rho_1^2+
\sum_{t=1}^T\norm{g_t}_2^2
\right)^{1/2}
-\rho_1
\notag\\
&\qquad-
\sum_{t=1}^T
\frac{\rho_{t+1}}{2(1+\sqrt{k})^2}
\norm{\tilde x_{t+1}-x_t}_2^2
\ge0.
\label{eq:update-displacement-nonnegative}
\end{align}
Adding this nonnegative quantity to the right-hand side of
\eqref{eq:zhang-rvu-form} gives \eqref{eq:two-displacement-rvu}.
\end{proof}

\section*{AI-Assisted Technology Disclosure}

OpenAI ChatGPT 5 and 6 was used during the development of this work in extended multi-turn interactions for research brainstorming, including generating and refining candidate hypotheses, exploring possible proof strategies and counterexamples, identifying technical obstacles, and assisting with the presentation and organization of the resulting arguments. It was also used for language editing, LaTeX editing, and checks for notational and expository consistency.

For the theoretical results, AI-generated suggestions were treated as candidate arguments rather than as authoritative results. The authors independently selected the research directions pursued, wrote and checked the mathematical arguments presented in the paper, and take responsibility for all statements, proofs, citations, and conclusions.

The interactions consisted of iterative prompts seeking to analyze regret-matching and predictive regret-matching dynamics, derive candidate inequalities and convergence arguments, search for counterexamples to proposed claims, compare alternative proof approaches, and refine the resulting theory and exposition. Because the research process involved extended conversations across multiple sessions, not every individual prompt and model version was separately retained. The authors therefore report the nature of the interactions here rather than reconstructing a lengthy and in some cases unavailable sequence of prompts.

\bibliographystyle{ACM-Reference-Format}
\bibliography{references}

@inproceedings{syrgkanis2015,
  author    = {Vasilis Syrgkanis and Alekh Agarwal and Haipeng Luo and Robert E. Schapire},
  title     = {Fast Convergence of Regularized Learning in Games},
  booktitle = {Advances in Neural Information Processing Systems},
  volume    = {28},
  pages     = {2989--2997},
  year      = {2015}
}

@inproceedings{tammelin2015,
  author    = {Oskari Tammelin and Neil Burch and Michael Johanson and Michael Bowling},
  title     = {Solving Heads-Up Limit {Texas Hold'em}},
  booktitle = {Proceedings of the 24th International Joint Conference on Artificial Intelligence},
  pages     = {645--652},
  year      = {2015}
}

@article{farina2021predictive,
  author  = {Gabriele Farina and Christian Kroer and Tuomas Sandholm},
  title   = {Faster Game Solving via Predictive Blackwell Approachability: Connecting Regret Matching and Mirror Descent},
  journal = {Proceedings of the AAAI Conference on Artificial Intelligence},
  volume  = {35},
  number  = {6},
  pages   = {5363--5371},
  year    = {2021},
  doi     = {10.1609/aaai.v35i6.16676}
}

@inproceedings{farina2023instability,
  author    = {Gabriele Farina and Julien Grand-Cl{\'e}ment and Christian Kroer and Chung-Wei Lee and Haipeng Luo},
  title     = {Regret Matching$^+$: (In)Stability and Fast Convergence in Games},
  booktitle = {Advances in Neural Information Processing Systems},
  volume    = {36},
  pages     = {61546--61572},
  year      = {2023}
}

@inproceedings{cai2025last,
  author    = {Yang Cai and Gabriele Farina and Julien Grand-Cl{\'e}ment and Christian Kroer and Chung-Wei Lee and Haipeng Luo and Weiqiang Zheng},
  title     = {Last-Iterate Convergence Properties of Regret-Matching Algorithms in Games},
  booktitle = {International Conference on Learning Representations},
  year      = {2025}
}

@inproceedings{cai2025reduction,
  author    = {Yang Cai and Haipeng Luo and Chen-Yu Wei and Weiqiang Zheng},
  title     = {From Average-Iterate to Last-Iterate Convergence in Games: A Reduction and Its Applications},
  booktitle = {Advances in Neural Information Processing Systems},
  volume    = {38},
  pages     = {52398--52428},
  year      = {2025},
  doi       = {10.52202/085713-1563}
}

@inproceedings{meng2026faster,
  author    = {Linjian Meng and Youzhi Zhang and Shangdong Yang and Wenbin Li and Tianyu Ding and Yang Gao},
  title     = {A Faster Parameter-Free Regret Matching Algorithm},
  booktitle = {International Conference on Learning Representations},
  year      = {2026}
}

@inproceedings{lee2021last,
  author    = {Chung-Wei Lee and Christian Kroer and Haipeng Luo},
  title     = {Last-Iterate Convergence in Extensive-Form Games},
  booktitle = {Advances in Neural Information Processing Systems},
  volume    = {34},
  pages     = {14293--14305},
  year      = {2021}
}

@inproceedings{chakrabarti2024treeplex,
  author    = {Darshan Chakrabarti and Julien Grand-Cl{\'e}ment and Christian Kroer},
  title     = {Extensive-Form Game Solving via Blackwell Approachability on Treeplexes},
  booktitle = {Advances in Neural Information Processing Systems},
  volume    = {37},
  pages     = {35257--35287},
  year      = {2024},
  doi       = {10.52202/079017-1111}
}

@inproceedings{meng2025efg,
  author    = {Linjian Meng and Tianpei Yang and Youzhi Zhang and Zhenxing Ge and Shangdong Yang and Tianyu Ding and Wenbin Li and Bo An and Yang Gao},
  title     = {Efficient Last-Iterate Convergence in Solving Extensive-Form Games},
  booktitle = {Advances in Neural Information Processing Systems},
  volume    = {38},
  pages     = {111586--111633},
  year      = {2025},
  doi       = {10.52202/085713-3367}
}

@inproceedings{zhang2026,
  author    = {Brian Hu Zhang and Ioannis Anagnostides and Tuomas Sandholm},
  title     = {Scale-Invariant Regret Matching and Online Learning with Optimal Convergence: Bridging Theory and Practice in Zero-Sum Games},
  booktitle = {Proceedings of the 27th ACM Conference on Economics and Computation},
  year      = {2026}
}

@inproceedings{zinkevich2007,
  author    = {Martin Zinkevich and Michael Johanson and Michael Bowling and Carmelo Piccione},
  title     = {Regret Minimization in Games with Incomplete Information},
  booktitle = {Advances in Neural Information Processing Systems},
  volume    = {20},
  pages     = {1729--1736},
  year      = {2007}
}

@inproceedings{farina2019,
  author    = {Gabriele Farina and Christian Kroer and Tuomas Sandholm},
  title     = {Regret Circuits: Composability of Regret Minimizers},
  booktitle = {Proceedings of the 36th International Conference on Machine Learning},
  series    = {Proceedings of Machine Learning Research},
  volume    = {97},
  pages     = {1863--1872},
  publisher = {PMLR},
  year      = {2019}
}

@inproceedings{koller1994,
  author    = {Daphne Koller and Nimrod Megiddo and Bernhard von Stengel},
  title     = {Fast Algorithms for Finding Randomized Strategies in Game Trees},
  booktitle = {Proceedings of the 26th Annual ACM Symposium on Theory of Computing},
  pages     = {750--759},
  publisher = {ACM},
  year      = {1994},
  doi       = {10.1145/195058.195451}
}

@article{vonstengel1996,
  author  = {Bernhard von Stengel},
  title   = {Efficient Computation of Behavior Strategies},
  journal = {Games and Economic Behavior},
  volume  = {14},
  number  = {2},
  pages   = {220--246},
  year    = {1996},
  doi     = {10.1006/game.1996.0050}
}

@article{nemirovski2004,
  author  = {Arkadi Nemirovski},
  title   = {Prox-Method with Rate of Convergence $O(1/t)$ for Variational
             Inequalities with Lipschitz Continuous Monotone Operators and
             Smooth Convex-Concave Saddle Point Problems},
  journal = {SIAM Journal on Optimization},
  volume  = {15},
  number  = {1},
  pages   = {229--251},
  year    = {2004},
  doi     = {10.1137/S1052623403425629}
}

@inproceedings{meng2025smooth,
  author    = {Linjian Meng and Youzhi Zhang and Zhenxing Ge and Tianyu Ding and Shangdong Yang and Zheng Xu and Wenbin Li and Yang Gao},
  title     = {Last-Iterate Convergence of Smooth Regret Matching+ Variants in Learning Nash Equilibria},
  booktitle = {Advances in Neural Information Processing Systems},
  volume    = {38},
  pages     = {21747--21792},
  year      = {2025},
  doi       = {10.52202/085713-0644}
}

@article{nesterov2005,
  author  = {Nesterov, Yurii},
  title   = {Excessive Gap Technique in Nonsmooth Convex Minimization},
  journal = {SIAM Journal on Optimization},
  volume  = {16},
  number  = {1},
  pages   = {235--249},
  year    = {2005},
  doi     = {10.1137/S1052623403422285}
}

@inproceedings{rakhlin2013,
  author    = {Rakhlin, Alexander and Sridharan, Karthik},
  title     = {Online Learning with Predictable Sequences},
  booktitle = {Proceedings of the 26th Annual Conference on Learning Theory},
  series    = {Proceedings of Machine Learning Research},
  volume    = {30},
  pages     = {993--1019},
  year      = {2013},
  publisher = {PMLR}
}

@inproceedings{chiang2012,
  author    = {Chiang, Chao-Kai and Yang, Tianbao and Lee, Chia-Jung
               and Mahdavi, Mehrdad and Lu, Chi-Jen and Jin, Rong
               and Zhu, Shenghuo},
  title     = {Online Optimization with Gradual Variations},
  booktitle = {Proceedings of the 25th Annual Conference on Learning Theory},
  series    = {Proceedings of Machine Learning Research},
  volume    = {23},
  pages     = {6.1--6.20},
  year      = {2012},
  publisher = {PMLR}
}

\end{document}